\newtheorem{definition}{\textbf{Definition}}
\newtheorem{lemma}{\textbf{Lemma}}
\newtheorem{theorem}{\textbf{Theorem}}
\newsavebox{\tablebox}
\newcommand{\argmin}{\operatornamewithlimits{argmin}}
\begin{document}
	
	\title{DP-ADMM: ADMM-based Distributed Learning with Differential Privacy}

	\author{Zonghao Huang, Rui Hu, Yuanxiong Guo, Eric Chan-Tin, and Yanmin Gong
		
		\thanks{The work of R. Hu and Y. Gong is supported by National Science Foundation under grant CNS-1850523. Z. Huang is with Oklahoma State University, Stillwater, OK 74075. E-mail: zonghao.huang@okstate.edu R. Hu, Y. Guo, and Y. Gong are with The University of Texas at San Antonio, San Antonio, TX 78249. Email:\{rui.hu@my., yuanxiong.guo@, yanmin.gong@\}utsa.edu E. Chan-Tin is with Loyola University Chicago, Chicago, IL 60660. E-mail: chantin@cs.luc.edu}
		\thanks{This paper has supplementary downloadable material available at https://ieeexplore.ieee.org provided by the author. This includes a PDF file containing mathematical proofs. 
		}
		
	}

	
	\maketitle
	
	\begin{abstract}
		
		
		Alternating direction method of multipliers (ADMM) is a widely used tool for machine learning in distributed settings, where a machine learning model is trained over distributed data sources through an interactive process of local computation and message passing. Such an iterative process could cause privacy concerns of data owners. The goal of this paper is to provide differential privacy for ADMM-based distributed machine learning. 
		Prior approaches on differentially private ADMM exhibit low utility under high privacy guarantee and assume the objective functions of the learning problems to be smooth and strongly convex. To address these concerns, we propose a novel differentially private ADMM-based distributed learning algorithm called DP-ADMM, which combines an approximate augmented Lagrangian function with time-varying Gaussian noise addition in the iterative process to achieve higher utility for general objective functions under the same differential privacy guarantee. We also apply the moments accountant method to analyze the end-to-end privacy loss. The theoretical analysis shows that DP-ADMM can be applied to a wider class of distributed learning problems, is provably convergent, and offers an explicit utility-privacy tradeoff. 
		To our knowledge, this is the first paper to provide explicit convergence and utility properties for differentially private ADMM-based distributed learning algorithms. The evaluation results demonstrate that our approach can achieve good convergence and model accuracy under high end-to-end differential privacy guarantee.

	\end{abstract}
	
	\begin{IEEEkeywords}
		Machine learning, ADMM, distributed algorithms, privacy, differential privacy, and moments accountant.
	\end{IEEEkeywords}
	
	\IEEEpeerreviewmaketitle
	
	\section{introduction} \label{sec:intr}

	
	\IEEEPARstart{D}{istributed} machine learning is a widely adopted approach due to the high demand of large-scale and distributed data processing. It allows multiple entities to keep their datasets unexposed, and meanwhile to collaborate in a common learning objective (usually formulated as a regularized empirical risk minimization problem) by iterative local computation and message passing. Therefore, distributed machine learning helps to reduce computational burden and improves both robustness and scalability of data processing. 
	%
	%
	As pointed out in recent studies \cite{ZhKh18,ZhangKh18}, existing approaches to decentralizing an optimization problem mainly consist of subgradient-based algorithms \cite{NeOl08,NeOz09}, alternating direction method of multipliers (ADMM) based algorithms \cite{BoPa11,LingRi14,ShiLing14,ZhangKw14}, and composite of sub-gradient descent and ADMM \cite{BiHa14}. It has been shown that ADMM-based algorithms can converge at the rate of $O(1/t)$ while subgradient-based algorithms typically converge at the rate of $O(1/\sqrt{t})$, where $t$ is the number of iterations \cite{WeiOz12}. Therefore, ADMM has become a popular method for designing distributed versions of a machine learning algorithm \cite{BoPa11,MoXa13,ZhangKw14}, and our work focuses on ADMM-based distributed learning. 
	
	With ADMM, the learning problem is divided into several sub-problems solved by agents independently and locally, and only intermediate parameters need to be shared.  However, the iterative process of ADMM involves privacy leakage, and the adversary can obtain the sensitive information from the shared model parameters as shown in \cite{ShSt17,FrJh15}. Thus, we aim to limit the privacy leakage during the iterative process of ADMM using differential privacy. Differential privacy is a widely used privacy definition \cite{DwMc06,huang2017differential,WangZLWQR18} and can be guaranteed in ADMM through adding noise to the exchanged messages. However, in existing studies on ADMM-based distributed learning with differential privacy \cite{ZhZh17,ZhKh18,ZhangKh18,guo2018practical,DiEr19}, noise addition would disrupt the learning process and severely degrade the performance of the trained model, especially when large noise is needed to provide high privacy protection. Besides, their privacy-preserving algorithms only apply to the learning problems with both smoothness and strongly convexity assumptions about the objective functions. Such weaknesses and limitations motivate us to explore further in this area.
	
	In this paper,  we mainly focus on using ADMM to enable distributed learning while guaranteeing differential privacy, and propose a novel differentially private ADMM-based distributed learning algorithm called DP-ADMM, which has good convergence properties, low computational cost, and an explicit and improved utility-privacy tradeoff, and can be applied to a wide class of distributed learning problems.
	%
	%
	The key algorithmic feature of DP-ADMM is the combination of an approximate augmented Lagrangian function and time-varying Gaussian noise addition in the iterative process, which enables the algorithm to be noise-resilient and provably convergent. The moments accountant method \cite{AbCh16} is used to analyze the end-to-end privacy guarantee of DP-ADMM. 
	%
	We also rigorously analyze the convergence rate and utility bound of our approach. To our knowledge, this is the first paper to provide explicit convergence and utility properties for differentially private ADMM-based distributed learning algorithms.

	The main contributions of this paper are summarized as follows:
	\begin{enumerate}
		\item 
		
		
		We design a novel differentially private ADMM-based distributed learning algorithm called DP-ADMM, which combines an approximate augmented Lagrangian function with time-varying Gaussian noise addition in the iterative process to achieve higher utility for more general objective functions than prior works under the same differential privacy guarantee.
		
		\item Different from previous studies providing only differential privacy guarantee for each iteration, we use the moments accountant method to analyze the total privacy loss and provide a tight end-to-end differential privacy guarantee for DP-ADMM.
		
		\item We provide rigorous convergence and utility analysis of the proposed DP-ADMM. To our knowledge, this is the first paper to provide explicit convergence and utility properties for differentially private ADMM-based distributed learning algorithms.
		
		\item We conduct extensive simulations based on real-world datasets to validate the effectiveness of DP-ADMM in distributed learning settings. 
	\end{enumerate}
	
	The rest of the paper is organized as follows. In Section~\ref{sec:model}, we present our problem statement. In Section~\ref{sec:appro}, we describe a differentially private standard ADMM-based algorithm and propose our DP-ADMM. In Section~\ref{sec:priv} and Section~\ref{sec:conv}, we theoretically analyze our privacy guarantee and convergence and utility properties of DP-ADMM, respectively. The numerical results of DP-ADMM based on real-world datasets are shown in Section~\ref{sec:impl}. Section~\ref{sec:rela} discusses the related work, and Section~\ref{sec:conc} concludes the paper. 
	
	\section{Problem Statement} \label{sec:model}
	In this section, we first introduce the problem setting. Then we present the standard ADMM-based distributed learning algorithm and discuss the associated privacy concern. A summary of notations used in this paper is listed in Table~\ref{table1}. 
	
	
	\subsection{Problem Setting}
	
	We consider a set of agents $[n] := \{1, \ldots, n\}$ and a central aggregator. Each agent $i \in [n]$ has a private training dataset $\mathcal{D}_i := \{(\boldsymbol{a}_{i,j}, \boldsymbol{b}_{i,j}) : \forall j \in [m_i] \}$, where $m_i$ is the number of training samples in the dataset $\mathcal{D}_i$, $\boldsymbol{a}_{i,j} \in \mathbb{R}^d $ is the $d$-dimensional data feature vector of the $j$-th training sample, and $\boldsymbol{b}_{i,j} \in \mathbb{R}^p$ is the corresponding $p$-dimensional data label. In this paper, we consider a star network topology where each agent can communicate with the central aggregator and the aggregator is responsible for message passing and aggregation. Note that our approach can be generalized to other network topologies where agents are connected with their neighbors without a central aggregator, as discussed in \cite{ZhZh17,ZhKh18,ZhangKh18}.
	
	\begin{table}
		\small
		\caption{List of notations}
		\begin{center}\label{table1}
			\renewcommand\arraystretch{1}
			\begin{tabular}{|c|c|}
				\hline
				$\boldsymbol{a}_{i,j}$ & Data feature vector \\ \hline
				$\boldsymbol{b}_{i, j}$ & Data label \\ \hline
				$\ell(\cdot)$ & Loss function \\\hline
				$R(\cdot)$& Regularizer function \\ \hline
				$\lambda$ &  Regularizer parameter\\\hline
				$\ell^{'}(\cdot)$& Subgradient of loss function\\\hline
				$R^{'}(\cdot)$& Subgradient of regularizer\\\hline
				${\nabla} \ell(\cdot)$ &  Gradient of loss function\\ \hline
				${\nabla} R(\cdot)$ & Gradient of regularizer\\ \hline
				$\boldsymbol{w}$& Global machine learning model\\\hline
				$\boldsymbol{w}_i$ & Local learning model from agent $i$\\\hline
				$ \boldsymbol{\gamma}_i $& Dual variable from agent $i$ \\ \hline
				$\rho $&  Penalty parameter \\ \hline
				$\mathcal{L}_{\rho}(\cdot)$ & Augmented Lagrangian function \\ \hline
				$\hat{\mathcal{L}}_{\rho, k}(\cdot)$ & Approximate augmented Lagrangian function \\ \hline
				$\boldsymbol{w}_i^k$& Primal variable from agent $i$ in $k$-th iteration \\ \hline
				$\boldsymbol{\tilde{w}}_i^{k}$& Noisy version of $\boldsymbol{w}_i^k$ after perturbation\\ \hline
				$\boldsymbol{\gamma}_i^k$ & Dual variable from agent $i$ in $k$-th iteration \\ \hline
				$\boldsymbol{w}^k$& Global variable in $k$-th iteration   \\ \hline
				$\boldsymbol{\xi}_i^{k}$& Sampled noise from agent $i$ in $k$-th iteration \\ \hline
				$\sigma_{i}^{2}$ & Constant variance of Gaussian mechanism \\ \hline
				$\eta_i^{k}$& Time-varying step size in $k$-th iteration \\ \hline
				$\sigma_{i,k}^{2}$& Time-varying variance of Gaussian mechanism \\ \hline
			\end{tabular}
			\vspace{-0.3in}
		\end{center}
	\end{table}
	
	The goal of our problem is to train a supervised learning model on the aggregated dataset $\{\mathcal{D}_i\}_{i \in [n]}$, which enables predicting a label for any new data feature vector. The learning objective can be formulated as the following regularized empirical risk minimization problem:
	\begin{align}
	\min_{\boldsymbol{w}} \quad \sum_{i = 1}^{n} \sum_{j=1}^{m_i }\frac{1}{m_i} \ell(\boldsymbol{a}_{i,j}, \boldsymbol{b}_{i,j},\boldsymbol{w})  +\lambda R(\boldsymbol{w}) \label{pro:1},
	\end{align}
	where $\boldsymbol{w} \in \mathbb{R}^{d\times p }$ is the trained machine learning model, $\ell (\cdot):\mathbb{R}^d  \times \mathbb{R}^p  \times \mathbb{R}^{d \times p }  \rightarrow \mathbb{R} $ is the loss function used to measure the quality of the trained model, $R(\cdot) $ refers to the regularizer function introduced to prevent overfitting, and $\lambda > 0$ is the regularizer parameter controlling the impact of regularizer. Note that the problem formulation \eqref{pro:1} can represent a wide range of machine learning tasks by choosing different loss functions. For instance, the loss function of binary logistic regression is:
	\begin{equation}
	\ell(\boldsymbol{a}_{i,j}, \boldsymbol{b}_{i,j}, \boldsymbol{w}) = \ln \big(1+\exp(-\boldsymbol{b}_{i,j}\boldsymbol{w}^{\intercal}\boldsymbol{a}_{i,j})\big),  \label{eq:logi}
	\end{equation}
	and the loss function of multi-class logistic regression is:
	\begin{equation}
	\ell(\boldsymbol{a}_{i,j}, \boldsymbol{b}_{i,j}, \boldsymbol{w}) = \sum_{h = 1}^{p} \boldsymbol{b}_{i,j}^{(h)} \ln \bigg( \frac{\sum_{l=1}^{p} \exp(\boldsymbol{w}^{{(l)}^{\intercal}} \boldsymbol{a}_{i,j})}{\exp(\boldsymbol{w}^{{(h)}^{\intercal}} \boldsymbol{a}_{i,j}) }\bigg). 
	\end{equation}
	
	In this paper, we assume that the loss function $\ell (\cdot)$ and the regularizer function $R(\cdot)$ are both convex but not necessarily smooth. Throughout this paper, we use $\ell^{'}(\cdot)$ and $R^{'}(\cdot)$ to denote the sub-gradient of $\ell(\cdot)$ and $R(\cdot)$ respectively. When we consider smooth functions, we use $\nabla \ell(\cdot)$ and $\nabla R(\cdot)$ instead.

	\subsection{ADMM-Based Distributed Learning Algorithm}
	
	To apply ADMM, we re-formulate the problem \eqref{pro:1} as:
	\begin{subequations}\label{eq:dispro}
		\begin{align}
		\min_{\{\boldsymbol{w}_i\}_{i \in [n]}} \quad & \sum_{i = 1}^{n} \bigg(\sum_{j=1}^{m_i }\frac{1}{m_i} \ell(\boldsymbol{a}_{i,j}, \boldsymbol{b}_{i,j},\boldsymbol{w}_i) + \frac{\lambda}{n} R(\boldsymbol{w}_i)\bigg), \label{objective} \\
		\text{s.t.} \quad \quad & \boldsymbol{w}_i = \boldsymbol{w}, i = 1, \ldots, n, \label{prob:modified}
		\end{align}
	\end{subequations}
	where $\boldsymbol{w}_i \in \mathbb{R}^{d \times p}$ is the local model, and $\boldsymbol{w} \in \mathbb{R}^{d \times p}$ is the global one. The objective function \eqref{objective} is decoupled and each agent only needs to minimize the sub-problem associated with its dataset. Constraints \eqref{prob:modified} enforce that all the local models reach consensus finally.
	
	In standard ADMM, the augmented Lagrangian function associated with the problem \eqref{eq:dispro} is:
	\begin{equation}
	\mathcal{L}_{\rho}(\boldsymbol{w}, \{\boldsymbol{w}_i\}_{i\in[n]}, \{\boldsymbol{\gamma}_i\}_{i\in[n]}) =   \sum_{i = 1}^{n} \mathcal{L}_{\rho,i}( \boldsymbol{w}_i,\boldsymbol{w}, \boldsymbol{\gamma}_i),  \label{eq:lag}   
	\end{equation}
	where
	\begin{equation}
	\begin{split}
	\mathcal{L}_{\rho,i}( \boldsymbol{w}_i,\boldsymbol{w}, \boldsymbol{\gamma}_i) =  & \sum_{j=1}^{m_i }\frac{1}{m_i} \ell(\boldsymbol{a}_{i,j}, \boldsymbol{b}_{i,j},\boldsymbol{w}_i)+\frac{\lambda}{n} R(\boldsymbol{w}_i) \\& \quad - \big\langle \boldsymbol{\gamma}_i, \boldsymbol{w}_i - \boldsymbol{w} \big\rangle  + \frac{\rho}{2} {\Vert \boldsymbol{w}_i - \boldsymbol{w}\Vert}^2.  \label{eq:lag2} 
	\end{split}
	\end{equation}	
	In \eqref{eq:lag2}, $\{\boldsymbol{\gamma}_i\}_{i \in [n]} \in \mathbb{R}^{d \times p \times n}$ are the dual variables associated with constraints~\eqref{prob:modified} and $\rho > 0$ is the penalty parameter. The standard ADMM solves the problem \eqref{eq:dispro} in a Gauss-Seidel manner by minimizing \eqref{eq:lag} w.r.t. $\{\boldsymbol{w}_i\}_{i \in [n]}$ and $\boldsymbol{w}$ alternatively followed by a dual update of $\{\boldsymbol{\gamma}_i\}_{i\in[n]}$. The ADMM-based distributed algorithm is shown in Algorithm~\ref{ag:2}.
	\begin{algorithm}[htbp]
		\caption{ADMM-Based Distributed Algorithm}\label{ag:2}
		\begin{algorithmic}[1]
			\STATE Initialize $\boldsymbol{w}^{0}$, $\{\boldsymbol{w}_i^{0}\}_{i \in [n]}$, and $\{\boldsymbol{\gamma}_i^0\}_{i \in [n]}$;
			
			\FOR{$k =  1, 2, \dots, t$}
			\FOR{$i = 1, 2, \dots, n$}
			\STATE  $\boldsymbol{w}_i^{k} \gets \argmin_{\boldsymbol{w}_i}  \mathcal{L}_{\rho,i}(\boldsymbol{w}_i, \boldsymbol{w}^{k-1}, \boldsymbol{\gamma}_i^{k-1})$;
			\ENDFOR
			
			\STATE   $\boldsymbol{w}^{k}  \gets \frac{1}{n}\sum_{i=1}^n\boldsymbol{w}^{k}_i -\frac{1}{n}\sum_{i=1}^n\boldsymbol{\gamma}^{k-1}_i/\rho$;
			
			\FOR{$i = 1, 2, \dots, n$}
			\STATE   $\boldsymbol{\gamma}_{i}^{k} \gets \boldsymbol{\gamma}_{i}^{k-1} - \rho (\boldsymbol{w}_i^{k} - \boldsymbol{w}^{k})$. 
			\ENDFOR
			\ENDFOR
		\end{algorithmic}
	\end{algorithm}
	\vspace*{-0.2in}
	
	\subsection{Privacy Concern}
	
	In Algorithm~\ref{ag:2}, the intermediate parameters $\{\boldsymbol{w}_i^{k}\}_{i \in [n], k \in [t]}$ need to be shared with the aggregator, which may reveal the agents' private information as demonstrated by model inversion attacks \cite{fredrikson2015model}. Thus, we need to develop privacy-preserving methods to control such information leakage. The main goal of this paper is to provide privacy protection against inference attacks from an adversary, who tries to infer sensitive information about the agents' private datasets from the shared messages. We assume that the adversary can neither intrude into the local datasets nor have access to the datasets directly. The adversary could be an outsider who eavesdrops the shared messages, or the honest-but-curious aggregator who follows the protocol honestly but tends to infer the sensitive information. We do not assume any trusted third party, thus a privacy-preserving mechanism should be applied locally by each agent to provide privacy protection.  
	
	
	In order to provide privacy guarantee against such attacks, we define our privacy model formally by the notion of differential privacy \cite{DwMc06}. 
	Specifically, we adopt the $(\epsilon,\delta)$-differential privacy defined as follows:
	
	\begin{definition}[$(\epsilon,\delta)$-Differential Privacy] \label{de:1}
		A randomized mechanism $\mathcal{M}$ is $(\epsilon,\delta)$-differentially private if for any two neighbouring datasets $\mathcal{D}$ and $\mathcal{D}^{'}$ differing in only one tuple, and for any subsets of outputs $\mathcal{O} \subseteq$ range($\mathcal{M}$):
		\begin{equation}
		\Pr[\mathcal{M}(\mathcal{D})\in\mathcal{O}] \leq e^{\epsilon} \cdot \Pr[\mathcal{M}(\mathcal{D}^{'})\in\mathcal{O}] + \delta, 
		\end{equation}
		which means, with probability of at least $1-\delta$, the ratio of the probability distributions for two neighboring datasets is bounded by $e^{\epsilon}$.
	\end{definition}
	
	In Definition~\ref{de:1}, the parameters $\delta$ and $\epsilon$ are privacy budgets indicating the strength of privacy protection from the mechanism. Smaller $\epsilon$ or $\delta$ indicates better privacy protection. Gaussian mechanism is a common randomization method used to guarantee $(\epsilon,\delta)$-differential privacy, where noise sampled from normal distribution is added to the output. In this paper, we use $\mathcal{MN}_{d,p}(0, \sigma^{2}\boldsymbol{\mathrm{I}}_d, \sigma^{2}\boldsymbol{\mathrm{I}}_p )$ to denote the matrix normal distribution with variance $\sigma^{2}$.
	
	\section{ADMM with Differential Privacy} \label{sec:appro}
	
	In this section, we achieve differential privacy under the framework of ADMM. First, we introduce an intuitive method by directly combining standard ADMM and primal variable perturbation (PVP) and discuss the weaknesses of this method. Then we propose our new approach to achieving differential privacy in ADMM with an improved utility-privacy tradeoff.
	
	\subsection{ADMM with Primal Variable Perturbation (PVP)}
	
	As described in Section~\ref{sec:model}, we need to use a local privacy-preserving mechanism in order to guarantee $(\epsilon,\delta)$-differential privacy for each agent. An intuitive way to achieve this goal is to combine the primal variable perturbation mechanism (PVP) and standard ADMM directly as proposed in \cite{ZhZh17}. Specifically, as given in Algorithm~\ref{ag:3}, at the $k$-th iteration, after obtaining the local primal variable $\boldsymbol{w}_i^{k}$, we apply Gaussian mechanism with a pre-defined variance $\sigma_{i}^{2}$ to perturb it and share the noisy primal variable $\tilde{\boldsymbol{w}}_i^{k}$, which can guarantee differential privacy. According to \cite{DwRo14,ChMo11}, by assuming the smoothness of loss function $l(\cdot)$ and regularizer function $R(\cdot)$, strongly convexity of regularizer $R(\cdot)$, and the bounded $l_2$ norm of the derivative of loss function by $c_1$, the $l_2$ sensitivity of $\boldsymbol{w}_i^k$ update function in standard ADMM is $2 c_1 /\big(m_i (\lambda/n+\rho)\big)$ as proved in Appendix \ref{ap:sen}. Therefore, the noise magnitude $\sigma_{i} = 2 c_1  \sqrt{2\ln(1.25/\delta)}/\big(( \lambda/n+\rho)m_i \epsilon\big) $ can achieve $(\epsilon,\delta)$-differential privacy in each iteration. 
	
	\begin{algorithm}
		\caption{ADMM with PVP}\label{ag:3}
		\begin{algorithmic}[1]
			\STATE Initialize  $\boldsymbol{w}^{0} $, $\{\boldsymbol{w}_i^{0}\}_{i \in [n]}$, and $\{\boldsymbol{\gamma}_i^0\}_{i \in [n]} $.
			
			\FOR{$k =  1, 2, \dots, t$}
			\FOR{$i = 1, 2, \dots, n$}
			\STATE  $\boldsymbol{w}_i^{k} \gets \argmin_{\boldsymbol{w}_i}  \mathcal{L}_{\rho,i}(\boldsymbol{w}_i, \boldsymbol{w}^{k-1}, \boldsymbol{\gamma}_i^{k-1})$.
			\STATE  $\boldsymbol{\tilde{w}}_i^{k} \gets \boldsymbol{w}_i^{k} + \mathcal{MN}_{d,p}(0, \sigma_{i}^{2}\boldsymbol{\mathrm{I}}_d, \sigma_{i}^{2}\boldsymbol{\mathrm{I}}_p )$.
			\ENDFOR
			\STATE  $\boldsymbol{w}^{k}  \gets \frac{1}{n}\sum_{i=1}^n\boldsymbol{\tilde{w}}^{k}_i -\frac{1}{n}\sum_{i=1}^n\boldsymbol{\gamma}^{k-1}_i/\rho$.
			\FOR{$i = 1, 2, \dots, n$}
			\STATE $\boldsymbol{\gamma}_i^{k } \gets \boldsymbol{\gamma}_{i}^{k-1} - \rho (\boldsymbol{\tilde{w}}_i^{k} - \boldsymbol{w}^{k})$. 
			\ENDFOR
			\ENDFOR
		\end{algorithmic}
	\end{algorithm}
	
	However, the added noise from the perturbation mechanism would disrupt the learning process, break the convergence property of the iterative process, and lead to a trained model with poor performance. This is especially the case when the privacy budget is small. Specifically, when the iteration number $k$ is large, the trained model would keep changing dramatically due to the existence of large noise. Besides, the above perturbation method can only be applied when the objective function is smooth and the regularizer is strongly convex \cite{ChMo11,ZhZh17}. In order to address such problems, we need to consider an alternative way to preserving differential privacy of ADMM-based distributed learning algorithms.

	\subsection{Our Approach}
	
	Our approach is inspired by the intuition that it is not necessary to solve the problem up to a very high precision in each iteration in order to guarantee the overall convergence. In our approach, instead of using the exact augmented Lagrangian function, we employ its first-order approximation with a scalar $l_2$-norm prox-function. Here we define:
	\begin{equation}
	\begin{split}
	&\hat{\mathcal{L}}_{\rho, k, i}(\boldsymbol{w}_i,\boldsymbol{\tilde{w}}_i^{k-1}, \boldsymbol{w}, \boldsymbol{\gamma}_i) \\   &\quad = \sum_{j=1}^{m_i }\frac{1}{m_i} \ell(\boldsymbol{a}_{i,j}, \boldsymbol{b}_{i,j},\boldsymbol{\tilde{w}}_i^{k-1}) + \frac{\lambda}{n} R(\boldsymbol{\tilde{w}}_i^{k-1}) \\& \quad + \big\langle \sum_{j=1}^{m_i }\frac{1}{m_i} \ell^{'}(\boldsymbol{a}_{i,j}, \boldsymbol{b}_{i,j},\boldsymbol{\tilde{w}}_i^{k-1}) + \frac{\lambda}{n} R^{'}(\boldsymbol{\tilde{w}}_i^{k-1}),  \boldsymbol{w}_i - \boldsymbol{\tilde{w}}_i^{k-1} \big\rangle
	\\ &\quad - \big\langle \boldsymbol{\gamma}_i ,\boldsymbol{w}_i- \boldsymbol{w} \big\rangle + \frac{\rho}{2} {\Vert\boldsymbol{w}_i- \boldsymbol{w}\Vert}^2 + \frac{{\Vert \boldsymbol{w}_i - \boldsymbol{\tilde{w}}_i^{k-1}\Vert}^2}{2\eta_i^{k}}, \label{eq:apro}
	\end{split}
	\end{equation}
	where $\eta_i^{k} \in \mathbb{R}$ is the time-varying step size, and it decreases as the iteration number $k$ increases.
	
	The proposed approximate augmented Lagrangian function used in our approach is defined by:
	\begin{equation}
	\begin{split}
	&\hat{\mathcal{L}}_{\rho,k}(\{\boldsymbol{w}_i\}_{i\in[n]},\{\boldsymbol{\tilde{w}}_i^{k-1}\}_{i\in[n]},\boldsymbol{w}, \{\boldsymbol{\gamma}_i\}_{i \in [n]} ) \\    
	&\quad\quad\quad\quad\quad\quad\quad\quad = \sum_{i = 1}^{n} \hat{\mathcal{L}}_{\rho, k, i}(\boldsymbol{w}_i,\boldsymbol{\tilde{w}}_i^{k-1}, \boldsymbol{w}, \boldsymbol{\gamma}_i).\label{eq:aprolag}
	\end{split}
	\end{equation}
	Our approach minimizes \eqref{eq:aprolag} in a Gauss-Seidel manner and adds zero-mean Gaussian noise with time-varying variance $\sigma_{i,k}^{2}$ that decreases as the iteration number $k$ increases.  
	
	The resulting ADMM steps that provide differential privacy are as follows:
	\begin{subequations}
		\begin{align}
		\boldsymbol{w}_i^{k} = & \argmin_{\boldsymbol{w}_i} ~\hat{\mathcal{L}}_{\rho, k, i}(\boldsymbol{w}_i,\boldsymbol{\tilde{w}}_i^{k-1}, \boldsymbol{w}^{k-1}, \boldsymbol{\gamma}_i^{k-1}),\label{eqite1}  \\\boldsymbol{\tilde{w}}_i^{k}= &\boldsymbol{w}_i^{k} + \mathcal{MN}_{d,p}(0, \sigma_{i,k}^{2}\boldsymbol{\mathrm{I}}_d, \sigma_{i,k}^{2}\boldsymbol{\mathrm{I}}_p )  , \label{eqper}\\
		\boldsymbol{w}^{k} = & \frac{1}{n}\sum_{i=1}^n\boldsymbol{\tilde{w}}^{k}_i - \frac{1}{n}\sum_{i=1}^n\boldsymbol{\gamma}^{k-1}_i/\rho, \label{eqite3}\\
		\boldsymbol{\gamma}_i^{k} = & \boldsymbol{\gamma}_{i}^{k-1} - \rho (\boldsymbol{\tilde{w}}_i^{k} - \boldsymbol{w}^{k}), \label{eqite2}
		\end{align}
	\end{subequations}
	where \eqref{eqite3} is computed at the aggregator while \eqref{eqite1}, \eqref{eqper} and \eqref{eqite2} are performed at each agent. 
	
	The details are given in Algorithm~\ref{ag:1}. The central aggregator firstly initializes the global variable $\boldsymbol{w}^0$, and the agents also initialize their noisy primal variables $\{ \boldsymbol{\tilde{w}}_i^0 \}_{i \in [n]}$ and dual variables $\{ \boldsymbol{\gamma}_i^0 \}_{i \in [n]}$. At the beginning of each iteration $k$, each agent $i$ first samples a zero-mean Gaussian noise $\boldsymbol{\xi}_i^{k}$ with variance $\sigma_{i,k}^{2}$ and updates the noisy primal variable $ \boldsymbol{\tilde{w}}_i^{k} $ based on \eqref{eqite1} and \eqref{eqper}. Then the aggregator receives the noisy primal variables $\{\boldsymbol{\tilde{w}}_i^{k} \}_{i \in [n]}$ and the dual variables $\{ \boldsymbol{\gamma}_i^{k-1} \}_{i \in [n]}$ from the agents, and uses them to update the global variable $\boldsymbol{w}^{k} $ according to \eqref{eqite3}. After that, agents receive the updated global variable $ \boldsymbol{w}^{k} $ from the aggregator and continue to update the dual variables $\{ \boldsymbol{\gamma}_{i}^{k} \}_{i \in [n]}$ by \eqref{eqite2}. The iterative process will continue until reaching $t$ iterations. 
	
	\begin{algorithm}[htbp] 
		\caption{DP-ADMM}\label{ag:1}
		\begin{algorithmic}[1]
			\STATE Initialize  $\boldsymbol{w}^{0} $, $\{\tilde{\boldsymbol{w}}_i^{0}\}_{i \in [n]}$, and $\{\boldsymbol{\gamma}_i^0\}_{i \in [n]} $.
			\FOR{$k = 1, 2, \dots, t$}
			\FOR{$i = 1, 2, \dots, n$}
			\STATE   $\boldsymbol{w}_i^{k} \gets  \argmin_{\boldsymbol{w}_i} ~\hat{\mathcal{L}}_{\rho, k, i}(\boldsymbol{w}_i,\boldsymbol{\tilde{w}}_i^{k-1}, \boldsymbol{w}^{k-1}, \boldsymbol{\gamma}_i^{k-1})$.
			\STATE   $\boldsymbol{\xi}_i^{k} \gets \mathcal{MN}_{d,p}(0, \sigma_{i,k}^{2}\boldsymbol{\mathrm{I}}_d, \sigma_{i,k}^{2}\boldsymbol{\mathrm{I}}_p ).$
			
			\STATE  $\boldsymbol{\tilde{w}}_i^{k} \gets \boldsymbol{w}_i^{k} + \boldsymbol{\xi}_i^{k}.$
			\ENDFOR
			\STATE   $\boldsymbol{w}^{k}  \gets \frac{1}{n}\sum_{i=1}^n\boldsymbol{\tilde{w}}^{k}_i -\frac{1}{n}\sum_{i=1}^n\boldsymbol{\gamma}^{k-1}_i/\rho$.
			\FOR{$i = 1, 2, \dots, n$}
			\STATE  $\boldsymbol{\gamma}_i^{k} \gets \boldsymbol{\gamma}_{i}^{k-1} - \rho (\boldsymbol{\tilde{w}}_i^{k} - \boldsymbol{w}^{k})$. 
			\ENDFOR
			\ENDFOR
		\end{algorithmic}
	\end{algorithm}
	
	Algorithm~\ref{ag:1} is different from Algorithm~\ref{ag:3} in three aspects. Firstly, the approximate augmented Lagrangian function used in this approach replaces the objective function with its first-order approximation at $\boldsymbol{\tilde{w}}_i^{k-1}$, which is similar to the stochastic mirror descent \cite{NeJu09}. This approximation enforces the smoothness of the Lagrangian function and makes it easy to solve~\eqref{eqite1}. Even when the objective function is non-smooth, we can still get a closed-form solution to \eqref{eqite1}, which achieves fast computation. More importantly, this approximation can lead to a bounded $l_2$ sensitivity in differential privacy guarantee without the limitation that the objective function should be smooth and strongly convex. Thus our approach can be applied to any convex problems. We demonstrate this in Section \ref{sec:priv}.
	
	Secondly, similar to linearized ADMM \cite{YaYu13,LinLiu11}, there is an $l_2$-norm prox-function ${\Vert \boldsymbol{w}_i - \boldsymbol{\tilde{w}}_i^{k-1}\Vert}^2$ but scaled by $1/2\eta_i^{k}$ added in \eqref{eq:apro}, where the step size $\eta_i^{k}$ decreases when the iteration number $k$ increases. Such additional part can guarantee the consistency between the updated model $ \boldsymbol{w}_i^{k}$ and the previous one, especially when $k$ is large. Thus, as $k$ increases, the updated model would change more smoothly. Note that the time-varying step-size $\eta_i^{k}$ is significant for the overall convergence guarantee. In Section~\ref{sec:conv}, we will define $\eta_i^{k}$ and show its importance in algorithmic convergence. 
	
	Lastly, the variance $\sigma_{i,k}^{2}$ of Gaussian mechanism used in Algorithm~\ref{ag:1} is time-varying rather than constant as adopted in prior studies \cite{AbCh16}. It decreases when the iteration number $k$ increases. The motivation of using Gaussian mechanism with time-varying variance is to mitigate the negative effect from noise and guarantee the convergence property of our approach. As explained before, the added noise would disrupt the learning process. By using the Gaussian mechanism with time-varying variance, the added noise will decrease when the iteration number $k$ increases. Therefore, the negative affect from the added noise will be mitigated, enabling the updates to be stable. In Section \ref{sec:priv}, we would define the magnitude of time-varying variance $\sigma_{i,k}^{2}$ to achieve differential privacy.
	
	
	\section{Privacy Guarantee}\label{sec:priv}
	
	In this section, we analyze the privacy guarantee of the proposed DP-ADMM. In DP-ADMM, the shared messages $\{\boldsymbol{\tilde{w}}_i^{k}\}_{ k \in[t]}$ may reveal the sensitive information of agent $i$, which has been discussed in Section \ref{sec:model}. Thus, we need to demonstrate that DP-ADMM guarantees differential privacy with outputs $\{\boldsymbol{\tilde{w}}_i^{k} \}_{k \in [t]}$. We first estimate the $l_2$ norm sensitivity of $\boldsymbol{w}_i^{k}$ update function, then analyze the privacy leakage from the shared primal variable $\boldsymbol{\tilde{w}}_i^{k}$ in each iteration, and finally compute the end-to-end differential privacy guarantee across $t$ iterations using the moments accountant method. Here we use $\boldsymbol{w}_{i,\mathcal{D}_i}^{k}$ and $	\boldsymbol{w}_{i,\mathcal{D}^{'}_i}^{k} $ to denote the local primal variables updated from two neighboring datasets $\mathcal{D}_i$ and $\mathcal{D}_i^{'}$.
	
	\subsection{$L_2$-norm Sensitivity}
	
	In our approach, we apply Gaussian mechanism to add noise whose magnitude is calibrated by the $l_2$-norm sensitivity. Note that compared with Algorithm~\ref{ag:3} and prior works \cite{ZhZh17,ZhKh18,ZhangKh18}, the derivation of the sensitivity in our proposed algorithm does not require the assumption of smoothness and strong convexity of the objective function due to the first-order approximation used in the approximate augmented Lagrangian function. 
	
	\begin{lemma} \label{lem:s}
		Assume that $\Vert \ell^{'}(\cdot) \Vert \leq c_1$. The $l_2$-norm sensitivity of local primal variable $\boldsymbol{w}_i^{k}$ update function is given by:
		\begin{equation}
		\begin{split}
		\max_{\mathcal{D}_i, \mathcal{D}_i^{'}} \Vert \boldsymbol{w}_{i,\mathcal{D}_i}^{k}  -\boldsymbol{w}_{i,\mathcal{D}_i^{'}}^{k}  \Vert =  \frac{2c_1}{m_i(\rho+1/\eta_i^{k})} .
		\end{split}
		\end{equation}
	\end{lemma}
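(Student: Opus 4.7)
The plan is to exploit the fact that because the approximate Lagrangian $\hat{\mathcal{L}}_{\rho,k,i}$ replaces the loss and regularizer with their first-order linearizations at $\tilde{\boldsymbol{w}}_i^{k-1}$, the subproblem in step \eqref{eqite1} is a strongly convex quadratic in $\boldsymbol{w}_i$ with a closed-form minimizer. I would first take the gradient of $\hat{\mathcal{L}}_{\rho,k,i}$ with respect to $\boldsymbol{w}_i$ and set it to zero. The gradient is a sum of: the frozen gradient $g_i^{k-1} := \sum_{j=1}^{m_i}\tfrac{1}{m_i}\ell'(\boldsymbol{a}_{i,j},\boldsymbol{b}_{i,j},\tilde{\boldsymbol{w}}_i^{k-1}) + \tfrac{\lambda}{n}R'(\tilde{\boldsymbol{w}}_i^{k-1})$, the dual term $-\boldsymbol{\gamma}_i^{k-1}$, the penalty derivative $\rho(\boldsymbol{w}_i - \boldsymbol{w}^{k-1})$, and the prox derivative $(\boldsymbol{w}_i - \tilde{\boldsymbol{w}}_i^{k-1})/\eta_i^k$. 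Solving yields
\[
\boldsymbol{w}_i^{k} \;=\; \frac{1}{\rho+1/\eta_i^{k}}\Bigl(\boldsymbol{\gamma}_i^{k-1} + \rho\,\boldsymbol{w}^{k-1} + \tilde{\boldsymbol{w}}_i^{k-1}/\eta_i^{k} - g_i^{k-1}\Bigr).
\]

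Next I would fix two neighboring datasets $\mathcal{D}_i$ and $\mathcal{D}_i'$ that differ in exactly one sample (say the $j_0$-th). Crucially, all inputs other than $g_i^{k-1}$ in the closed form above are quantities carried over from the previous iteration (the previous noisy primal $\tilde{\boldsymbol{w}}_i^{k-1}$, dual $\boldsymbol{\gamma}_i^{k-1}$, and global $\boldsymbol{w}^{k-1}$) and are identical under both datasets for the purpose of sensitivity analysis at a single step. Moreover, the regularizer contribution $\tfrac{\lambda}{n}R'(\tilde{\boldsymbol{w}}_i^{k-1})$ inside $g_i^{k-1}$ is dataset-independent and therefore cancels. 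Subtracting the two closed-form expressions gives
\[
\boldsymbol{w}_{i,\mathcal{D}_i}^{k} - \boldsymbol{w}_{i,\mathcal{D}_i'}^{k} \;=\; -\frac{1}{m_i(\rho+1/\eta_i^{k})}\Bigl(\ell'(\boldsymbol{a}_{i,j_0},\boldsymbol{b}_{i,j_0},\tilde{\boldsymbol{w}}_i^{k-1}) - \ell'(\boldsymbol{a}_{i,j_0}',\boldsymbol{b}_{i,j_0}',\tilde{\boldsymbol{w}}_i^{k-1})\Bigr).
\]

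Finally I would apply the triangle inequality together with the hypothesis $\|\ell'(\cdot)\|\le c_1$ to bound each of the two subgradient terms, giving the stated bound $2c_1/\bigl(m_i(\rho+1/\eta_i^{k})\bigr)$. Tightness (the $\max$ being attained) follows by choosing the differing sample so that the two subgradients are antipodal vectors of norm $c_1$, an observation worth including with one sentence. The only real subtlety to flag, rather than a genuine obstacle, is the clean cancellation of everything except the single differing loss-gradient term; this is precisely the benefit conferred by the linearization in \eqref{eq:apro} and is what removes the smoothness/strong-convexity assumptions on $\ell$ and $R$ that prior PVP-style analyses require.
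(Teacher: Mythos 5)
Your proposal is correct and follows essentially the same route as the paper's own proof: differentiate the quadratic subproblem to get the closed-form minimizer, subtract the two expressions for neighboring datasets so that everything except the single differing loss-subgradient term cancels, and bound that difference by $2c_1/\bigl(m_i(\rho+1/\eta_i^{k})\bigr)$ using $\Vert \ell^{'}(\cdot)\Vert \leq c_1$. Your added remark on tightness via antipodal subgradients is a small refinement the paper omits, but the argument is otherwise identical.
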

	\begin{proof}
		Since $\hat{\mathcal{L}}_{\rho, k, i}(\boldsymbol{w}_i,\boldsymbol{\tilde{w}}_i^{k-1}, \boldsymbol{w}^{k-1}, \boldsymbol{\gamma}_i^{k-1})$ in the first step of DP-ADMM \eqref{eqite1} is a quadratic function w.r.t. $\boldsymbol{w}_i$ and therefore convex, we could obtain that:
		\begin{subequations}
			\begin{align}
			\boldsymbol{w}_{i,\mathcal{D}_i}^{k} = & \bigg(-\sum_{j=1}^{m_i }\frac{1}{m_i} \ell^{'}(\boldsymbol{a}_{i,j}, \boldsymbol{b}_{i,j},\boldsymbol{\tilde{w}}_i^{k-1})  -  \frac{\lambda}{n} R^{'}(\boldsymbol{\tilde{w}}_i^{k-1})   \nonumber\\ +& \boldsymbol{\gamma}_i^{k-1}  + \rho\boldsymbol{w}^{k-1}+ \frac{ \boldsymbol{\tilde{w}}_i^{k-1}}{\eta_i^{k}}\bigg)\bigg(\rho+1/\eta_i^{k}\bigg)^{-1}, \label{eq:wkid}\\
			\boldsymbol{w}_{i,\mathcal{D}^{'}_i}^{k}  = & \bigg(-\sum_{j=1}^{m_i-1}\frac{1}{m_i} \ell^{'}(\boldsymbol{a}_{i,j}, \boldsymbol{b}_{i,j},\boldsymbol{\tilde{w}}_i^{k-1})   \nonumber\\ -&  \frac{1}{m_i}\ell^{'}(a^{'}_{i,m_i}, b^{'}_{i,m_i},\boldsymbol{\tilde{w}}_i^{k-1})-\frac{\lambda}{n} R^{'}(\boldsymbol{\tilde{w}}_i^{k-1}) \nonumber\\ +&     \boldsymbol{\gamma}_i^{k-1} + \rho\boldsymbol{w}^{k-1} + \frac{ \boldsymbol{\tilde{w}}_i^{k-1}}{\eta_i^{k}}\bigg)\bigg(\rho+1/\eta_i^{k}\bigg)^{-1}, \label{eq:wkid1}
			\end{align}
		\end{subequations}
		by computing the derivative of \eqref{eq:apro} with inputs $\boldsymbol{w}^{k-1}$ and $\boldsymbol{\gamma}_i^{k-1}$
		and letting $\nabla \hat{\mathcal{L}}_{\rho, k, i}(\boldsymbol{w}_i,\boldsymbol{\tilde{w}}_i^{k-1}, \boldsymbol{w}^{k-1}, \boldsymbol{\gamma}_i^{k-1})$ to be $0$. 

		With $	\boldsymbol{w}_{i,\mathcal{D}_i}^{k}$ and $\boldsymbol{w}_{i,\mathcal{D}^{'}_i}^{k}$ calculated by \eqref{eq:wkid} and \eqref{eq:wkid1} respectively, the $l_2$-norm sensitivity of primal variable $\boldsymbol{w}_i^{k}$ update function is defined by:
		\begin{equation}
		\begin{split}
		&\max_{\mathcal{D}_i, \mathcal{D}_i^{'}} \Vert \boldsymbol{w}_{i,\mathcal{D}_i}^{k}  -\boldsymbol{w}_{i,\mathcal{D}_i^{'}}^{k}  \Vert \\
		= & \max_{\mathcal{D}_i, \mathcal{D}_i^{'}} \frac{\big\Vert\ell^{'}(\boldsymbol{a}_{i,m_i}, \boldsymbol{b}_{i,m_i},\boldsymbol{\tilde{w}}_i^{k-1}) - \ell^{'}\small(\boldsymbol{a}^{'}_{i,m_i}, \boldsymbol{b}^{'}_{i,m_i},\boldsymbol{\tilde{w}}_i^{k-1}\small) \big\Vert}{m_i(\rho+1/\eta_i^{k})}  .
		\end{split}
		\end{equation}
		Since $\Vert \ell^{'}(\cdot) \Vert$ is bounded by $c_1$, the sensitivity of $\boldsymbol{w}_i^{k}$ update function is given by $2c_1/\big(m_i(\rho+1/\eta_i^{k})\big)$.
	\end{proof}
	Lemma~\ref{lem:s} shows that the sensitivity of $\boldsymbol{w}_i^{k}$ update function in our approach is affected by the time-varying $\eta_i^{k}$. When we set $\eta_i^{k}$ to decrease with increasing $k$, the sensitivity becomes smaller with larger $k$, then the noise added would be smaller when $\epsilon$ is fixed. Thus, the updates would be stable in spite of the existence of the noise.
	
	\subsection{$(\epsilon, \delta)$-Differential Privacy Guarantee}
	
	In this section, we prove that each iteration of Algorithm \ref{ag:1} guarantees $(\epsilon, \delta)$-differential privacy. 
	
	\begin{theorem} \label{theo:1}
		Assume that $\Vert \ell^{'}(\cdot) \Vert \leq c_1 $. Let $\epsilon \in (0,1]$ be arbitrary and $\boldsymbol{\xi}^{k}_i$ be the noise sampled from Gaussian mechanism with variance $\sigma_{i,k}^{2}$ where
		\begin{equation}
		\sigma_{i,k} = \frac{2 c_1\sqrt{2\ln(1.25/\delta)} }{m_i \epsilon (\rho+1/\eta_i^{k})}. 
		\end{equation}
		Each iteration of DP-ADMM guarantees $(\epsilon,\delta)$-differential privacy. Specifically, for any neighboring datasets $\mathcal{D}_i$ and $\mathcal{D}^{'}_i$, for any output $\boldsymbol{\tilde{w}}_i^{k}$, the following inequality always holds:
		\begin{equation}
		\Pr[\boldsymbol{\tilde{w}}_i^{k}\vert \mathcal{D}_i ] \leq e^{\epsilon} \cdot \Pr[\boldsymbol{\tilde{w}}_i^{k}\vert \mathcal{D}^{'}_i] +\delta .
		\end{equation}
	\end{theorem}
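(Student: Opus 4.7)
The plan is straightforward: combine Lemma~\ref{lem:s} with the classical Gaussian-mechanism guarantee. Lemma~\ref{lem:s} already gives the $l_2$-sensitivity $\Delta_{i,k} = 2c_1 / \bigl(m_i(\rho + 1/\eta_i^{k})\bigr)$ of the deterministic map $\mathcal{D}_i \mapsto \boldsymbol{w}_i^{k}$, when the previous iterates $\boldsymbol{\tilde{w}}_i^{k-1}$, $\boldsymbol{w}^{k-1}$, $\boldsymbol{\gamma}_i^{k-1}$ are held fixed. The standard Gaussian mechanism result (Dwork--Roth, Theorem A.1) states that for any $\epsilon \in (0,1]$ and any deterministic $f$ with $l_2$-sensitivity $\Delta$, additive i.i.d.\ Gaussian noise with standard deviation $\sigma \geq \Delta \sqrt{2\ln(1.25/\delta)}/\epsilon$ yields an $(\epsilon,\delta)$-differentially private mechanism. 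Substituting $\Delta_{i,k}$ into this bound produces exactly the stated $\sigma_{i,k}$, so the per-iteration guarantee follows.

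First I would justify that, for the purposes of a single-iteration analysis, the previous iterates can be treated as fixed public quantities. Because $\boldsymbol{\tilde{w}}_i^{k-1}$, $\boldsymbol{w}^{k-1}$, and $\boldsymbol{\gamma}_i^{k-1}$ have already been released in prior rounds, conditioning on their realizations turns the $k$-th update into a deterministic function of $\mathcal{D}_i$ alone, to which Lemma~\ref{lem:s} applies verbatim. Second, I would note that the matrix-variate Gaussian $\mathcal{MN}_{d,p}(0,\sigma_{i,k}^{2}\boldsymbol{\mathrm{I}}_d,\sigma_{i,k}^{2}\boldsymbol{\mathrm{I}}_p)$ reduces to $d \cdot p$ i.i.d.\ $\mathcal{N}(0,\sigma_{i,k}^{2})$ entries, so after vectorizing $\boldsymbol{w}_i^{k}$ the classical scalar proof of the Gaussian mechanism carries over without modification.

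Third, I would plug the sensitivity directly into the Gaussian-mechanism threshold: $\sigma_{i,k} = \Delta_{i,k}\sqrt{2\ln(1.25/\delta)}/\epsilon = 2c_1\sqrt{2\ln(1.25/\delta)}/\bigl(m_i \epsilon (\rho+1/\eta_i^{k})\bigr)$, matching the theorem's statement. Finally, the desired inequality $\Pr[\boldsymbol{\tilde{w}}_i^{k}\mid \mathcal{D}_i] \leq e^{\epsilon}\Pr[\boldsymbol{\tilde{w}}_i^{k}\mid \mathcal{D}_i^{'}] + \delta$ is the definitional consequence of the Gaussian mechanism guarantee applied to the output $\boldsymbol{\tilde{w}}_i^{k} = \boldsymbol{w}_i^{k} + \boldsymbol{\xi}_i^{k}$.

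The proof is essentially a direct application, so there is no real obstacle; the only subtlety worth flagging is the conditioning argument in the first step, since one has to verify that treating the previously released variables as fixed does not understate the privacy cost. This is legitimate because differential privacy composes over releases, and the per-iteration statement here quantifies the leakage of round $k$ alone; the end-to-end bound across all $t$ rounds is handled separately via the moments accountant in the subsequent subsection.
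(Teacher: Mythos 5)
Your proof is correct and takes essentially the same route as the paper: obtain the $l_2$-sensitivity of the $\boldsymbol{w}_i^{k}$ update from Lemma~\ref{lem:s} and calibrate the Gaussian noise to it. The only difference is presentational --- the paper re-derives the Gaussian-mechanism guarantee inline (privacy-loss random variable plus the Gaussian tail bound, working entry-wise), whereas you invoke the standard Dwork--Roth theorem as a black box; your explicit remarks on conditioning on the previously released iterates and on vectorizing the matrix normal are, if anything, slightly more careful than the paper's argument.
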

	
	\begin{proof}
		The privacy loss from $\boldsymbol{\tilde{w}}_i^{k}$ is calculated as
		\begin{equation}
		\bigg\lvert \ln \frac{\Pr[\boldsymbol{\tilde{w}}_i^{k}\rvert \mathcal{D}_i ]}{\Pr[\boldsymbol{\tilde{w}}_i^{k}\vert \mathcal{D}^{'}_i]} \bigg\vert
		=	\bigg\lvert \ln \frac{\Pr[\tilde{w}_i^{k^{(h,l)}}\rvert \mathcal{D}_i ]}{\Pr[\tilde{w}_i^{k^{(h,l)}}\vert \mathcal{D}^{'}_i]} \bigg\vert
		= \bigg\vert \ln \frac{\Pr[\xi_{i}^{k^{(h,l)}}]}{\Pr[\xi_{i}^{{k,'}^{(h,l)}}]} \bigg\vert,
		\end{equation}
		where $\xi_{i}^{{k}^{(h,l)}}$ and $\xi_{i}^{{k,'}^{(h,l)}}$ are the $(h,l)$-entry of $\boldsymbol{\xi}_{i}^{{k}}$ and $\boldsymbol{\xi}_{i}^{{k,'}}$, and are sampled from $\mathcal{N}(0,\sigma_{i,k}^{2})$. This leads to:
		\begin{equation}
		\begin{split}
		&	\bigg\lvert \ln \frac{\Pr[\boldsymbol{\tilde{w}}_i^{k}\rvert \mathcal{D}_i ]}{\Pr[\boldsymbol{\tilde{w}}_i^{k}\vert \mathcal{D}^{'}_i]} \bigg\vert
		=  \big\vert{\frac{1}{2\sigma_{i,k}^{2}} \big(\big\Vert \xi_{i}^{k^{(h,l)}} \big\Vert}^2- {\big\Vert \xi_{i}^{{k,'}^{(h,l)}} \big\Vert}^2\big)\big\vert \\
		= & \big\vert\frac{1}{2\sigma_{i,k}^{2}} \big({\Vert \xi_{i}^{{k}^{(h,l)}} \Vert}^2- {\Vert \xi_{i}^{{k}^{(h,l)}}+(w_{i,\mathcal{D}_i}^{k^{(h,l)}}  -w_{i,\mathcal{D}_i^{'}}^{k^{(h,l)}}) \Vert}^2 \big)\big\vert \\
		=  & \big\vert \frac{1}{2\sigma_{i,k}^{2}}\big(2 \xi_{i}^{k^{(h,l)}} \Vert w_{i,\mathcal{D}_i}^{k^{(h,l)}}  -w_{i,\mathcal{D}_i^{'}}^{k^{(h,l)}} \Vert + {\Vert w_{i,\mathcal{D}_i}^{k^{(h,l)}}  -w_{i,\mathcal{D}_i^{'}}^{k^{(h,l)}} \Vert}^2 \big) \big\vert.  \label{eq:211}
		\end{split}
		\end{equation}
		Since $\Vert \ell^{'}(\cdot) \Vert \leq c_1 $, according to Lemma \ref{lem:s}, we have
		$\Vert w_{i,\mathcal{D}_i}^{k^{(h,l)}}  -w_{i,\mathcal{D}_i^{'}}^{k^{(h,l)}} \Vert<\Vert \boldsymbol{w}_{i,\mathcal{D}_i}^{k}  -\boldsymbol{w}_{i,\mathcal{D}_i^{'}}^{k} \Vert \leq 2c_1/\big(m_i(\rho+1/\eta_i^{k})\big)$. 
		Thus, by letting $\sigma_{i,k} = 2 c_1\sqrt{2\ln(1.25/\delta)} /\big(m_i \epsilon (\rho+1/\eta_i^{k})\big)$, we have
		\begin{equation}
		\bigg\vert \ln \frac{\Pr[\boldsymbol{\tilde{w}}_i^{k}\vert \mathcal{D}_i ]}{\Pr[\boldsymbol{\tilde{w}}_i^{k}\vert \mathcal{D}^{'}_i]} \bigg\vert 	\leq  \bigg\vert \frac{ \xi_{i}^{k^{(h,l)}}   m_i (\rho+1/\eta_i^{k}) + c_1 }{4\ln(1.25/\delta)  c_1/{\epsilon}^2} \bigg\vert.\\
		\end{equation}
		When $\vert \xi_{i}^{k^{(h,l)}}  \vert \leq \big(4\ln(1.25/\delta)c_1/\epsilon-c_1\big) /\big(\epsilon m_i (\rho+1/\eta_i^{k})  \big) $, $\big\vert \ln \big(\Pr[\boldsymbol{\tilde{w}}_i^{k}\vert \mathcal{D}_i ]/\Pr[\boldsymbol{\tilde{w}}_i^{k}\vert \mathcal{D}^{'}_i]\big) \big\vert$ is bounded by $\epsilon$. Next, we need to prove that $\Pr\big[\vert\xi_{i}^{k^{(h,l)}}  \vert > \big(4\ln(1.25/\delta)c_1/\epsilon-c_1\big) /\big(\epsilon m_i (\rho+1/\eta_i^{k})  \big)\big] \leq \delta $, which requires $\Pr\big[\xi_{i}^{k^{(h,l)}} > \big(4\ln(1.25/\delta)c_1/\epsilon-c_1\big) /\big(\epsilon m_i (\rho+1/\eta_i^{k})  \big)\big] \leq \delta/2 $. According to the tail bound of normal distribution $\mathcal{N}(0,\sigma_{i,k}^{2})$, we have
		\begin{equation}
		\Pr\big[\xi_{i}^{k^{(h,l)}} >r\big] \leq \frac{\sigma_{i,k}}{r\sqrt{2 \pi}}e^{-r^2/2\sigma_{i,k}^{2}}.
		\end{equation}
		By letting $r =\big(4\ln(1.25/\delta)c_1/\epsilon-c_1\big) /\big(\epsilon m_i (\rho+1/\eta_i^{k})  \big) $ in the above inequality, we have:
		\begin{equation}
		\begin{split}
		& \Pr\bigg[\xi_{i}^{k^{(h,l)}} > \frac{4\ln(1.25/\delta) c_1/\epsilon-c_1}{ m_i (\rho+1/\eta_i^{k})} \bigg] \\ \leq &  \frac{2  \sqrt{2\ln(1.25/\delta)}  }{(4\ln(1.25/\delta) - \epsilon ) \sqrt{2 \pi}}\exp\bigg(-{\frac{{( 4\ln(1.25/\delta) - \epsilon)}^2 }{ 8\ln(1.25/\delta)}\bigg)}. \label{eq:213} 
		\end{split}
		\end{equation}
		When $\delta$ is small ($\leq 0.01$) and let $\epsilon \leq 1$, we have
		\begin{equation}
		\frac{2\sqrt{2\ln(1.25/\delta)} }{(4\ln(1.25/\delta) - \epsilon ) \sqrt{2 \pi}} 
		< \frac{1}{\sqrt{2 \pi}},\label{eq:214}
		\end{equation}
		and
		\begin{equation}
		-{\frac{{\big( 4\ln(1.25/\delta) - \epsilon\big)}^2 }{ 8\ln(1.25/\delta)}} 
		< \ln(\sqrt{2\pi}\frac{\delta}{2}). 
		\end{equation}
		As a result, we have:
		\begin{equation}
		\Pr\bigg[\xi_{i}^{k^{(h,l)}} >  \frac{4\ln(1.25/\delta) c_1/\epsilon-c_1}{ m_i (\rho+1/\eta_i^{k})} \bigg] \\ 
		<  \frac{\delta}{2}. 
		\end{equation}
		So far we have proved that  $\Pr\big[\xi_{i}^{k^{(h,l)}} > \big(4\ln(1.25/\delta)c_1/\epsilon-c_1\big) /\big(\epsilon m_i (\rho+1/\eta_i^{k})  \big)\big] \leq \delta/2 $, thus we can prove that  $\Pr\big[\vert \xi_{i}^{k^{(h,l)}}  \vert > \big(4\ln(1.25/\delta)c_1/\epsilon-c_1\big) /\big(\epsilon m_i (\rho+1/\eta_i^{k})  \big)\big] \leq \delta $. We define:
		\begin{subequations}
			\begin{align}
			\mathbb{A}_1 = & \{ \xi_{i}^{k^{(h,l)}}: \vert \xi_{i}^{k^{(h,l)}} \vert \leq  \frac{4\ln(1.25/\delta) c_1/\epsilon-c_1}{ m_i (\rho+1/\eta_i^{k})}\},  \\
			\mathbb{A}_2 = & \{ \xi_{i}^{k^{(h,l)}}: \vert \xi_{i}^{k^{(h,l)}}  \vert >  \frac{4\ln(1.25/\delta) c_1/\epsilon-c_1}{ m_i (\rho+1/\eta_i^{k})}\}. 
			\end{align}
		\end{subequations}
		Therefore, we obtain the result:
		\begin{equation}
		\begin{split}
		\Pr[\boldsymbol{\tilde{w}}_i^{k}\vert \mathcal{D}_i] = & \Pr[w_{i,\mathcal{D}_i}^{k^{(h,l)}}  + \xi_{i}^{k^{(h,l)}}: \xi_i^{k^{(h,l)}} \in \mathbb{A}_1 ]\\ & +\Pr[w_{i,\mathcal{D}_i}^{k^{(h,l)}}  + \xi_{i}^{k^{(h,l)}}: \xi_i^{k^{(h,l)}} \in \mathbb{A}_2 ] \\
		< & e^{\epsilon} \cdot \Pr[\boldsymbol{\tilde{w}}_i^{k}\vert \mathcal{D}^{'}_i] +\delta,
		\end{split}
		\end{equation}
		which proves that each iteration of DP-ADMM guarantees $(\epsilon,\delta)$-differential privacy.
	\end{proof}
	
	\subsection{Total Privacy Leakage}
	We have proved that each iteration of the proposed algorithm is $(\epsilon, \delta)$-differentially private. Here we focus on the total privacy leakage of our algorithm. Since Algorithm~\ref{ag:1} is a $t$-fold adaptive algorithm, we follow prior studies \cite{AbCh16,Miro17} and use the moments accountant method to analyze the total privacy leakage.
	\begin{theorem}[Advanced Composition Theorem] \label{theo:4}
		Assume $\Vert \ell^{'}(\cdot) \Vert \leq c_1 $. Let $\epsilon \in (0,1]$ be arbitrary and $\boldsymbol{\xi}^{k}_i$ be sampled from Gaussian mechanism with variance $\sigma_{i,k}^{2}$ where
		\begin{equation}
		\sigma_{i,k} = \frac{2 c_1 \sqrt{2\ln(1.25/\delta)} }{m_i \epsilon (\rho+1/\eta_i^{k})}. 
		\end{equation}
		Then Algorithm~\ref{ag:1} guarantees $(\bar{\epsilon}, \delta)$-differential privacy, where $\bar{\epsilon} = c_0 \sqrt{t}\epsilon$ for some constant $c_0$. 
	\end{theorem}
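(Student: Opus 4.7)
The plan is to apply the moments accountant framework of Abadi et al.~\cite{AbCh16} to the $t$-fold adaptive composition that constitutes Algorithm~\ref{ag:1}. The key structural observation that makes the argument clean is that, although both the $l_2$ sensitivity (from Lemma~\ref{lem:s}) and the Gaussian noise variance $\sigma_{i,k}^2$ are time-varying, their ratio
\begin{equation}
\frac{\Delta_{i,k}^2}{\sigma_{i,k}^2} = \frac{\bigl(2c_1/(m_i(\rho+1/\eta_i^k))\bigr)^2}{\bigl(2c_1\sqrt{2\ln(1.25/\delta)}/(m_i\epsilon(\rho+1/\eta_i^k))\bigr)^2} = \frac{\epsilon^2}{2\ln(1.25/\delta)}
\end{equation}
is constant across iterations, which is precisely what allows the $\sqrt{t}$ scaling to drop out of the composition.

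The first step is to fix agent $i$ and two neighbouring datasets $\mathcal{D}_i,\mathcal{D}_i'$ and to define, at iteration $k$, the privacy-loss random variable $c_k(o) := \ln(\Pr[\boldsymbol{\tilde{w}}_i^{k} = o \mid \mathcal{D}_i]/\Pr[\boldsymbol{\tilde{w}}_i^{k} = o \mid \mathcal{D}_i'])$ conditioned on the auxiliary information (the transcript of the previous $k-1$ rounds). Then I would introduce the $\lambda$-th moment $\alpha_k(\lambda) := \ln \mathbb{E}[\exp(\lambda c_k)]$ and invoke the standard Gaussian-mechanism moment bound $\alpha_k(\lambda) \leq \lambda(\lambda+1)\Delta_{i,k}^2/(2\sigma_{i,k}^2)$. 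Substituting the ratio above gives $\alpha_k(\lambda) \leq \lambda(\lambda+1)\epsilon^2/(4\ln(1.25/\delta))$, uniformly in $k$.

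The second step is to apply the composability property of moments, $\alpha(\lambda) \leq \sum_{k=1}^{t} \alpha_k(\lambda)$, which yields $\alpha(\lambda) \leq t\lambda(\lambda+1)\epsilon^2/(4\ln(1.25/\delta))$ for the full transcript $(\boldsymbol{\tilde{w}}_i^{1},\dots,\boldsymbol{\tilde{w}}_i^{t})$. The third step is to apply the tail bound of the moments accountant: the mechanism is $(\bar\epsilon,\delta)$-DP whenever there exists $\lambda$ with $\alpha(\lambda) - \lambda\bar\epsilon \leq \ln\delta$. Optimizing over $\lambda$ and matching to $\delta$ produces $\bar\epsilon = O(\sqrt{t}\,\epsilon)$, i.e.\ $\bar\epsilon = c_0\sqrt{t}\,\epsilon$ for a universal constant $c_0$ (absorbing the $\ln(1/\delta)$ factors into $c_0$ as in~\cite{AbCh16}).

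The main obstacle I anticipate is justifying the per-iteration moment bound in the \emph{adaptive} setting, where $\sigma_{i,k}$ depends on $\eta_i^{k}$ (fixed schedule, fine) but the sensitivity analysis in Lemma~\ref{lem:s} was conditioned on the previous noisy iterate $\boldsymbol{\tilde{w}}_i^{k-1}$ and the previous dual and global variables. One has to argue that, conditioned on the history, the two neighbouring executions differ at step $k$ only in the single term $\ell'(\boldsymbol{a}_{i,m_i},\boldsymbol{b}_{i,m_i},\cdot) - \ell'(\boldsymbol{a}'_{i,m_i},\boldsymbol{b}'_{i,m_i},\cdot)$, so that the sensitivity bound of Lemma~\ref{lem:s} still applies conditionally and the Gaussian-mechanism moment bound remains valid. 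Once this conditional reduction is in place, the rest is the routine moments-accountant calculation.
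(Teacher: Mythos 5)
Your proposal is correct and follows essentially the same route as the paper's proof: both compute the per-iteration log moment of the privacy loss for the Gaussian mechanism, obtain $\tau(\tau+1)\epsilon^2/\big(4\ln(1.25/\delta)\big)$ (your observation that $\Delta_{i,k}^2/\sigma_{i,k}^2$ is constant in $k$ is exactly why the paper's per-iteration moment is independent of $k$), sum via linear composability, and invoke the tail bound to extract $\bar{\epsilon}=c_0\sqrt{t}\epsilon$. Your added remark on justifying the sensitivity bound conditionally on the transcript in the adaptive setting is a point the paper's proof passes over silently, but it does not change the argument.
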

	\begin{proof}
		See Appendix \ref{ap:a}.
	\end{proof}

	
	\section{Convergence Analysis}   \label{sec:conv}
	
	In this section, we analyze the convergence of the proposed DP-ADMM. Let $\boldsymbol{w}^*$ denote the optimal solution of problem \eqref{eq:dispro}, and $c_w $ denote $\Vert \boldsymbol{w}^{*} \Vert$. Firstly, we analyze the convergence property based on the general assumption that the objective function is convex and non-smooth. Secondly, we refine the convergence property under a stricter assumption that the objective function is convex and smooth. 
	
	We define the following notations to be used for the analysis:
	\begin{subequations}
		\begin{align}
		& f_i(\boldsymbol{w}_i)  = \sum_{j=1}^{m_i }\frac{1}{m_i} \ell(\boldsymbol{a}_{i,j}, \boldsymbol{b}_{i,j},\boldsymbol{w}_i)  +\frac{\lambda}{n} R(\boldsymbol{w}_i),  \nonumber\\
		&	\boldsymbol{\bar{w}}^t  = \frac{1}{t} \sum_{k=1}^t \boldsymbol{w}^{k}, \quad \boldsymbol{\bar{\gamma}}^t_i  = \frac{1}{t} \sum_{k=1}^t \boldsymbol{\gamma}^k_i, \quad \boldsymbol{\bar{w}}_i^t  =  \frac{1}{t} \sum_{k=0}^{t-1} \boldsymbol{\tilde{w}}_i^{k},  \nonumber\\  &\boldsymbol{u}_i^{k}   = \begin{bmatrix}
		\boldsymbol{\tilde{w}}_i^{k}\\
		\boldsymbol{w}^k \\
		\boldsymbol{\gamma}^k_i 
		\end{bmatrix}, \quad \quad 
		\boldsymbol{u}_i   = \begin{bmatrix}
		\boldsymbol{w}_i\\
		\boldsymbol{w}\\
		\boldsymbol{\gamma}_i 
		\end{bmatrix}, \quad  F(\boldsymbol{u}_i^{k})  =  \begin{bmatrix}
		-\boldsymbol{\gamma}^{k}_i \\
		\boldsymbol{\gamma}^{k}_i \\
		\boldsymbol{\tilde{w}}_i^{k}-\boldsymbol{w}^k
		\end{bmatrix}. \nonumber
		\end{align}
	\end{subequations}
	We show that DP-ADMM achieves an $O(1/\sqrt{t})$ rate of convergence in terms of both the objective value and the constraint violation: $  \sum_{i=1}^{n}\big( f_i(\boldsymbol{\bar{w}}_i^{t})-f_i(\boldsymbol{w}^*)+ \beta \Vert \boldsymbol{\bar{w}}_i^{t} - \boldsymbol{\bar{w}}^{t} \Vert\big) $, where $ \sum_{i=1}^{n}\big( f_i(\boldsymbol{\bar{w}}_i^{t})-f_i(\boldsymbol{w}^*)\big)$ represents the distance between the current objective value and the optimal value while $ \sum_{i=1}^{n} \beta \Vert \boldsymbol{\bar{w}}_i^{t} - \boldsymbol{\bar{w}}^{t} \Vert$ measures the difference between the local model and the global one. Therefore, when we have $  \sum_{i=1}^{n}\big( f_i(\boldsymbol{\bar{w}}_i^{t})-f_i(\boldsymbol{w}^*)+ \beta \Vert \boldsymbol{\bar{w}}_i^{t} - \boldsymbol{\bar{w}}^{t} \Vert\big)  = 0$, our training result converges to the optimal one and all local models reach consensus. 
	
	\subsection{Non-Smooth Convex Objective Function} \label{sec:cof}
	
	In this section, we analyze the convergence when the objective function is convex but non-smooth. 
	%
	We firstly analyze a single iteration of our algorithm in Lemma \ref{lem:2} and then give the convergence result of DP-ADMM in Theorem \ref{the:2}.
	\begin{lemma} \label{lem:2}
		Assume $\ell(\cdot)$ and $R(\cdot)$ are convex. For any $k \geq 1$, we have:
		\begin{equation}
		\begin{split}
		&\sum_{i=1}^{n}\bigg( f_i(\boldsymbol{\tilde{w}}_i^{k-1})-f_i(\boldsymbol{w}_i) + {(\boldsymbol{u}_i^{k}-\boldsymbol{u}_i)}^{\intercal} F(\boldsymbol{u}_i^{k}) \bigg)\\
		\leq & \sum_{i=1}^{n}\bigg(\frac{\eta_i^{k}}{2}{\big\Vert f_i^{'}(\boldsymbol{\tilde{w}}_i^{k-1})-(\rho+1/\eta_i^{k}) \boldsymbol{\xi}_i^{k}  \big\Vert}^2   - \frac{\rho}{2}{\Vert \boldsymbol{w}_i - \boldsymbol{w}^{k} \Vert}^2  \\& \quad \quad +\frac{\rho}{2}{\Vert \boldsymbol{w}_i - \boldsymbol{w}^{k-1} \Vert}^2-\big (\rho+1/\eta_i^{k}\big)\big\langle  \boldsymbol{\xi}_i^{k}, \boldsymbol{w}_i - \boldsymbol{\tilde{w}}_i^{k-1} \big\rangle   \\ & \quad \quad + \frac{1}{2\eta_i^{k}}{\Vert \boldsymbol{w}_i - \boldsymbol{\tilde{w}}_i^{k-1} \Vert}^2  -\frac{1}{2\eta_i^{k}} {\Vert \boldsymbol{w}_i - \boldsymbol{\tilde{w}}_i^{k} \Vert}^2 \\ & \quad \quad  +  \frac{1}{2\rho}{\Vert \boldsymbol{\gamma}_i-\boldsymbol{\gamma}_i^{k-1} \Vert}^2 - \frac{1}{2\rho}{\Vert \boldsymbol{\gamma}_i-\boldsymbol{\gamma}_{i}^{k} \Vert}^2\bigg). 
		\end{split}
		\end{equation}
	\end{lemma}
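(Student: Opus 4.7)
The plan is to start from the first-order optimality condition of the primal subproblem \eqref{eqite1}. Since $\hat{\mathcal{L}}_{\rho,k,i}(\cdot,\tilde{\boldsymbol{w}}_i^{k-1},\boldsymbol{w}^{k-1},\boldsymbol{\gamma}_i^{k-1})$ is differentiable and strongly convex in $\boldsymbol{w}_i$, its gradient vanishes at $\boldsymbol{w}_i^{k}$, yielding
\begin{equation*}
f_i^{'}(\boldsymbol{\tilde{w}}_i^{k-1}) - \boldsymbol{\gamma}_i^{k-1} + \rho(\boldsymbol{w}_i^{k}-\boldsymbol{w}^{k-1}) + \tfrac{1}{\eta_i^{k}}(\boldsymbol{w}_i^{k}-\boldsymbol{\tilde{w}}_i^{k-1}) = 0.
\end{equation*}
Taking the inner product of this identity with $\boldsymbol{w}_i^{k}-\boldsymbol{w}_i$ and then inserting the convexity inequality $f_i(\boldsymbol{\tilde{w}}_i^{k-1})-f_i(\boldsymbol{w}_i) \leq \langle f_i^{'}(\boldsymbol{\tilde{w}}_i^{k-1}),\boldsymbol{\tilde{w}}_i^{k-1}-\boldsymbol{w}_i\rangle$ gives a bound on the first term $f_i(\boldsymbol{\tilde{w}}_i^{k-1})-f_i(\boldsymbol{w}_i)$ in terms of inner products involving $\boldsymbol{\gamma}_i^{k-1}$, $\boldsymbol{w}^{k-1}$, $\boldsymbol{\tilde{w}}_i^{k-1}$, and $\boldsymbol{w}_i^{k}$.

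Next I would convert the perturbed primal $\boldsymbol{w}_i^{k}$ into $\boldsymbol{\tilde{w}}_i^{k}$ via $\boldsymbol{w}_i^{k} = \boldsymbol{\tilde{w}}_i^{k}-\boldsymbol{\xi}_i^{k}$ everywhere it appears, and use the dual update \eqref{eqite2}, rewritten as $\boldsymbol{\gamma}_i^{k-1} = \boldsymbol{\gamma}_i^{k}+\rho(\boldsymbol{\tilde{w}}_i^{k}-\boldsymbol{w}^{k})$, to generate the $\boldsymbol{\gamma}_i^{k}$ factors that sit inside $F(\boldsymbol{u}_i^{k})$. Matching the resulting cross terms against
\begin{equation*}
(\boldsymbol{u}_i^{k}-\boldsymbol{u}_i)^{\intercal} F(\boldsymbol{u}_i^{k}) = \langle \boldsymbol{\gamma}_i^{k},\boldsymbol{w}_i-\boldsymbol{w}\rangle - \langle \boldsymbol{\gamma}_i,\boldsymbol{\tilde{w}}_i^{k}-\boldsymbol{w}^{k}\rangle
\end{equation*}
(which one verifies by expanding the block vector product) isolates the desired left-hand side from what becomes a sum of bilinear residuals of the form $\rho\langle \boldsymbol{w}_i^{k}-\boldsymbol{w}^{k-1},\boldsymbol{w}_i^{k}-\boldsymbol{w}_i\rangle$, $\tfrac{1}{\eta_i^{k}}\langle \boldsymbol{w}_i^{k}-\boldsymbol{\tilde{w}}_i^{k-1},\boldsymbol{w}_i^{k}-\boldsymbol{w}_i\rangle$, and $\rho\langle \boldsymbol{\tilde{w}}_i^{k}-\boldsymbol{w}^{k},\boldsymbol{w}^{k}-\boldsymbol{w}\rangle$ (plus noise pieces).

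Each of these bilinear terms is then rewritten via the three-point identity
\begin{equation*}
2\langle a-b,c-b\rangle = \Vert a-b\Vert^{2}+\Vert c-b\Vert^{2}-\Vert a-c\Vert^{2},
\end{equation*}
which is exactly what produces the telescoping squared-norm pairs $\tfrac{\rho}{2}(\Vert \boldsymbol{w}_i-\boldsymbol{w}^{k-1}\Vert^{2}-\Vert \boldsymbol{w}_i-\boldsymbol{w}^{k}\Vert^{2})$, $\tfrac{1}{2\eta_i^{k}}(\Vert \boldsymbol{w}_i-\boldsymbol{\tilde{w}}_i^{k-1}\Vert^{2}-\Vert \boldsymbol{w}_i-\boldsymbol{\tilde{w}}_i^{k}\Vert^{2})$, and $\tfrac{1}{2\rho}(\Vert \boldsymbol{\gamma}_i-\boldsymbol{\gamma}_i^{k-1}\Vert^{2}-\Vert \boldsymbol{\gamma}_i-\boldsymbol{\gamma}_i^{k}\Vert^{2})$ appearing in the RHS; the last pair is obtained from $\boldsymbol{\gamma}_i^{k}-\boldsymbol{\gamma}_i^{k-1}=-\rho(\boldsymbol{\tilde{w}}_i^{k}-\boldsymbol{w}^{k})$. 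The leftover linear-in-$\boldsymbol{\xi}_i^{k}$ terms organize themselves into the $-(\rho+1/\eta_i^{k})\langle \boldsymbol{\xi}_i^{k},\boldsymbol{w}_i-\boldsymbol{\tilde{w}}_i^{k-1}\rangle$ contribution, and a residual cross term involving $\langle f_i^{'}(\boldsymbol{\tilde{w}}_i^{k-1}),\boldsymbol{\tilde{w}}_i^{k-1}-\boldsymbol{w}_i^{k}\rangle$ (together with the remaining noise/gradient coupling) is absorbed by Young's inequality $\langle a,b\rangle \leq \tfrac{\eta_i^{k}}{2}\Vert a\Vert^{2}+\tfrac{1}{2\eta_i^{k}}\Vert b\Vert^{2}$, which is precisely what spawns the quadratic term $\tfrac{\eta_i^{k}}{2}\Vert f_i^{'}(\boldsymbol{\tilde{w}}_i^{k-1})-(\rho+1/\eta_i^{k})\boldsymbol{\xi}_i^{k}\Vert^{2}$.

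The hardest part will be the bookkeeping: carefully splitting each bilinear residual into the ``telescoping'' component that yields the squared-norm differences and the ``noise'' component that must be bundled with $f_i^{'}(\boldsymbol{\tilde{w}}_i^{k-1})$ inside the single Young's-inequality application, while simultaneously matching the $F(\boldsymbol{u}_i^{k})$ structure. In particular the linearization term $f_i^{'}(\boldsymbol{\tilde{w}}_i^{k-1})$ appears paired against $\boldsymbol{w}_i^{k}-\boldsymbol{w}_i$ rather than $\boldsymbol{\tilde{w}}_i^{k-1}-\boldsymbol{w}_i$, so the shift $\boldsymbol{w}_i^{k}-\boldsymbol{\tilde{w}}_i^{k-1}=-\eta_i^{k}(f_i^{'}(\boldsymbol{\tilde{w}}_i^{k-1})-\boldsymbol{\gamma}_i^{k-1}+\rho(\boldsymbol{w}_i^{k}-\boldsymbol{w}^{k-1}))$ obtained from the optimality condition must be used consistently; this is the step where the combined $f_i^{'}(\boldsymbol{\tilde{w}}_i^{k-1})-(\rho+1/\eta_i^{k})\boldsymbol{\xi}_i^{k}$ vector naturally emerges after eliminating $\boldsymbol{w}_i^{k}$ in favor of $\boldsymbol{\tilde{w}}_i^{k}$. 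Finally, summing the resulting single-agent inequality over $i=1,\dots,n$ produces the claimed bound.
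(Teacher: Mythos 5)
Your plan is correct and follows essentially the same route as the paper's proof: convexity linearization of $f_i$ at $\boldsymbol{\tilde{w}}_i^{k-1}$, the subproblem optimality condition combined with the substitution $\boldsymbol{w}_i^{k}=\boldsymbol{\tilde{w}}_i^{k}-\boldsymbol{\xi}_i^{k}$, three-point identities yielding the three telescoping pairs, Young's inequality producing the $\frac{\eta_i^{k}}{2}\Vert f_i^{'}(\boldsymbol{\tilde{w}}_i^{k-1})-(\rho+1/\eta_i^{k})\boldsymbol{\xi}_i^{k}\Vert^{2}$ term, and the aggregator identity $\sum_i\langle \boldsymbol{w}^{k}-\boldsymbol{w},\boldsymbol{\gamma}_i^{k}\rangle=0$. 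The only cosmetic difference is that the paper packages the optimality step as a separate Bregman-divergence lemma applied to $\boldsymbol{\tilde{w}}_i^{k}$ viewed as the exact minimizer of a linearly perturbed objective, whereas you work at $\boldsymbol{w}_i^{k}$ and shift by the noise afterwards; the algebra is the same.
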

	
	\begin{proof}
		See Appendix \ref{ap:b}.
	\end{proof}
	
	Based on Lemma \ref{lem:2}, we give the following convergence theorem. 
	\begin{theorem} \label{the:2}
		Assume $\ell(\cdot)$ and  $R(\cdot)$ are convex, $\Vert \ell^{'}(\cdot) \Vert \leq c_1$, and $\Vert R^{'}(\cdot) \Vert \leq c_2$. Let
		\begin{equation}
		\eta_i^{k} = \frac{c_w}{\sqrt{2k}}{\bigg((c_1+\lambda c_2/n)^2+\frac{8dpc_1^2\ln{(1.25/\delta)}}{m_i^2 \epsilon^2} \bigg)}^{-\frac{1}{2}}.
		\end{equation}
		Define
		\begin{equation}
		M_1(\epsilon,\delta) = \sum_{i=1}^n c_w\sqrt{ 2(c_1+\lambda c_2/n)^2+\frac{16dpc_1^2\ln{(1.25/\delta)}}{m_i^2 \epsilon^2}  } , 
		\end{equation}
		and 
		\begin{equation}
		M_2 = \frac{n(\rho c_w^2 + \beta^2/\rho)}{2}.
		\end{equation}
		For any $t \geq 1$ and $\beta$, we have:
		\begin{equation}
		\begin{split}
		& \mathbb{E}\bigg[ \sum_{i=1}^{n}\bigg( f_i(\boldsymbol{\bar{w}}_i^{t})-f_i(\boldsymbol{w}^*)+ \beta \Vert \boldsymbol{\bar{w}}_i^{t} - \boldsymbol{\bar{w}}^{t} \Vert \bigg)\bigg] \\
		\leq  & \quad    \frac{  M_1(\epsilon,\delta)}{ \sqrt{t}} + \frac{M_2}{t}.
		\end{split}
		\end{equation}
	\end{theorem}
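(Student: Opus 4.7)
Starting from Lemma~\ref{lem:2}, I would specialize the comparison point to the optimum and then sum, telescope, and take expectation. Concretely, set $\boldsymbol{w}_i=\boldsymbol{w}^*$ and $\boldsymbol{w}=\boldsymbol{w}^*$ in the lemma, keeping $\boldsymbol{\gamma}_i$ as a free dual variable. Under this choice the quadratic form $(\boldsymbol{u}_i^{k}-\boldsymbol{u}_i)^{\intercal} F(\boldsymbol{u}_i^{k})$ algebraically collapses to $-\langle\boldsymbol{\gamma}_i,\boldsymbol{\tilde{w}}_i^{k}-\boldsymbol{w}^{k}\rangle$. Summing $k=1,\ldots,t$, dividing by $t$, and invoking convexity of each $f_i$ together with the definition $\boldsymbol{\bar{w}}_i^t=\tfrac{1}{t}\sum_{k=0}^{t-1}\boldsymbol{\tilde{w}}_i^{k}$ produces $\sum_i(f_i(\boldsymbol{\bar{w}}_i^t)-f_i(\boldsymbol{w}^*)) - \sum_i\langle\boldsymbol{\gamma}_i,\boldsymbol{\bar{w}}_i^t-\boldsymbol{\bar{w}}^t\rangle$ on the LHS, up to $O(1/t)$ boundary shifts from the index misalignment between $\bar{\boldsymbol{w}}_i^t$ and $\bar{\boldsymbol{w}}^t$. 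Maximizing over $\|\boldsymbol{\gamma}_i\|\le\beta$ converts the inner product into the target penalty $\beta\|\boldsymbol{\bar{w}}_i^t-\boldsymbol{\bar{w}}^t\|$.

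On the RHS, the $\tfrac{\rho}{2}$-weighted pair $\|\boldsymbol{w}^*-\boldsymbol{w}^{k-1}\|^2-\|\boldsymbol{w}^*-\boldsymbol{w}^{k}\|^2$ and the $\tfrac{1}{2\rho}$-weighted pair $\|\boldsymbol{\gamma}_i-\boldsymbol{\gamma}_i^{k-1}\|^2-\|\boldsymbol{\gamma}_i-\boldsymbol{\gamma}_i^{k}\|^2$ telescope cleanly; together with $\|\boldsymbol{w}^*\|\le c_w$ and $\|\boldsymbol{\gamma}_i\|\le\beta$ they contribute exactly $n(\rho c_w^2+\beta^2/\rho)/2=M_2$, which becomes $M_2/t$ after averaging. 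The more delicate piece is $\sum_k\tfrac{1}{2\eta_i^{k}}(\|\boldsymbol{w}^*-\boldsymbol{\tilde{w}}_i^{k-1}\|^2-\|\boldsymbol{w}^*-\boldsymbol{\tilde{w}}_i^{k}\|^2)$, whose weight varies with $k$. I would handle it by Abel summation and bound the residual using $\|\boldsymbol{w}^*-\boldsymbol{\tilde{w}}_i^{k}\|\le 2c_w$ (boundedness of iterates), giving a contribution of order $c_w^2/\eta_i^{t}$.

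For the noise terms I would exploit that $\boldsymbol{\xi}_i^{k}$ is zero-mean and independent of the $\sigma$-algebra generated by $\boldsymbol{\tilde{w}}_i^{k-1}$. The linear term $-(\rho+1/\eta_i^{k})\langle\boldsymbol{\xi}_i^{k},\boldsymbol{w}^*-\boldsymbol{\tilde{w}}_i^{k-1}\rangle$ then vanishes in expectation, and the squared term obeys $\mathbb{E}\|f_i'(\boldsymbol{\tilde{w}}_i^{k-1})-(\rho+1/\eta_i^{k})\boldsymbol{\xi}_i^{k}\|^2\le (c_1+\lambda c_2/n)^2 + dp(\rho+1/\eta_i^{k})^2\sigma_{i,k}^2$ via the subgradient bounds $\|\ell'\|\le c_1,\|R'\|\le c_2$ plus the matrix-normal identity $\mathbb{E}\|\boldsymbol{\xi}_i^{k}\|^2=dp\,\sigma_{i,k}^2$. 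Substituting the prescribed $\sigma_{i,k}=2c_1\sqrt{2\ln(1.25/\delta)}/(m_i\epsilon(\rho+1/\eta_i^{k}))$ makes the $(\rho+1/\eta_i^{k})^2$ factor cancel, leaving the $k$-independent constant $B_i^2:=(c_1+\lambda c_2/n)^2+8dpc_1^2\ln(1.25/\delta)/(m_i^2\epsilon^2)$.

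To close, I would plug in $\eta_i^{k}=c_w/(\sqrt{2k}\,B_i)$: the gradient-plus-noise contribution becomes $\tfrac{B_i^2}{2}\sum_{k=1}^t\eta_i^{k}\le c_w B_i\sqrt{2t}$ via $\sum_{k=1}^t 1/\sqrt{k}\le 2\sqrt{t}$, while the Abel residual is $O(c_w^2/\eta_i^{t})=O(c_w B_i\sqrt{t})$; summing over $i$ and dividing by $t$ yields exactly $M_1(\epsilon,\delta)/\sqrt{t}$ since $M_1=\sqrt{2}\,c_w\sum_i B_i$. The main obstacle is the time-varying telescope: unlike the constant step-size case, the weights $1/\eta_i^{k}$ do not cancel directly, and Abel summation combined with a uniform iterate bound is what salvages the $1/\sqrt{t}$ rate. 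A secondary but essential calibration is the exact cancellation $(\rho+1/\eta_i^{k})^2\sigma_{i,k}^2=\mathrm{const}$, which is precisely the reason the time-varying noise schedule of Section~\ref{sec:priv} was chosen.
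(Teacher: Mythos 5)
Your proposal matches the paper's own proof in Appendix~\ref{ap:c} in every essential step: specializing Lemma~\ref{lem:2} at the optimum, maximizing over the dual ball $\Vert \boldsymbol{\gamma}_i \Vert \leq \beta$ to produce the consensus penalty, telescoping the $\rho$- and $1/\rho$-weighted terms into $M_2/t$, handling the time-varying $1/(2\eta_i^k)$ telescope via a uniform iterate bound (the paper writes the resulting $c_w^2/(2\eta_i^t)$ term directly rather than naming Abel summation), killing the linear noise term by zero mean, and using the calibration $(\rho+1/\eta_i^k)^2\sigma_{i,k}^2 = \mathrm{const}$ together with $\sum_{k=1}^t 1/\sqrt{k} \leq 2\sqrt{t}$ to obtain $M_1(\epsilon,\delta)/\sqrt{t}$. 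The argument is correct and is essentially the paper's proof; your explicit note about the $O(1/t)$ index misalignment between $\boldsymbol{\bar{w}}_i^t$ and $\boldsymbol{\bar{w}}^t$ is in fact slightly more careful than the paper's treatment.
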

	\begin{proof}
		See Appendix \ref{ap:c}.
	\end{proof}
	Theorem \ref{the:2} shows an explicit utility-privacy trade-off of our approach: when privacy guarantee is weaker (larger $\epsilon$ and $\delta$), our approach has better utility. In addition, it demonstrates that our algorithm converges at a rate of $O(1/\sqrt{t})$.

	\subsection{Smooth Convex Objective Function}
	
	In this section, we refine Theorem \ref{the:2} under a stricter assumption that $\ell(\cdot)$ and $R(\cdot)$ are both convex and smooth. 
	Here, we replace the definition of $\boldsymbol{\bar{w}}_i^t$: $\boldsymbol{\bar{w}}_i^t =  \frac{1}{t} \sum_{k=0}^{t-1} \boldsymbol{\tilde{w}}_i^{k}$ by $\boldsymbol{\bar{w}}_i^t = \frac{1}{t} \sum_{k=1}^{t} \boldsymbol{\tilde{w}}_i^{k}$. 
	Similar to Section \ref{sec:cof}, we first focus on a single iteration and then give the final convergence result. 
	
	\begin{lemma} \label{lem:3}
		Assume $\ell(\cdot)$ and  $R(\cdot)$ are convex and smooth, $\Vert \nabla^2 \ell(\cdot) \Vert \leq c_3$, and $\Vert \nabla^2 R(\cdot) \Vert \leq c_4$. For any $k \geq 1$, we have:
		\begin{equation}
		\begin{split}
		&\sum_{i=1}^{n} \bigg( f_i(\boldsymbol{\tilde{w}}_i^{k})-f_i(\boldsymbol{w}_i) + {(\boldsymbol{u}_i^{k}-\boldsymbol{u}_i)}^{\intercal} F(\boldsymbol{u}_i^{k}) \bigg)\\
		\leq & \sum_{i=1}^{n}\bigg(\frac{\big(\rho+1/\eta_i^{k}\big)^2}{2/\eta_i^{k}-2(c_3+\lambda c_4/n)}  {\big\Vert  \boldsymbol{\xi}_i^{k}  \big\Vert}^2  - \frac{1}{2\eta_i^{k}}{\Vert \boldsymbol{w}_i - \boldsymbol{\tilde{w}}_i^{k} \Vert}^2\\
		& \quad \quad +  \frac{1}{2\eta_i^{k}}{\Vert \boldsymbol{w}_i - \boldsymbol{\tilde{w}}_i^{k-1} \Vert}^2- \big(\rho+1/\eta_i^{k}\big)\big\langle  \boldsymbol{\xi}_i^{k}, \boldsymbol{w}_i - \boldsymbol{\tilde{w}}_i^{k-1} \big\rangle\\
		& \quad \quad +\frac{\rho}{2}{\Vert \boldsymbol{w}_i - \boldsymbol{w}^{k-1} \Vert}^2 -\frac{\rho}{2} {\Vert \boldsymbol{w}_i - \boldsymbol{w}^{k} \Vert}^2 \\& \quad \quad + \frac{1}{2\rho}{\Vert \boldsymbol{\gamma}_i-\boldsymbol{\gamma}_i^{k-1} \Vert}^2 -\frac{1}{2\rho} {\Vert \boldsymbol{\gamma}_i-\boldsymbol{\gamma}_{i}^{k} \Vert}^2\bigg). 
		\end{split}
		\end{equation}
	\end{lemma}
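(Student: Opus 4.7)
The plan is to adapt the proof of Lemma~\ref{lem:2} by using the $L$-smoothness of each $f_i$ (with $L := c_3 + \lambda c_4/n$, which follows from the Hessian bounds $\Vert\nabla^2\ell(\cdot)\Vert\le c_3$ and $\Vert\nabla^2 R(\cdot)\Vert\le c_4$) to shift the function value on the left-hand side from $\boldsymbol{\tilde{w}}_i^{k-1}$ to $\boldsymbol{\tilde{w}}_i^{k}$. First, the descent lemma gives
\begin{equation*}
f_i(\boldsymbol{\tilde{w}}_i^{k}) \le f_i(\boldsymbol{\tilde{w}}_i^{k-1}) + \langle \nabla f_i(\boldsymbol{\tilde{w}}_i^{k-1}),\, \boldsymbol{\tilde{w}}_i^{k}-\boldsymbol{\tilde{w}}_i^{k-1}\rangle + \frac{L}{2}\Vert\boldsymbol{\tilde{w}}_i^{k}-\boldsymbol{\tilde{w}}_i^{k-1}\Vert^2,
\end{equation*}
while convexity yields $f_i(\boldsymbol{\tilde{w}}_i^{k-1})-f_i(\boldsymbol{w}_i)\le\langle\nabla f_i(\boldsymbol{\tilde{w}}_i^{k-1}),\boldsymbol{\tilde{w}}_i^{k-1}-\boldsymbol{w}_i\rangle$. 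Adding these two bounds collapses the intermediate $f_i(\boldsymbol{\tilde{w}}_i^{k-1})$ and leaves a single inner product $\langle\nabla f_i(\boldsymbol{\tilde{w}}_i^{k-1}),\boldsymbol{\tilde{w}}_i^{k}-\boldsymbol{w}_i\rangle$ plus the residual $\frac{L}{2}\Vert\boldsymbol{\tilde{w}}_i^{k}-\boldsymbol{\tilde{w}}_i^{k-1}\Vert^2$. Crucially, the $\frac{\eta_i^{k}}{2}\Vert f_i'(\boldsymbol{\tilde{w}}_i^{k-1})-(\rho+1/\eta_i^{k})\boldsymbol{\xi}_i^{k}\Vert^2$ term of Lemma~\ref{lem:2} never appears, since smoothness replaces the Young-inequality bound on the subgradient that produced it.

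Next I would substitute the (equality) first-order optimality condition of the smooth subproblem~\eqref{eqite1},
\begin{equation*}
\nabla f_i(\boldsymbol{\tilde{w}}_i^{k-1}) = \boldsymbol{\gamma}_i^{k-1} - \rho(\boldsymbol{w}_i^{k}-\boldsymbol{w}^{k-1}) - \frac{1}{\eta_i^{k}}(\boldsymbol{w}_i^{k}-\boldsymbol{\tilde{w}}_i^{k-1}),
\end{equation*}
into the remaining inner product, decomposing $\boldsymbol{\tilde{w}}_i^{k}-\boldsymbol{w}_i = (\boldsymbol{w}_i^{k}-\boldsymbol{w}_i)+\boldsymbol{\xi}_i^{k}$. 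Combining with the dual update~\eqref{eqite2} and the aggregator update~\eqref{eqite3} reconstructs the operator term $(\boldsymbol{u}_i^{k}-\boldsymbol{u}_i)^{\intercal}F(\boldsymbol{u}_i^{k})$ together with three pairs of primal/dual differences. Applying the three-point identity $2\langle a-b,a-c\rangle = \Vert a-b\Vert^2+\Vert a-c\Vert^2-\Vert b-c\Vert^2$ to each pair produces the telescoping quantities $\frac{\rho}{2}(\Vert\boldsymbol{w}_i-\boldsymbol{w}^{k-1}\Vert^2-\Vert\boldsymbol{w}_i-\boldsymbol{w}^{k}\Vert^2)$, $\frac{1}{2\eta_i^{k}}(\Vert\boldsymbol{w}_i-\boldsymbol{\tilde{w}}_i^{k-1}\Vert^2-\Vert\boldsymbol{w}_i-\boldsymbol{\tilde{w}}_i^{k}\Vert^2)$, and $\frac{1}{2\rho}(\Vert\boldsymbol{\gamma}_i-\boldsymbol{\gamma}_i^{k-1}\Vert^2-\Vert\boldsymbol{\gamma}_i-\boldsymbol{\gamma}_i^{k}\Vert^2)$, plus the noise-coupled linear term $-(\rho+1/\eta_i^{k})\langle\boldsymbol{\xi}_i^{k},\boldsymbol{w}_i-\boldsymbol{\tilde{w}}_i^{k-1}\rangle$, matching exactly the structure of the target statement.

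The most delicate step is to handle the residual $\frac{L}{2}\Vert\boldsymbol{\tilde{w}}_i^{k}-\boldsymbol{\tilde{w}}_i^{k-1}\Vert^2$ and produce the exact coefficient $\frac{(\rho+1/\eta_i^{k})^2}{2/\eta_i^{k}-2L}$ in front of $\Vert\boldsymbol{\xi}_i^{k}\Vert^2$. Writing $\boldsymbol{\tilde{w}}_i^{k}-\boldsymbol{\tilde{w}}_i^{k-1} = (\boldsymbol{w}_i^{k}-\boldsymbol{\tilde{w}}_i^{k-1})+\boldsymbol{\xi}_i^{k}$, this residual splits into a prox piece $\Vert\boldsymbol{w}_i^{k}-\boldsymbol{\tilde{w}}_i^{k-1}\Vert^2$, a noise piece $\Vert\boldsymbol{\xi}_i^{k}\Vert^2$, and a cross term. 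Merging the prox piece with the existing $\frac{1}{2\eta_i^{k}}$-prox contribution reduces its effective coefficient to $\frac{1}{2}(1/\eta_i^{k}-L)$, which explains the denominator $2/\eta_i^{k}-2L$. I would then use Young's inequality $\langle a,b\rangle\le \frac{\alpha}{2}\Vert a\Vert^2+\frac{1}{2\alpha}\Vert b\Vert^2$ with $\alpha$ set so that the cross term is absorbed by this reduced prox coefficient, leaving exactly $\frac{(\rho+1/\eta_i^{k})^2}{2/\eta_i^{k}-2L}\Vert\boldsymbol{\xi}_i^{k}\Vert^2$ as the noise contribution. The main obstacle is bookkeeping: the absorption requires the step-size restriction $\eta_i^{k}<1/L$ (so the denominator is positive), and the Young parameter must be chosen so that the cross terms vanish and the residual noise coefficient matches the target form rather than producing extra lower-order pieces. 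Summing the resulting per-agent bound over $i\in[n]$ gives the claim.
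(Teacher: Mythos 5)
Your proposal follows essentially the same route as the paper's Appendix~E proof: the descent lemma for the $(c_3+\lambda c_4/n)$-smooth $f_i$ shifts the function value from $\boldsymbol{\tilde{w}}_i^{k-1}$ to $\boldsymbol{\tilde{w}}_i^{k}$, convexity plus the subproblem's optimality condition and the three-point identities produce the telescoping and noise-linear terms, and Young's inequality with parameter $1/\eta_i^{k}-(c_3+\lambda c_4/n)$ cancels the $\Vert\boldsymbol{\tilde{w}}_i^{k}-\boldsymbol{\tilde{w}}_i^{k-1}\Vert^2$ residual and yields exactly the $\frac{(\rho+1/\eta_i^{k})^2}{2/\eta_i^{k}-2(c_3+\lambda c_4/n)}\Vert\boldsymbol{\xi}_i^{k}\Vert^2$ coefficient (the paper keeps $\boldsymbol{\tilde{w}}_i^{k}-\boldsymbol{\tilde{w}}_i^{k-1}$ intact rather than splitting off $\boldsymbol{\xi}_i^{k}$, which makes the bookkeeping slightly cleaner, but this is an immaterial difference). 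Your observations that the $\frac{\eta_i^{k}}{2}\Vert f_i'-(\rho+1/\eta_i^{k})\boldsymbol{\xi}_i^{k}\Vert^2$ term of Lemma~\ref{lem:2} disappears and that $\eta_i^{k}<1/(c_3+\lambda c_4/n)$ is required for the denominator to be positive are both correct and consistent with the paper.
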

	\begin{proof}
		See Appendix \ref{ap:d}.
	\end{proof}
	
	Based on Lemma \ref{lem:3}, we give the following theorem.
	\begin{theorem} \label{the:3}
		Assume $\ell(\cdot)$ and  $R(\cdot)$ are convex and smooth, $\Vert \nabla^2 \ell(\cdot) \Vert \leq c_3$, and $\Vert \nabla^2 R(\cdot) \Vert \leq c_4$. Let 
		\begin{equation}
		\eta_i^{k} = {\bigg(c_3+\lambda c_4/n+\frac{4c_1\sqrt{d p k\ln(1.25/\delta)}}{m_i\epsilon c_w}}\bigg)^{-1}.
		\end{equation}
		Define
		\begin{equation}
		M_3(\epsilon,\delta)= \sum_{i=1}^{n} \frac{ 4  c_w c_1 \sqrt{dp\ln(1.25/\delta)}  }{m_i \epsilon  },
		\end{equation}
		and
		\begin{equation}
		M_4 = \frac{n c_w^2(c_3+ \lambda c_4/n+\rho)+n \beta^2/\rho}{2}.
		\end{equation}
		For any $t \geq 1$ and $\beta$, we have:
		\begin{equation}
		\mathbb{E}\bigg[ \sum_{i=1}^{n}\bigg( f_i(\boldsymbol{\bar{w}}_i^{t})-f_i(\boldsymbol{w}^*)+ \beta \Vert \boldsymbol{\bar{w}}_i^{t} - \boldsymbol{\bar{w}}^{t} \Vert\bigg) \bigg]  
		\leq    \frac{M_3(\epsilon,\delta) }{\sqrt{t} }+\frac{M_4}{t}. 
		\end{equation}
	\end{theorem}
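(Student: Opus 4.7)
The plan is to start from the per-iteration inequality in Lemma~\ref{lem:3}, sum it over $k=1,\ldots,t$, take expectations with respect to the noise, and then bound each resulting term using the prescribed $\eta_i^k$ and the privacy-calibrated variance $\sigma_{i,k}^2$ from Theorem~\ref{theo:1}. Structurally this mirrors the proof of Theorem~\ref{the:2}, but with the crucial change that the smoothness of $\ell(\cdot)$ and $R(\cdot)$ allows Lemma~\ref{lem:3} to replace the subgradient norm term $\frac{\eta_i^k}{2}\Vert f_i'(\tilde{\boldsymbol{w}}_i^{k-1})-(\rho+1/\eta_i^k)\boldsymbol{\xi}_i^k\Vert^2$ with the purely noise-driven term $\frac{(\rho+1/\eta_i^k)^2}{2/\eta_i^k-2(c_3+\lambda c_4/n)}\Vert\boldsymbol{\xi}_i^k\Vert^2$, which is what eventually yields the sharper constant $M_3(\epsilon,\delta)$ (no $(c_1+\lambda c_2/n)^2$ term).

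First I would set $\boldsymbol{u}_i=\boldsymbol{u}_i^*$ corresponding to the optimal primal $\boldsymbol{w}^*$ and a dual $\boldsymbol{\gamma}_i^*$ to be chosen via the standard variational-inequality trick that converts $(\boldsymbol{u}_i^k-\boldsymbol{u}_i^*)^{\intercal}F(\boldsymbol{u}_i^k)$ into the gap $f_i(\tilde{\boldsymbol{w}}_i^k)-f_i(\boldsymbol{w}^*)+\beta\Vert\tilde{\boldsymbol{w}}_i^k-\boldsymbol{w}^k\Vert$ once $\boldsymbol{\gamma}_i^*$ is taken as the minimizer on the dual ball of radius $\beta$. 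Then I sum Lemma~\ref{lem:3} over $k=1,\ldots,t$. The quadratic terms $\frac{\rho}{2}\Vert\cdot\Vert^2$ and $\frac{1}{2\rho}\Vert\cdot\Vert^2$ telescope directly, contributing $\frac{\rho}{2}\Vert\boldsymbol{w}^*-\boldsymbol{w}^0\Vert^2+\frac{1}{2\rho}\Vert\boldsymbol{\gamma}_i^*-\boldsymbol{\gamma}_i^0\Vert^2\le\frac{\rho c_w^2}{2}+\frac{\beta^2}{2\rho}$, which gives the $M_4$ piece after summing over $i$.

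Next I would take expectations over the Gaussian noise. The cross term $(\rho+1/\eta_i^k)\langle\boldsymbol{\xi}_i^k,\boldsymbol{w}_i-\tilde{\boldsymbol{w}}_i^{k-1}\rangle$ has mean zero because $\tilde{\boldsymbol{w}}_i^{k-1}$ is measurable with respect to noise drawn before iteration $k$. For the quadratic noise term I use $\mathbb{E}\Vert\boldsymbol{\xi}_i^k\Vert^2=dp\,\sigma_{i,k}^2$ with $\sigma_{i,k}^2=\frac{8c_1^2\ln(1.25/\delta)}{m_i^2\epsilon^2(\rho+1/\eta_i^k)^2}$; substituting cancels the $(\rho+1/\eta_i^k)^2$ factor in the numerator of the coefficient in Lemma~\ref{lem:3}, leaving $\frac{8dp\,c_1^2\ln(1.25/\delta)}{m_i^2\epsilon^2\cdot(2/\eta_i^k-2(c_3+\lambda c_4/n))}$. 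Now plug in the chosen $1/\eta_i^k=c_3+\lambda c_4/n+\frac{4c_1\sqrt{dpk\ln(1.25/\delta)}}{m_i\epsilon c_w}$, which was tuned precisely so that $2/\eta_i^k-2(c_3+\lambda c_4/n)=\frac{8c_1\sqrt{dpk\ln(1.25/\delta)}}{m_i\epsilon c_w}$; the per-iteration noise cost then collapses to $\frac{c_wc_1\sqrt{dp\ln(1.25/\delta)}}{m_i\epsilon\sqrt{k}}$. Summing over $k$ and using $\sum_{k=1}^t 1/\sqrt{k}\le 2\sqrt{t}$ yields the $M_3(\epsilon,\delta)\sqrt{t}$ contribution.

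The main obstacle, and the only place requiring care beyond bookkeeping, is the telescoping of $\frac{1}{2\eta_i^k}\Vert\boldsymbol{w}^*-\tilde{\boldsymbol{w}}_i^{k-1}\Vert^2-\frac{1}{2\eta_i^k}\Vert\boldsymbol{w}^*-\tilde{\boldsymbol{w}}_i^k\Vert^2$ because $1/\eta_i^k$ is strictly increasing in $k$. I would handle this with an Abel-type rearrangement: the sum equals $\frac{1}{2\eta_i^1}\Vert\boldsymbol{w}^*-\tilde{\boldsymbol{w}}_i^0\Vert^2-\frac{1}{2\eta_i^t}\Vert\boldsymbol{w}^*-\tilde{\boldsymbol{w}}_i^t\Vert^2+\sum_{k=2}^t\bigl(\frac{1}{2\eta_i^k}-\frac{1}{2\eta_i^{k-1}}\bigr)\Vert\boldsymbol{w}^*-\tilde{\boldsymbol{w}}_i^{k-1}\Vert^2$, and bound each squared distance by $c_w^2$ (absorbing the $\tilde{\boldsymbol{w}}_i^{k-1}$ terms into $c_w^2$ as done implicitly in Theorem~\ref{the:2}); the remaining cross-sum telescopes back to $\frac{c_w^2}{2\eta_i^t}-\frac{c_w^2}{2\eta_i^1}$, which matches in order of magnitude the $\frac{c_w^2}{2}(c_3+\lambda c_4/n)$ already present and the $\sqrt{t}$ noise term, so everything is absorbed into $M_3(\epsilon,\delta)\sqrt{t}+M_4$. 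Finally I divide by $t$, invoke Jensen's inequality on the convex functions $f_i(\cdot)$ and on $\Vert\cdot\Vert$ (using $\bar{\boldsymbol{w}}_i^t=\frac{1}{t}\sum_{k=1}^t\tilde{\boldsymbol{w}}_i^k$ and $\bar{\boldsymbol{w}}^t=\frac{1}{t}\sum_{k=1}^t\boldsymbol{w}^k$), and take the supremum over dual balls of radius $\beta$ to produce the constraint-violation term, yielding the stated bound.
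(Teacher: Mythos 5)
Your proposal is correct and follows essentially the same route as the paper's proof in Appendix E: sum the per-iteration bound of Lemma~\ref{lem:3}, telescope the quadratic terms, kill the cross term in expectation, use $\mathbb{E}\Vert\boldsymbol{\xi}_i^k\Vert^2 = dp\,\sigma_{i,k}^2$ to cancel the $(\rho+1/\eta_i^k)^2$ factor, maximize over the dual ball of radius $\beta$, and apply Jensen to the averaged iterates. The only cosmetic difference is that you make the Abel rearrangement for the increasing $1/\eta_i^k$ explicit where the paper simply writes the resulting bound $\frac{1}{2\eta_i^t}\Vert\boldsymbol{w}_i-\tilde{\boldsymbol{w}}_i^0\Vert^2$, and your final accounting correctly recognizes that $M_3$ collects contributions from both the noise sum and the $\frac{c_w^2}{2\eta_i^t}$ term (each supplying a factor of $2$ of the total $4$).
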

	\begin{proof}
		See Appendix \ref{ap:e}.
	\end{proof}
	
	Theorem \ref{the:3} also shows an explicit relation between the privacy budget (i.e., $\epsilon$ and $\delta$) and the utility of our approach with smoothness, and demonstrates that the result from our algorithm converges to the optimal result at a rate of $O(1/\sqrt{t})$.
	

	\section{Performance Evaluation}   \label{sec:impl}

	\begin{figure*}[t]
		\centering
		\subfloat[$\epsilon = 0.05, \bar{\epsilon}=0.5009,\delta = 10^{-3}$]{\includegraphics[width=1.7in]{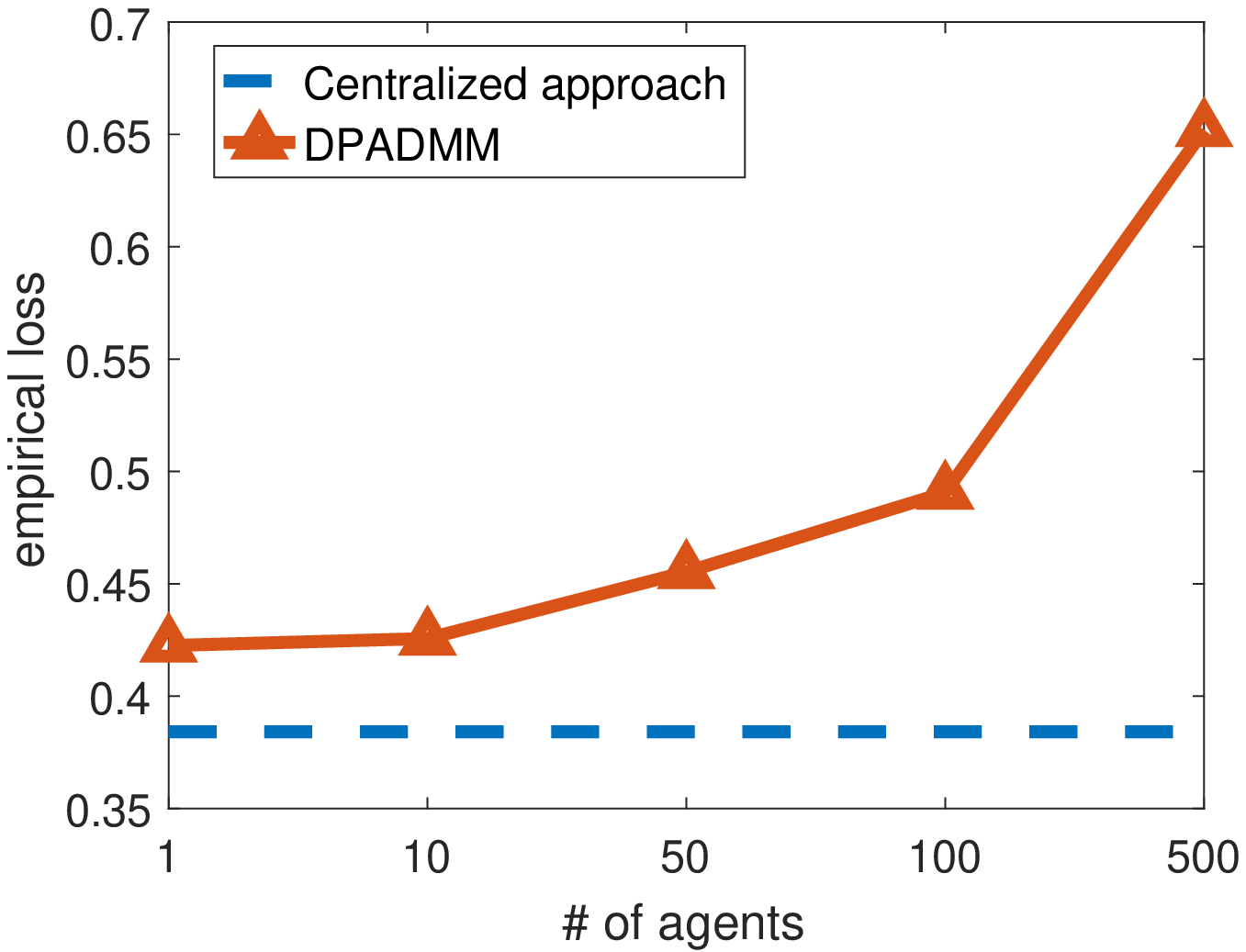}%
			\label{fig:11}	}
		\hfil
		\subfloat[$\epsilon = 0.1, \bar{\epsilon}=1.0193,\delta = 10^{-3}$]{\includegraphics[width=1.7in]{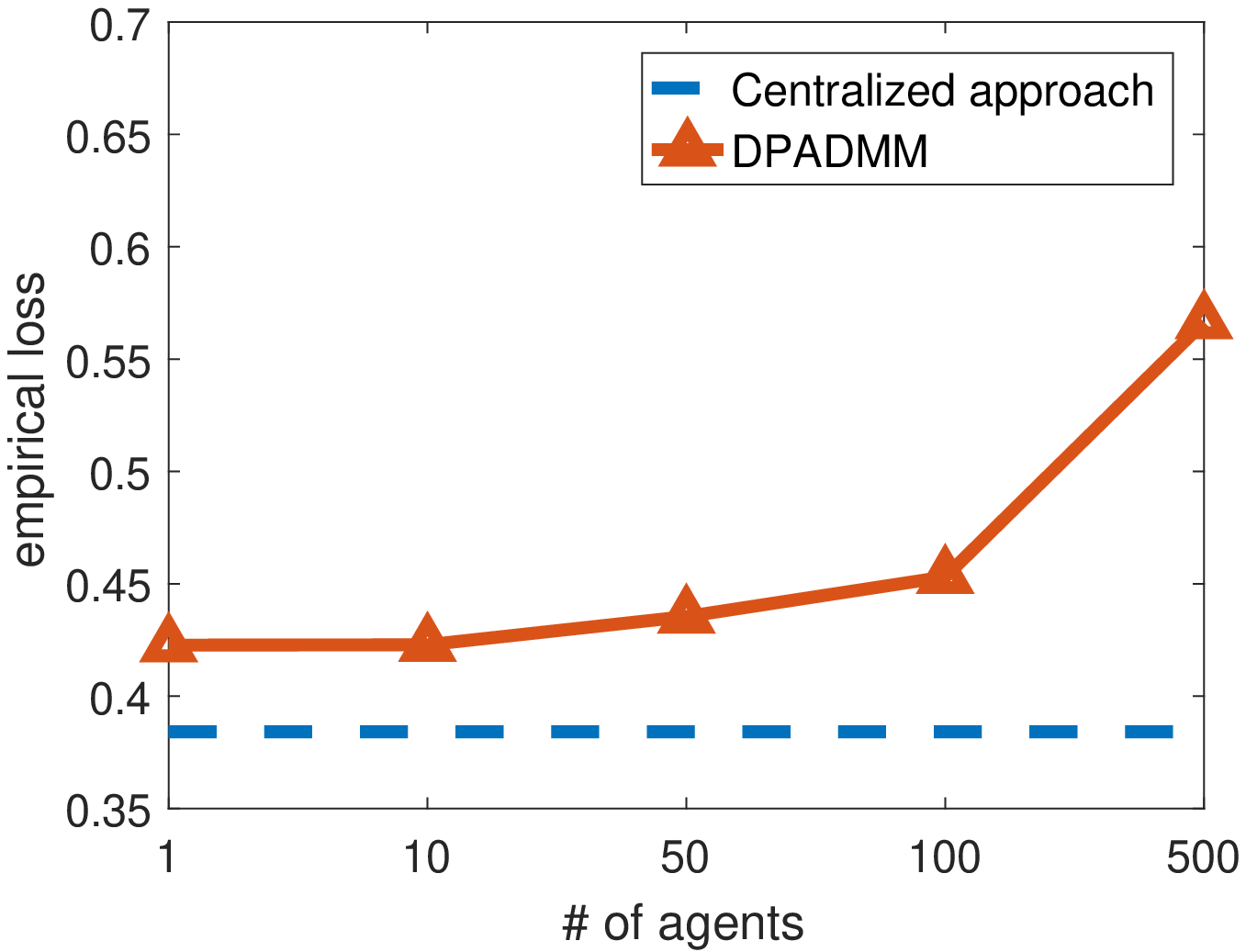}%
			\label{fig:12}	}
		\hfil
		\subfloat[$\epsilon = 0.05, \bar{\epsilon}=0.5009,\delta = 10^{-3}$]{\includegraphics[width=1.7in]{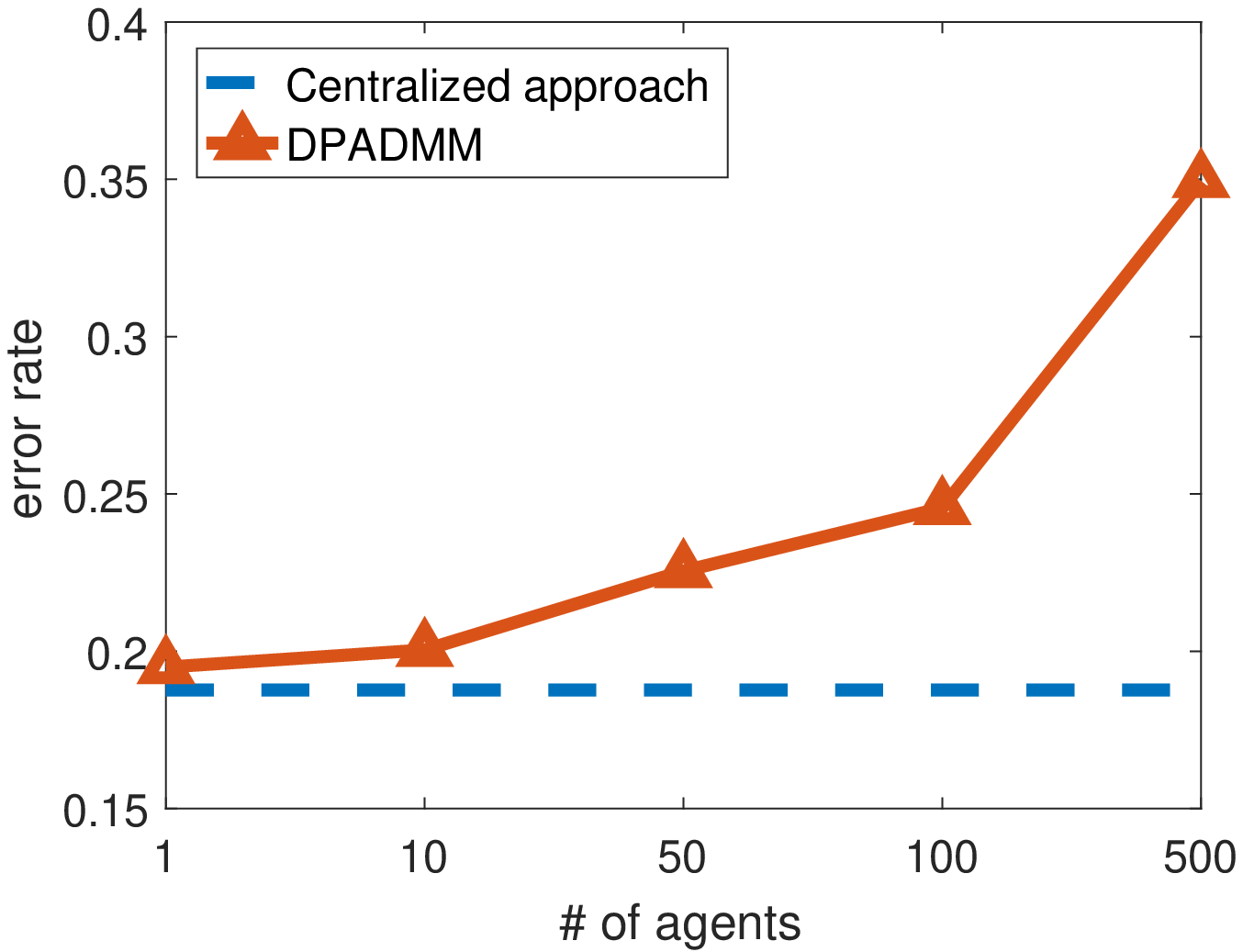}%
			\label{fig:13}	}
		\hfil
		\subfloat[$\epsilon = 0.1, \bar{\epsilon}=1.0193,\delta = 10^{-3}$]{\includegraphics[width=1.7in]{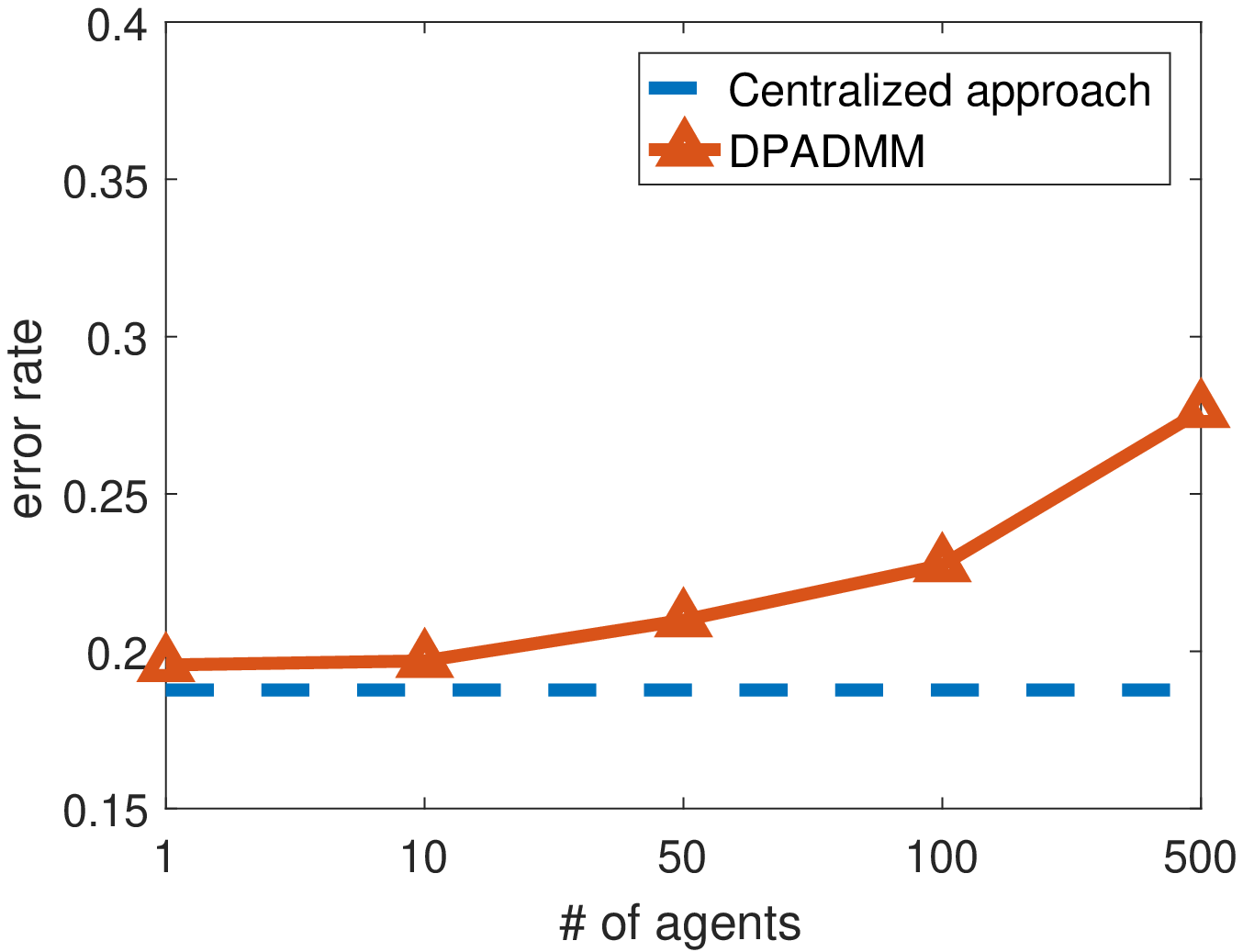}%
			\label{fig:14}	}
		\hfil
		\caption{Impact of distributed data source number on DP-ADMM ($l_1$-regularized logistic regression).}
		\label{fig:1}
		\vspace*{-.15in}
	\end{figure*}

	\begin{figure*}[t]
		\centering
		\subfloat[ $\delta = 10^{-3}$]{\includegraphics[width=1.7in]{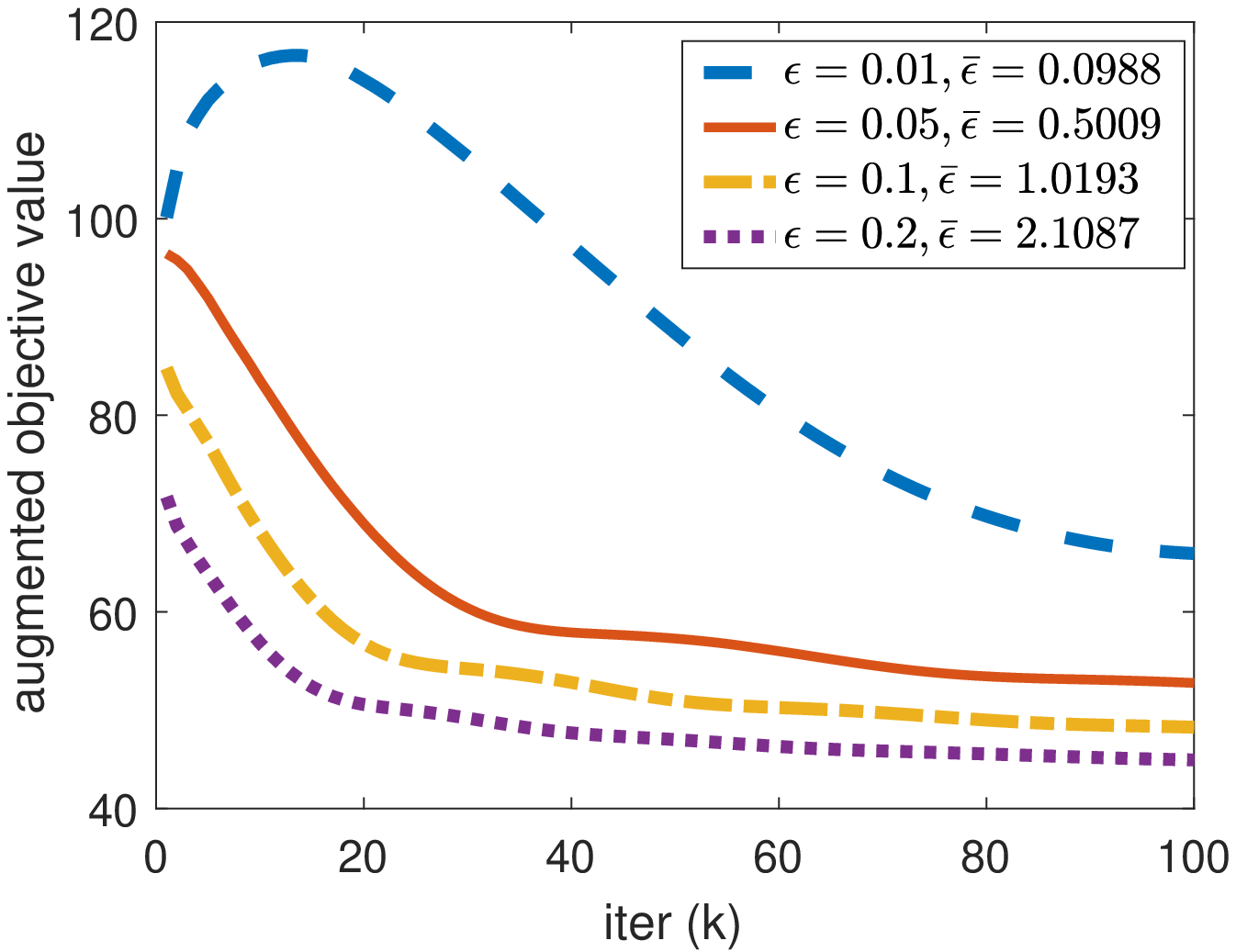}%
		}
		\hfil
		\subfloat[ $\delta = 10^{-4}$]{\includegraphics[width=1.7in]{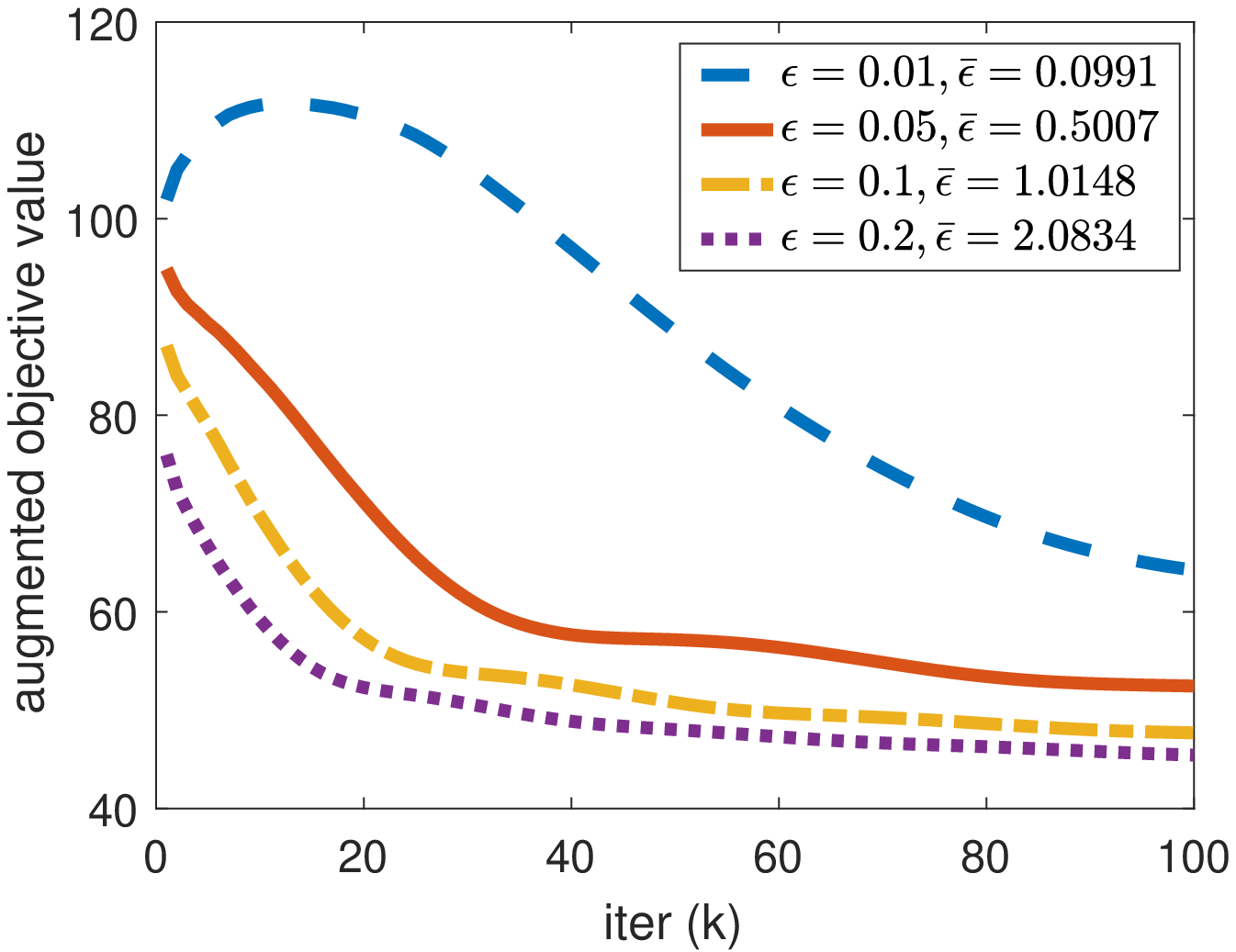}%
		}
		\hfil
		\subfloat[ $\delta = 10^{-5}$]{\includegraphics[width=1.7in]{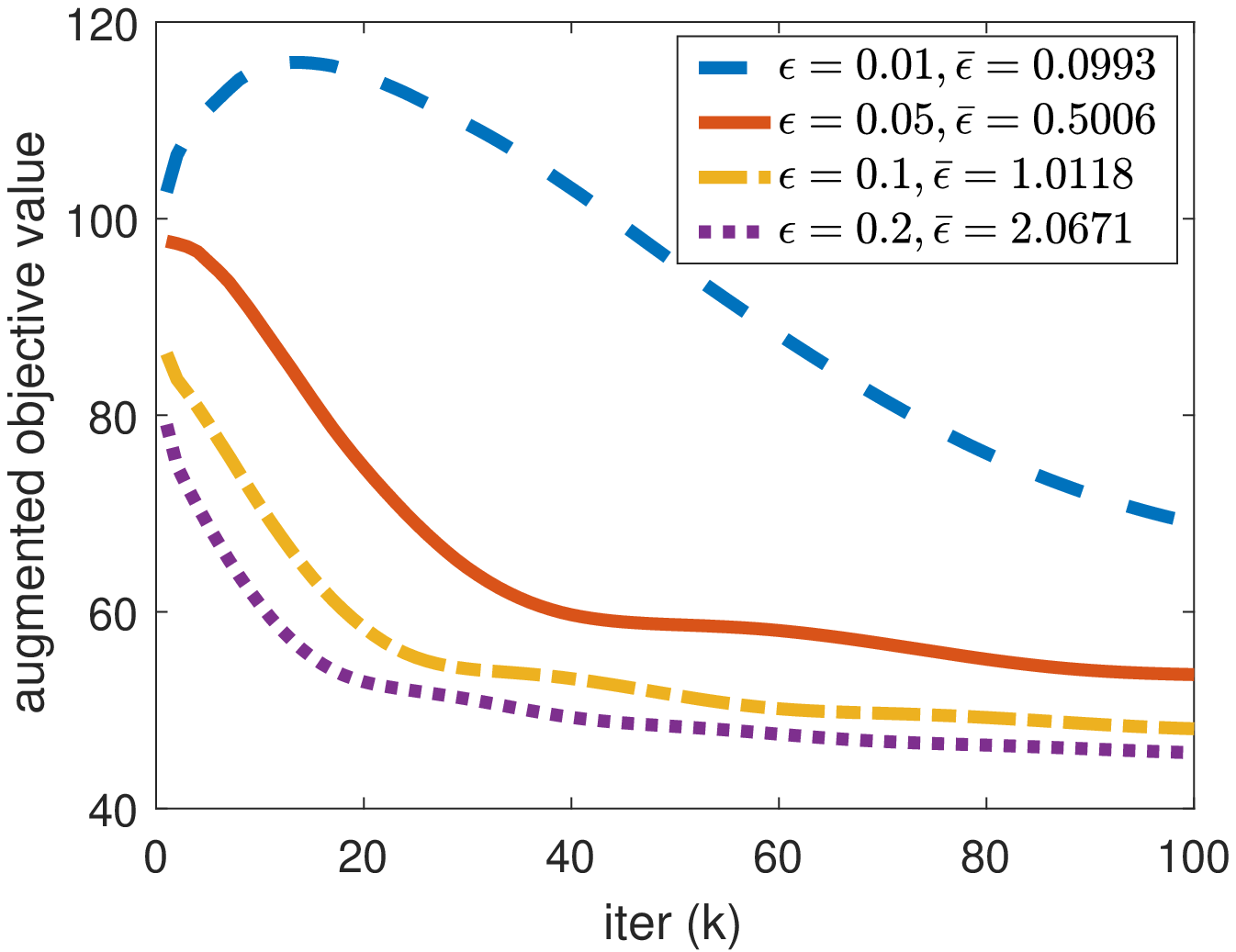}%
		}
		\hfil
		\subfloat[$\delta = 10^{-6}$]{\includegraphics[width=1.7in]{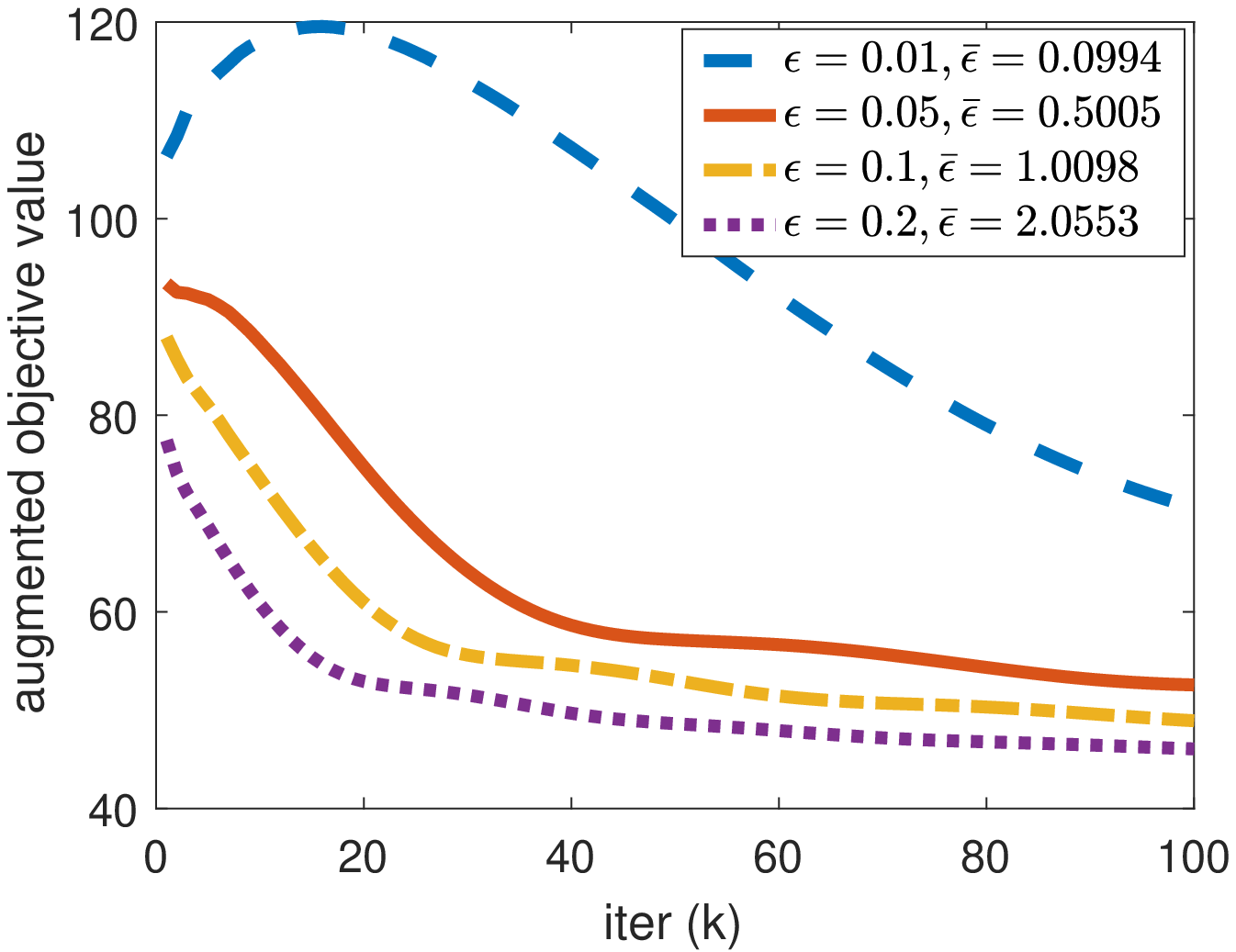}%
		}
		\hfil
		\caption{Convergence properties of DP-ADMM ($l_1$-regularized logistic regression).}
		\label{fig:2}
		\vspace*{-.15in}
	\end{figure*}
	
	\begin{figure*}[t]
		\centering
		\subfloat[$\epsilon = 0.05$, $\bar{\epsilon} = 0.5009$, $\delta = 10^{-3}$]{\includegraphics[width=1.7in]{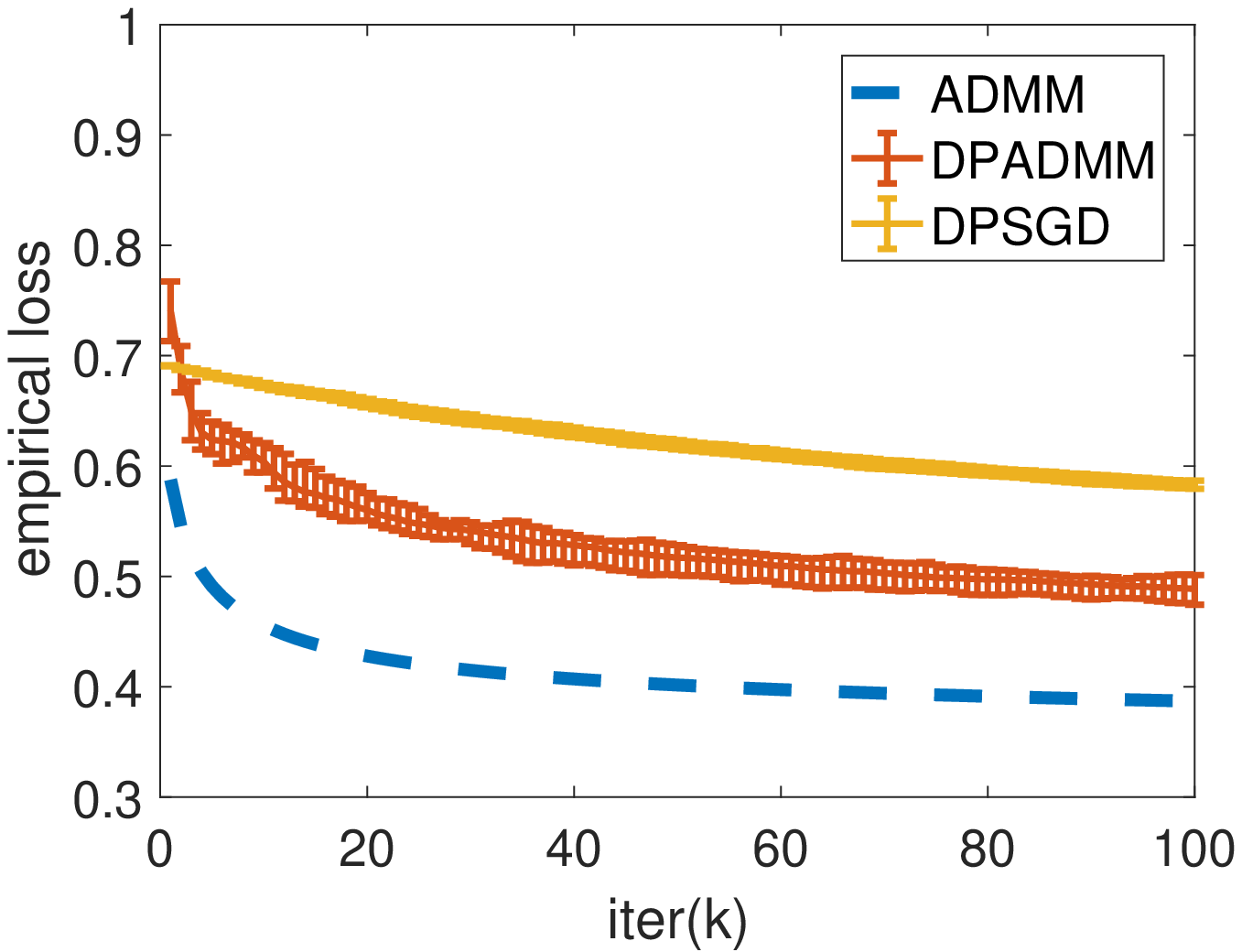}%
		}
		\hfil
		\subfloat[$\epsilon = 0.1$, $\bar{\epsilon} = 1.0193$, $\delta = 10^{-3}$]{\includegraphics[width=1.7in]{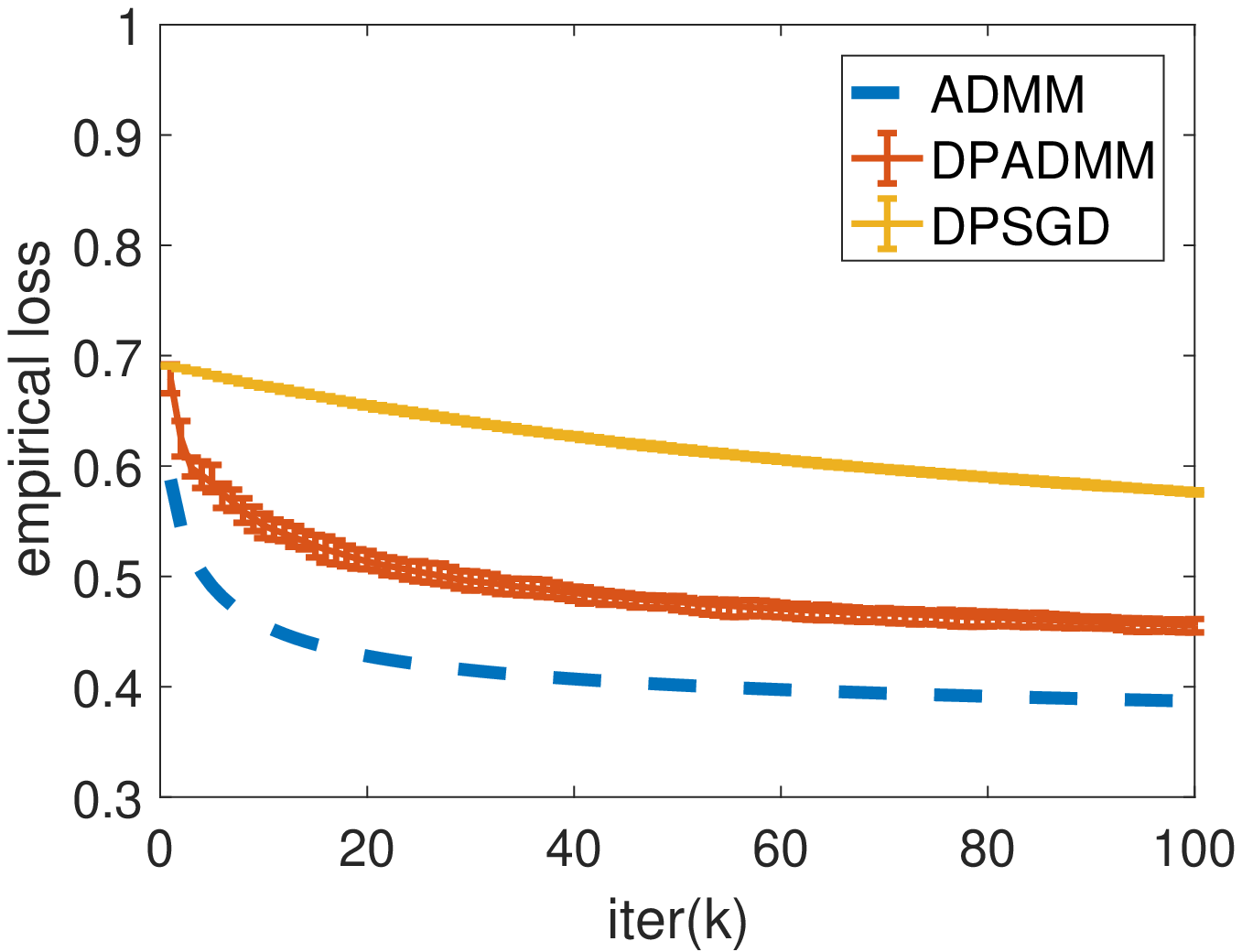}%
		}
		\hfil
		\subfloat[$\delta = 10^{-3}$]{\includegraphics[width=1.7in]{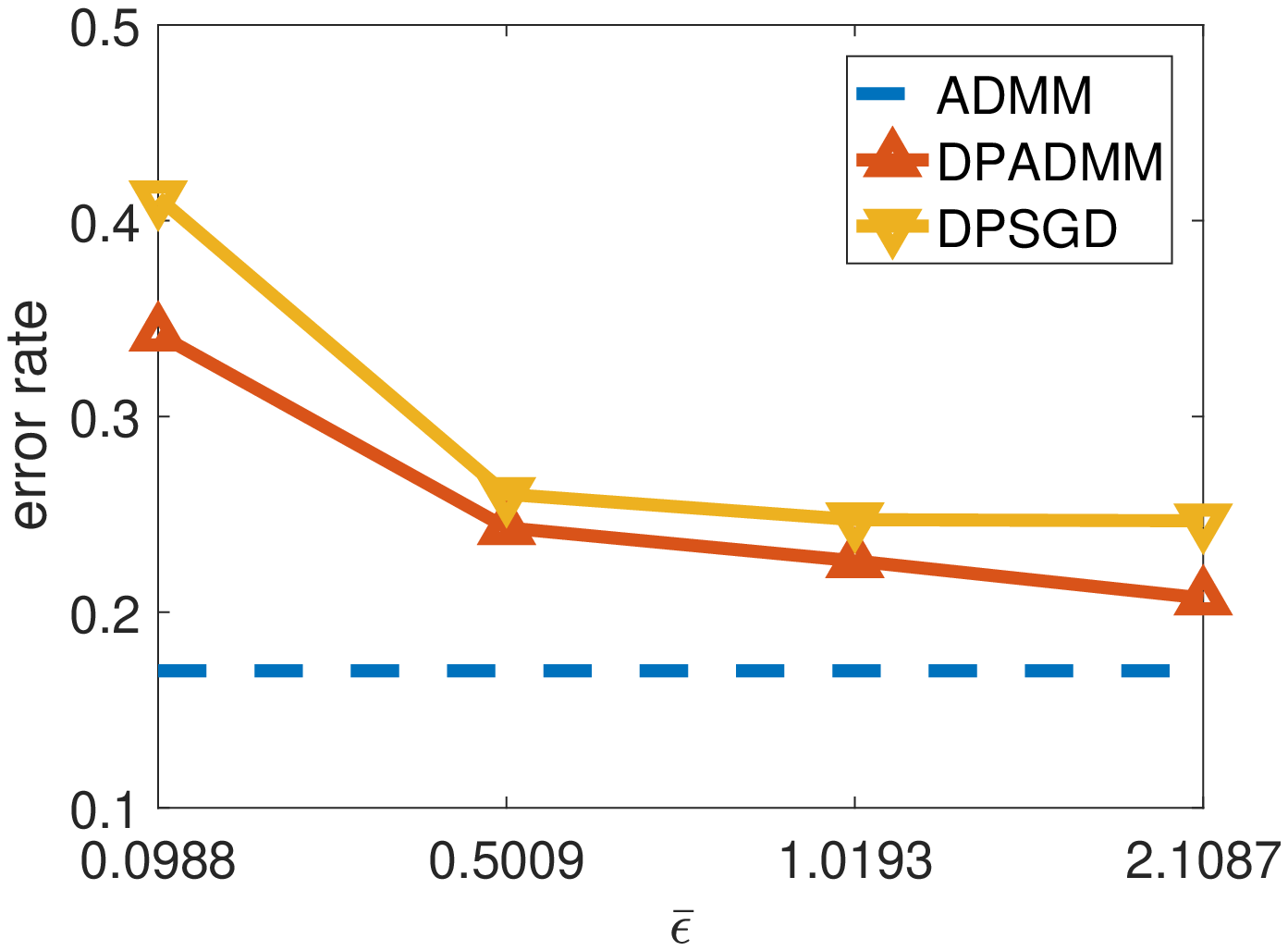}%
		}
		\hfil
		\subfloat[$\epsilon = 0.1$]{\includegraphics[width=1.7in]{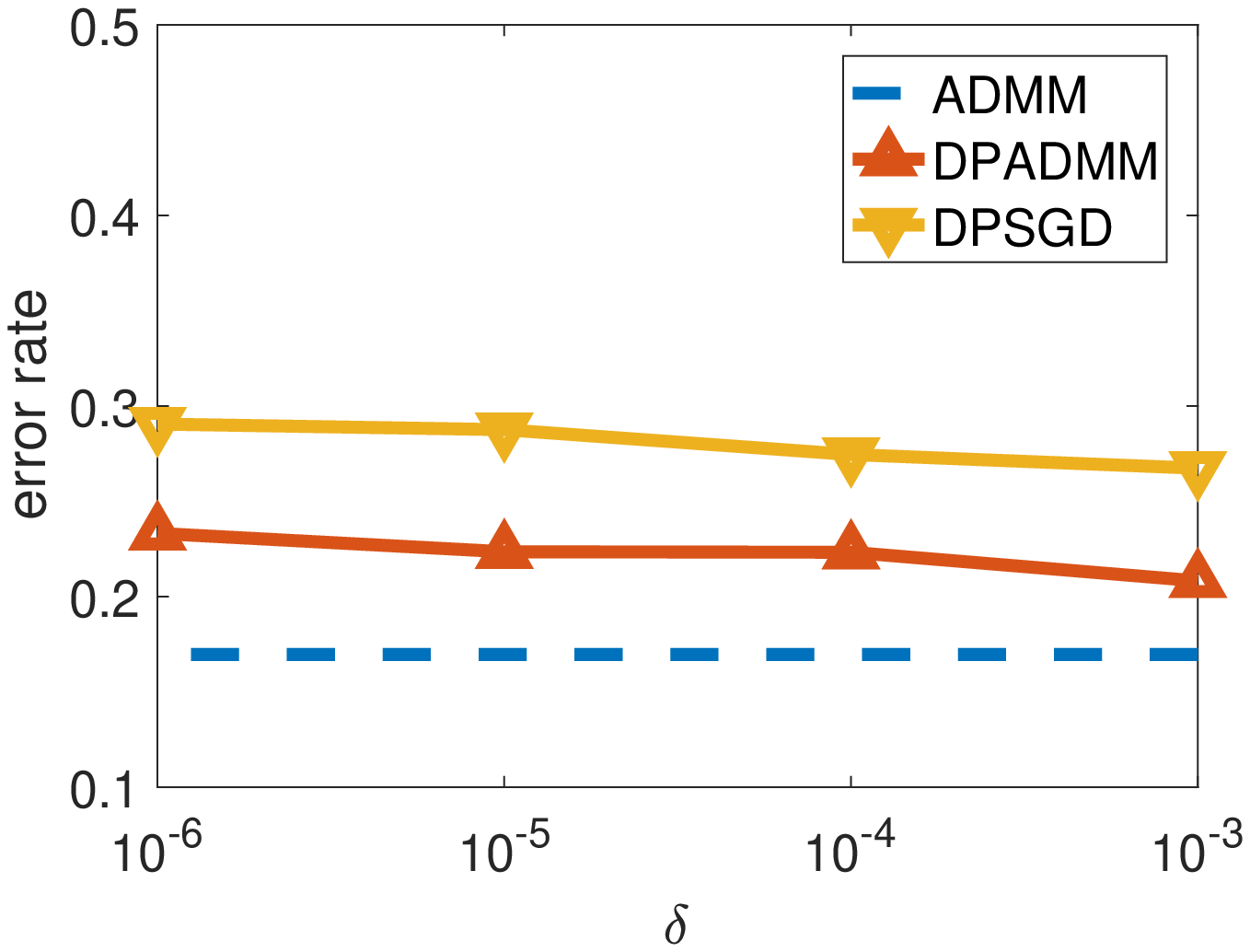}%
		}
		\hfil
		\caption{Accuracy comparison in empirical loss and classification error rate ($l_1$-regularized logistic regression).}
		\label{fig:3}
		\vspace*{-.15in}
	\end{figure*}
	
	In this section, we evaluate the performance of DP-ADMM with both non-smooth objectives and smooth objectives by considering logistic regression problems with $l_1$-norm and $l_2$-norm regularizers, respectively. 

	\textbf{Dataset.} We evaluate our approach on a real-world dataset: Adult dataset \cite{data:Ad} from UCI Machine Learning Repository. Adult dataset includes $48,842$ instances. Each instance has $14$ attributes such as age, sex, education, occupation, marital status, and native country, and is associated with a label representing whether the income is above $\$ 50,000$ or not. Before the simulation, we firstly preprocess the data by removing all the instances with missing values, converting the categorical attributes into binary vectors, normalizing columns to guarantee the maximum value of each column is 1, normalizing rows to enforce their $l_2$ norm to be less than 1, and converting the labels $\{>50k, <50k\}$ into $\{+1,-1\}$. After this, we obtain $45,222$ entries each with a $104$-dimensional feature vector ($d =104$) and a $1$-dimensional label belonging to $\{+1,-1\}$ ($p=1$).  In each simulation, we sample $40,000$ instances for training, and the remaining $5,222$ instances for testing. In the training process, we divide the training data into $n$ groups randomly, and thus each group contains $40000/n$ data points ($m_i = 40000/n$).
	
	\textbf{Baseline algorithms.} We compare our DP-ADMM (Algorithm \ref{ag:1}) with five baseline algorithms: (1) non-private centralized approach, (2) ADMM algorithm (Algorithm \ref{ag:2}), (3) ADMM algorithm with PVP (Algorithm \ref{ag:3}), 
	(4) ADMM with dual variable perturbation (DVP) in \cite{ZhZh17}, and (5) differentially private stochastic gradient descent (DPSGD) in \cite{AbCh16} for distributed settings. We evaluate the accuracy and effectiveness of our approach by comparing it with the five baseline algorithms.
	
	\textbf{Setup.} We set up the simulation by MATLAB in an Intel(R) Core(TM) 3.40 GHz computer with 16 GB RAM. In the simulation, we set the total iteration number $t = 100$ and the penalty parameter $\rho = 0.1$, and choose the optimal regularizer parameter $\lambda/n$ to be $10^{-6}$ by $10$-cross-validation in non-private setting. In DPSGD, we set the optimal learning rate to be $0.1$ and the sampling ratio to be $1$. We focus on the settings with strong privacy guarantee and thus we set privacy budget per iteration $\epsilon = \{0.01 ,0.05, 0.1, 0.2\}$ and $\delta = \{10^{-3}, 10^{-4}, 10^{-5}, 10^{-6}\}$, and use moments accountant method to obtain the corresponding total privacy loss $\bar{\epsilon}$. In each simulation, we run it for $10$ times to get averaged result.

	\begin{table}
		\vspace*{.15in}
		\small
		\caption{Computation Time ($100$ iterations).}
		\begin{center} 		\label{Table 3}
			\renewcommand\arraystretch{1.3}
			\begin{tabular}{|*{5}{c|}}
				\hline
				& ADMM & PVP & DVP & DPADMM \\ \hline
				$\epsilon = 0.01$ & $67.242 $s & $102.282$s & $59.743$s & $6.937$s \\ \hline
				$\epsilon = 0.05$ & $67.242 $s & $78.798 $s & $65.935$s & $5.322 $s\\ \hline
				$\epsilon = 0.1$ & $67.242 $s & $79.013 $s & $69.855$s & $5.218 $s \\ \hline
			\end{tabular}
		\end{center}
		\vspace*{-.3in}
	\end{table}
	
	\textbf{Evaluations.} We consider logistic regression problem in a distributed setting and evaluate our approach for logistic regression problems with $l_1$-norm and $l_2$-norm regularizers respectively, in terms of convergence, accuracy, and computation cost. The loss function of binary logistic regression is defined by \eqref{eq:logi}. The convergence properties are evaluated with respect to the augmented objective value, which measures the loss as well as the constraint penalty and is defined as  $ \sum_{i=1}^{n}\big( f_i(\boldsymbol{\bar{w}}_i^{k})+ \rho \Vert \boldsymbol{\bar{w}}_i^{k} - \boldsymbol{\bar{w}}^{k} \Vert \big)$. We evaluate the accuracy by empirical loss $\frac{1}{n}\sum_{i=1}^n \sum_{j=1}^{m_i }\frac{1}{m_i} \ell(\boldsymbol{a}_{i,j}, \boldsymbol{b}_{i,j},\boldsymbol{\tilde{w}}_i^k)$, and classification error rate. We measure the computation cost using the running time of training.

	\subsection{$L_1$-Regularized Logistic Regression}
	
	We obtain the DP-ADMM steps for $l_1$ regularized logistic regression by:
	\begin{subequations}
		\begin{align}
		\boldsymbol{w}_i^{k} = &  \bigg(\frac{1}{m_i}\sum_{j=1}^{m_i} \frac{\boldsymbol{b}_{i,j}\boldsymbol{a}_{i,j}}{1+\exp(\boldsymbol{b}_{i,j}\boldsymbol{\tilde{w}}_i^{{k-1}^{\intercal}}\boldsymbol{a}_{i,j})}   -   \frac{\lambda}{n}  \text{sgn}(\boldsymbol{\tilde{w}}_i^{k-1}) \nonumber \\  + &   \boldsymbol{\gamma}_i^{k-1}  +    \rho\boldsymbol{w}^{k-1} + \boldsymbol{\tilde{w}}_i^{k-1}/\eta_i^{k}\bigg)\bigg(\rho+1/\eta_i^{k}\bigg)^{-1}, \\ 
		\boldsymbol{\tilde{w}}_i^{k} = & \boldsymbol{w}_i^{k} +\mathcal{MN}_{d,p}(0, \sigma_{i,k}^{2}\boldsymbol{\mathrm{I}}_d, \sigma_{i,k}^{2}\boldsymbol{\mathrm{I}}_p ),  \\
		\boldsymbol{w}^{k} = &\frac{1}{n}\sum_{i=1}^n\boldsymbol{\tilde{w}}^{k}_i - \frac{1}{n}\sum_{i=1}^n\boldsymbol{\gamma}^{k-1}_i/\rho, \\
		\boldsymbol{\gamma}_i^{k} = &\boldsymbol{\gamma}_{i}^{k-1} - \rho \left(\boldsymbol{\tilde{w}}_i^{k} - \boldsymbol{w}^{k}\right), 
		\end{align}
	\end{subequations} 
	where $\text{sgn}(\cdot) $ is the sign function.
	
	Since the $l_1$ regularized objective function is convex but non-smooth, we apply Theorem \ref{the:2} to set $\eta_i^{k}$. Since we enforce $\Vert \ell^{'}(\cdot) \Vert \leq 1$ by data preprocessing, and we have $ \Vert R^{'}(\cdot) \Vert \leq \sqrt{d p}$ ($d = 104$ and $p=1$), we set $c_1 = 1$ and $c_2 = \sqrt{104}$. We obtain $\boldsymbol{w}^*$ by pre-training and set $c_w$ to be $23$. According to Theorem \ref{the:2}, we set $\eta_i^{k}$ to be $ 23{\big( 2k(1+10^{-6} \sqrt{104}/n)^2+1664k\ln{(1.25/\delta)}/\big(m_i^2 \epsilon^2\big)  \big)}^{-\frac{1}{2}}$.
	
	Since PVP and DVP cannot be applied when the objective function is non-smooth, we only compare our approach with ADMM and DPSGD in this section. We first investigate the performance of our approach with different numbers of distributed data sources and compare it with the centralized approach. Figure \ref{fig:1} shows that the accuracy of our training model would decrease if we consider larger number of data sources. Since the size of local dataset is smaller for larger number of agents, more noise should be introduced to guarantee the same level of differential privacy, thus degrading the performance of the trained model. This is consistent with Theorem \ref{theo:1} that the noise magnitude is scaled by $1/m_i$. In following simulations, we consider the case when the number of agents $n$ equals $100$. Figure \ref{fig:2} demonstrates the convergence properties of our approach by showing how the augmented objective value converges for different $\epsilon$ and $\delta$. It shows that our approach with larger $\epsilon$ and larger $\delta$ has better convergence, which is consistent with Theorem \ref{the:2}. Finally, we evaluate the accuracy of our approach by empirical loss and classification error rate by comparing with ADMM and DPSGD. Figure \ref{fig:3} shows our approach outperforms DPSGD due to the faster convergence property, demonstrating the advantage of ADMM framework. In addition, Figure \ref{fig:3} shows the privacy-utility trade-off of our approach. When privacy leakage increases (larger $\epsilon$ and larger $\delta$), our approach achieves better utility.

	\subsection{$L_2$-Regularized Logistic Regression}
	\begin{figure*}[t]
		\centering
		\subfloat[$\epsilon = 0.05, \bar{\epsilon}=0.5009,\delta = 10^{-3}$]{\includegraphics[width=1.7in]{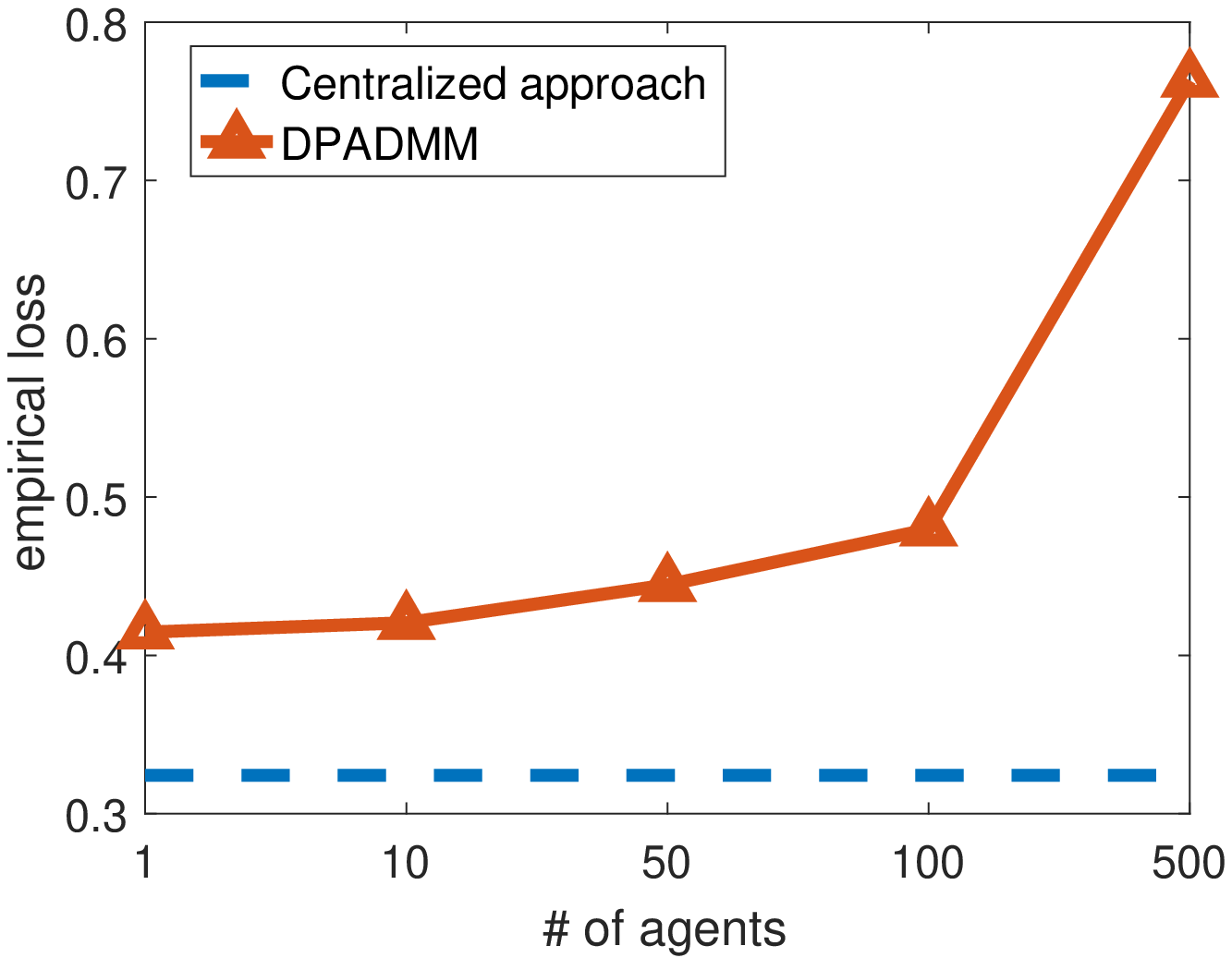}%
			\label{fig:41}}
		\hfil
		\subfloat[$\epsilon = 0.1, \bar{\epsilon}=1.0193,\delta = 10^{-3}$]{\includegraphics[width=1.7in]{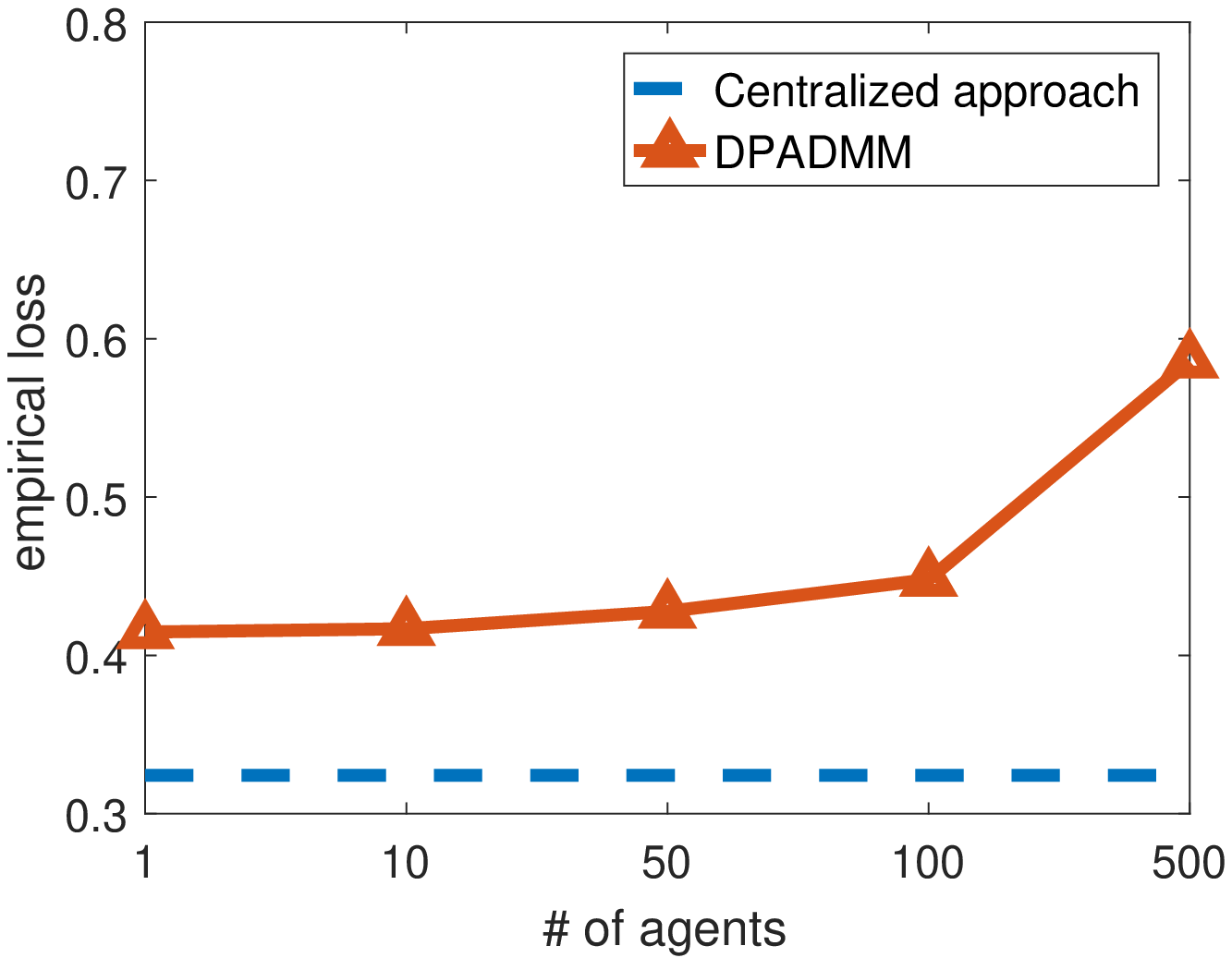}%
			\label{fig:42}}
		\hfil
		\subfloat[$\epsilon = 0.05, \bar{\epsilon}=0.5009,\delta = 10^{-3}$]{\includegraphics[width=1.7in]{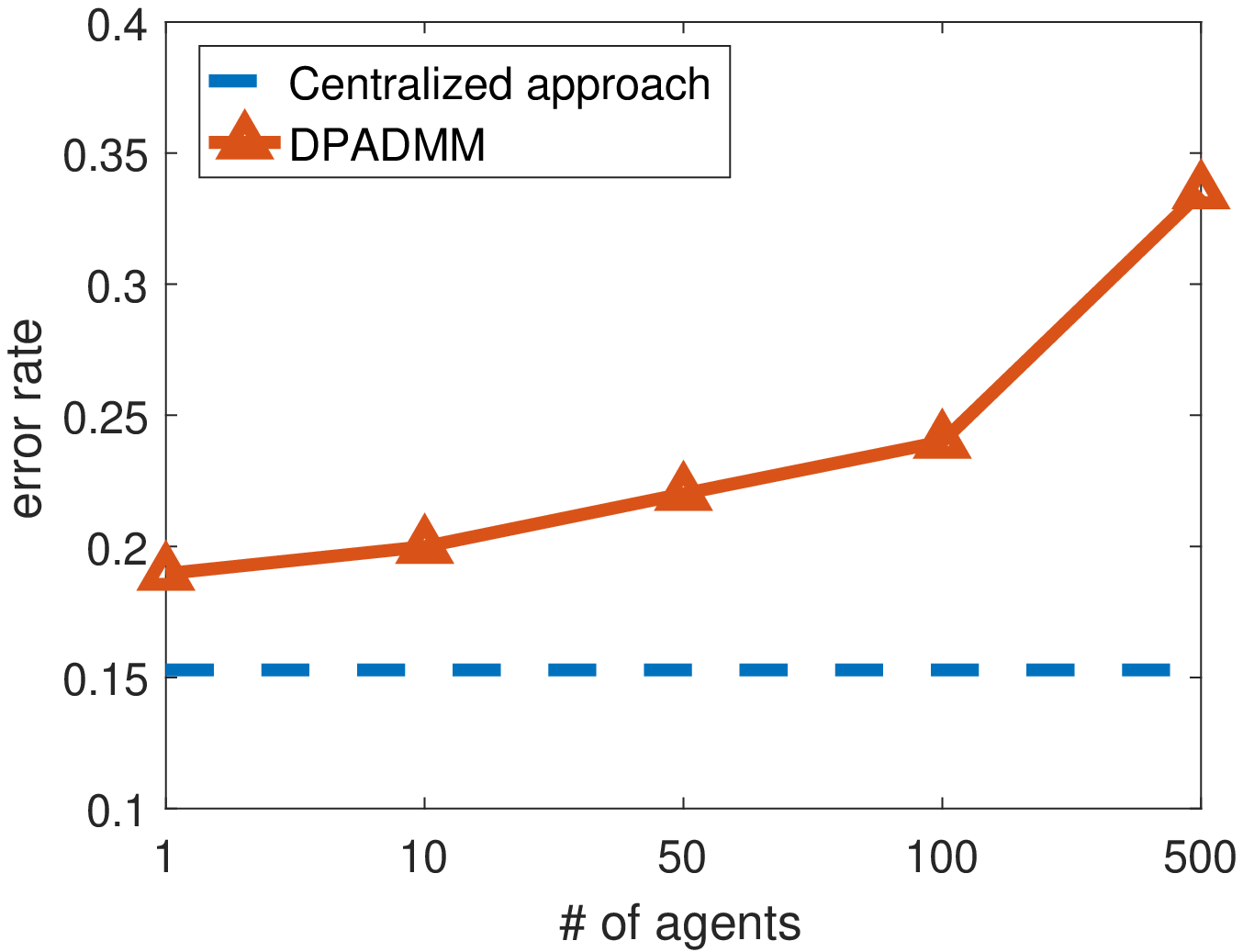}%
			\label{fig:43}	}
		\hfil
		\subfloat[$\epsilon = 0.1, \bar{\epsilon}=1.0193,\delta = 10^{-3}$]{\includegraphics[width=1.7in]{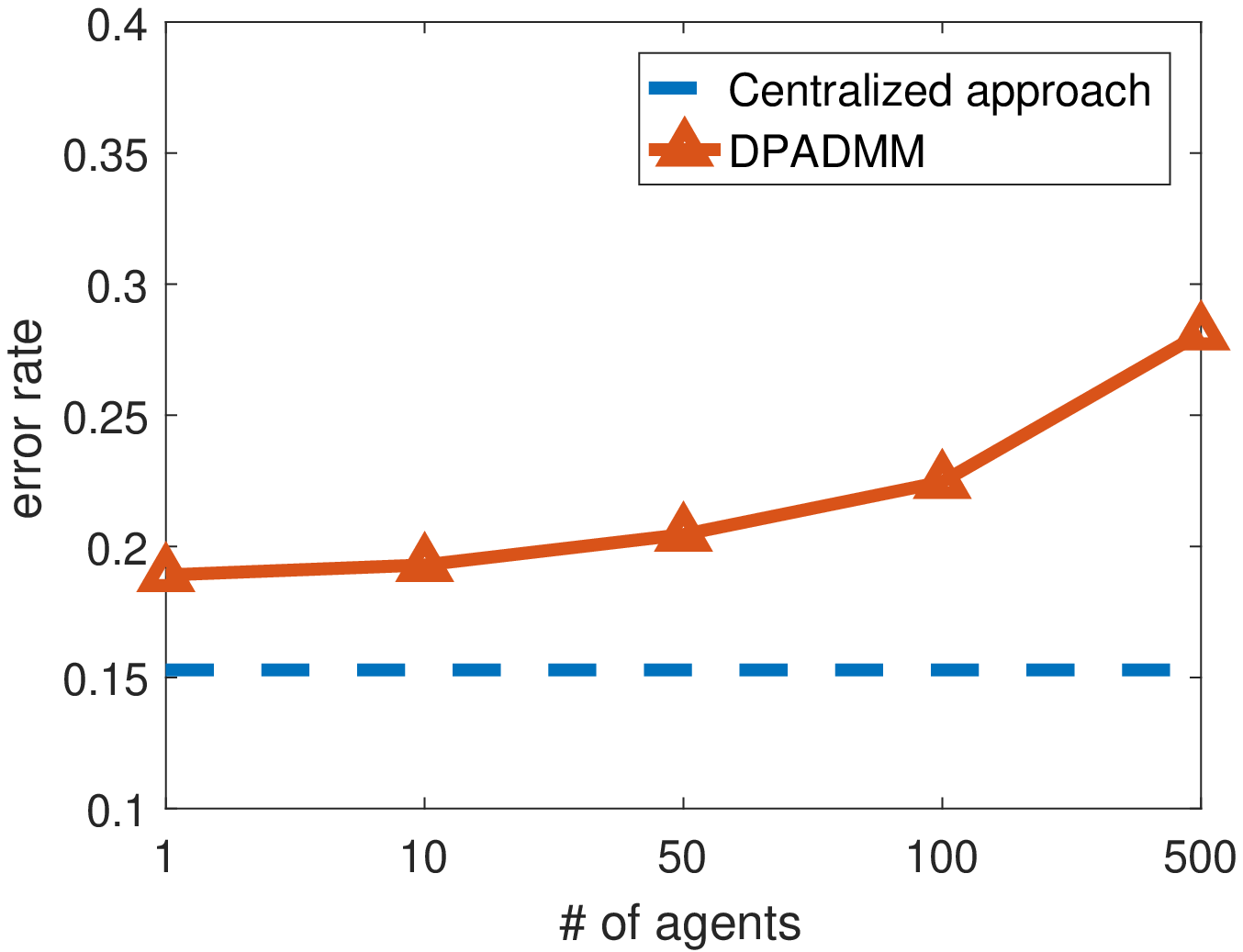}%
			\label{fig:44}	}
		\hfil
		\caption{Impact of distributed data source number on DP-ADMM ($l_2$-regularized logistic regression).}
		\label{fig:4}
		\vspace*{-.15in}
	\end{figure*}
	
	\begin{figure*}[t]
		\centering
		\subfloat[ $\delta = 10^{-3}$]{\includegraphics[width=1.7in]{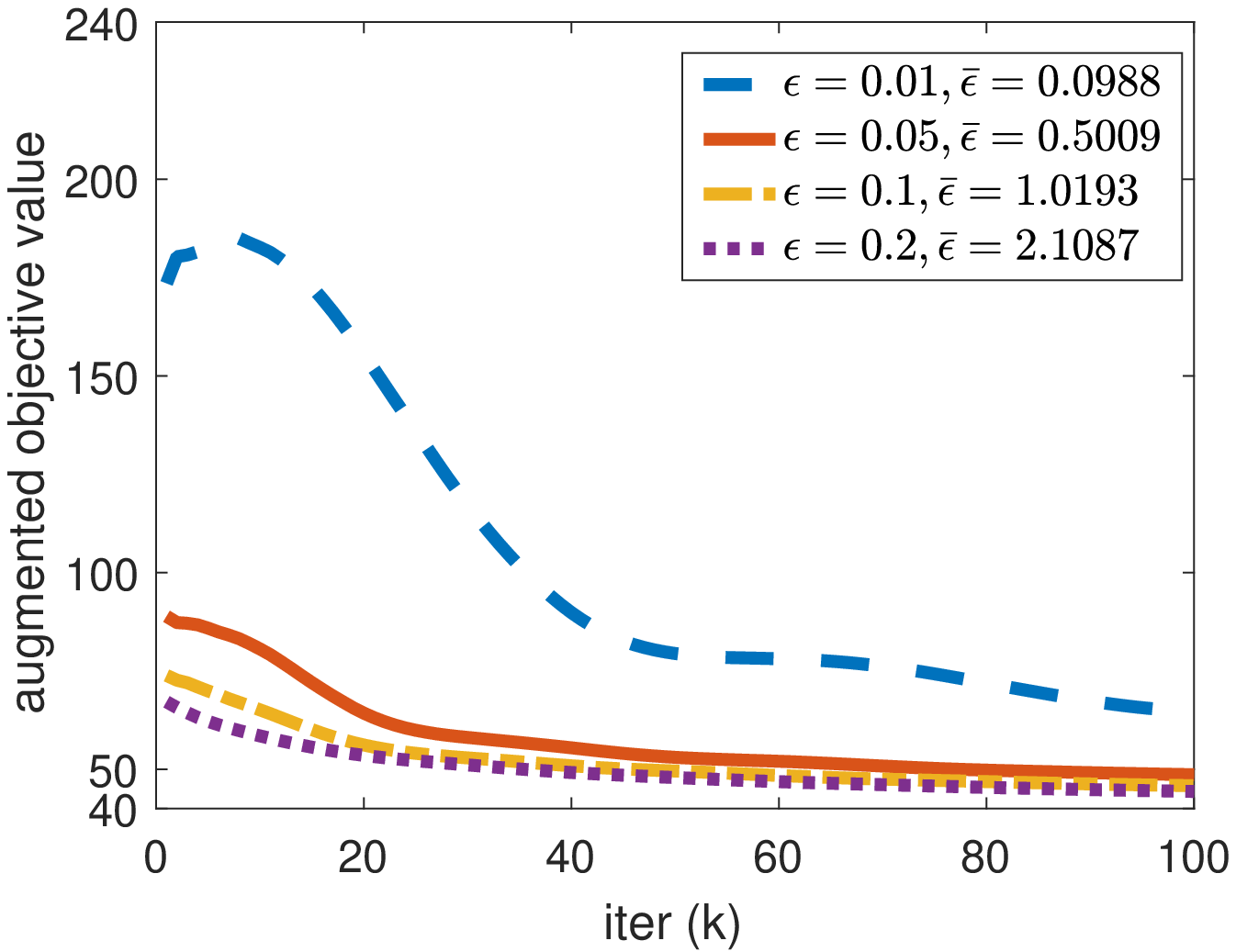}%
		}
		\hfil
		\subfloat[ $\delta = 10^{-4}$]{\includegraphics[width=1.7in]{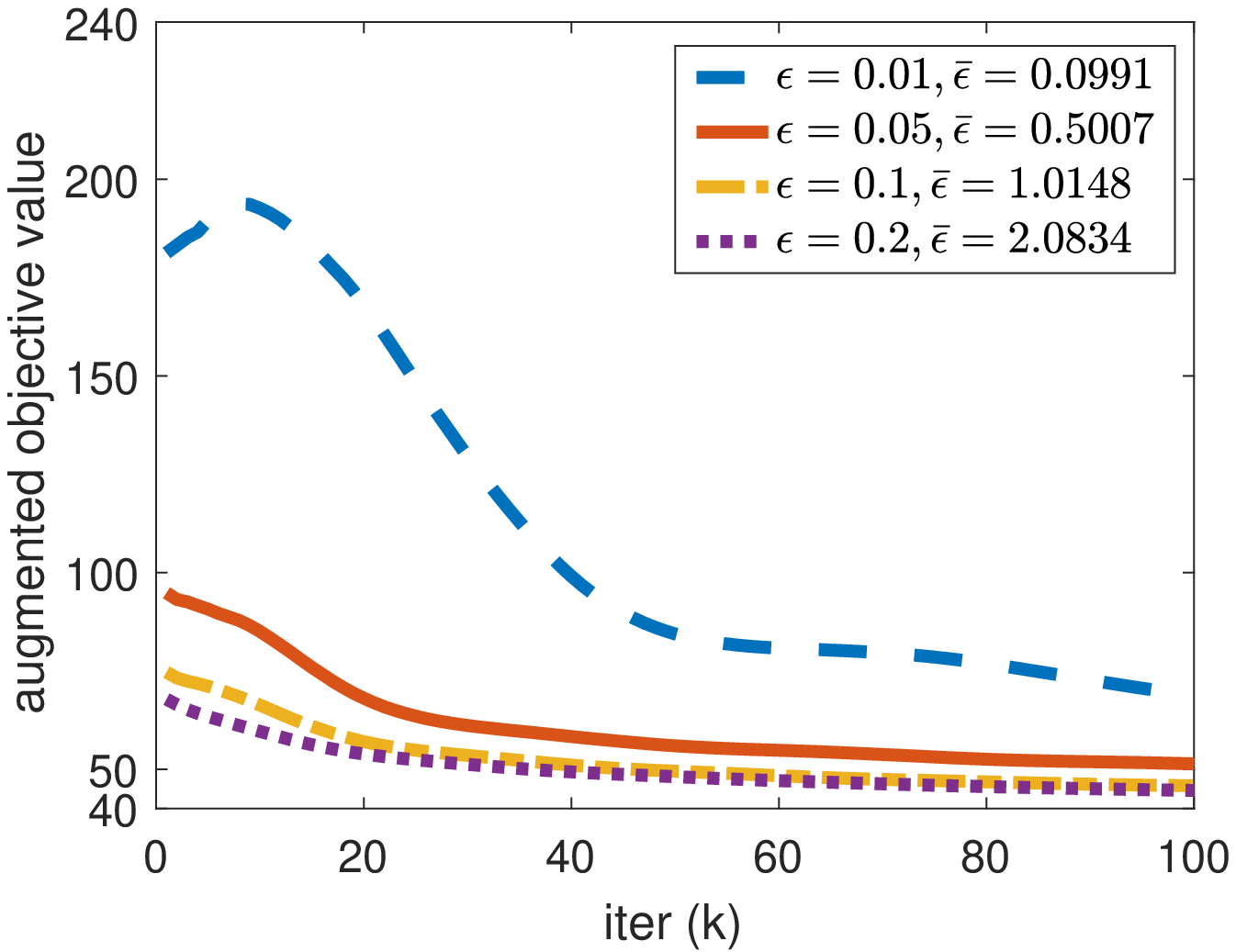}%
		}
		\hfil
		\subfloat[ $\delta = 10^{-5}$]{\includegraphics[width=1.7in]{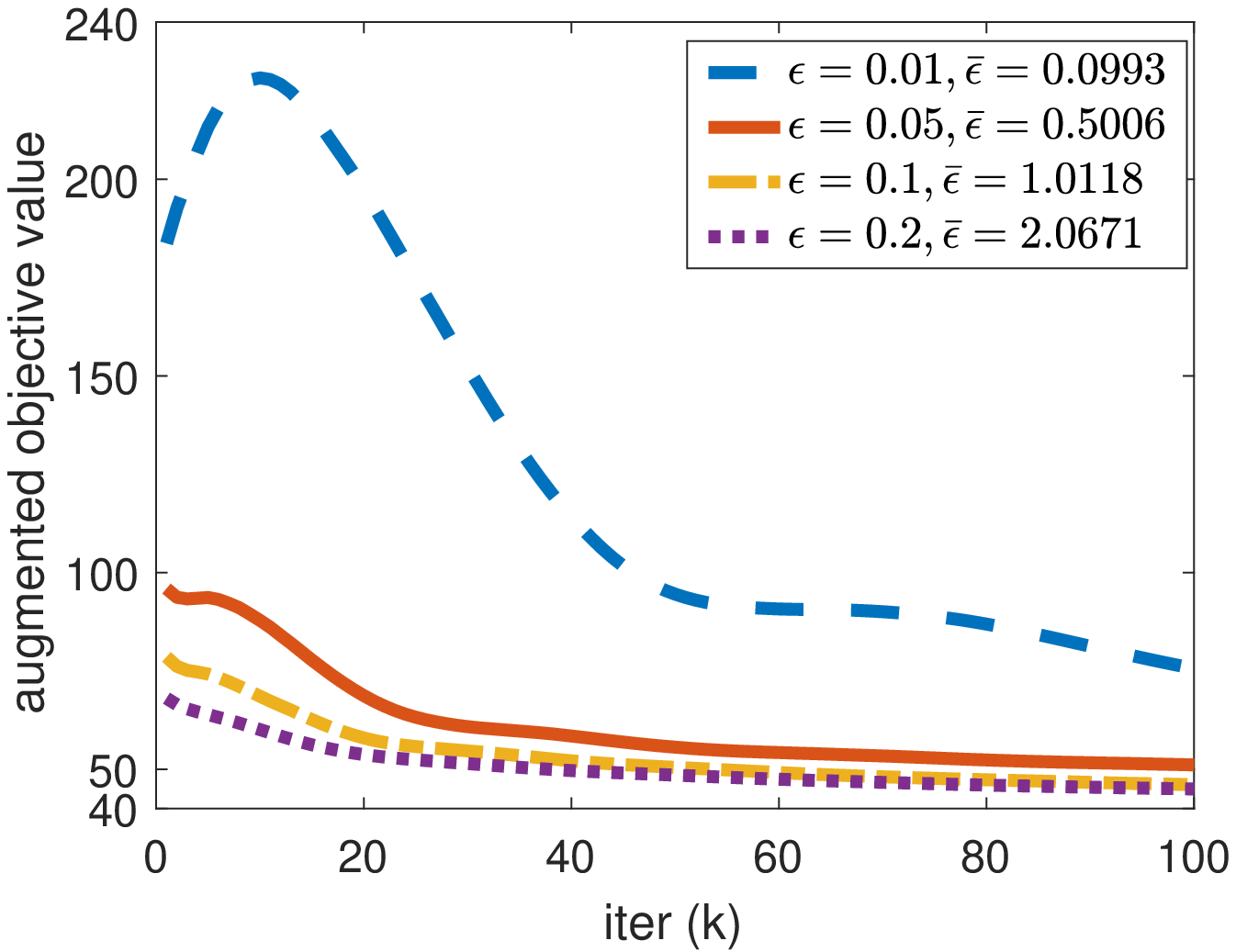}%
		}
		\hfil
		\subfloat[$\delta = 10^{-6}$]{\includegraphics[width=1.7in]{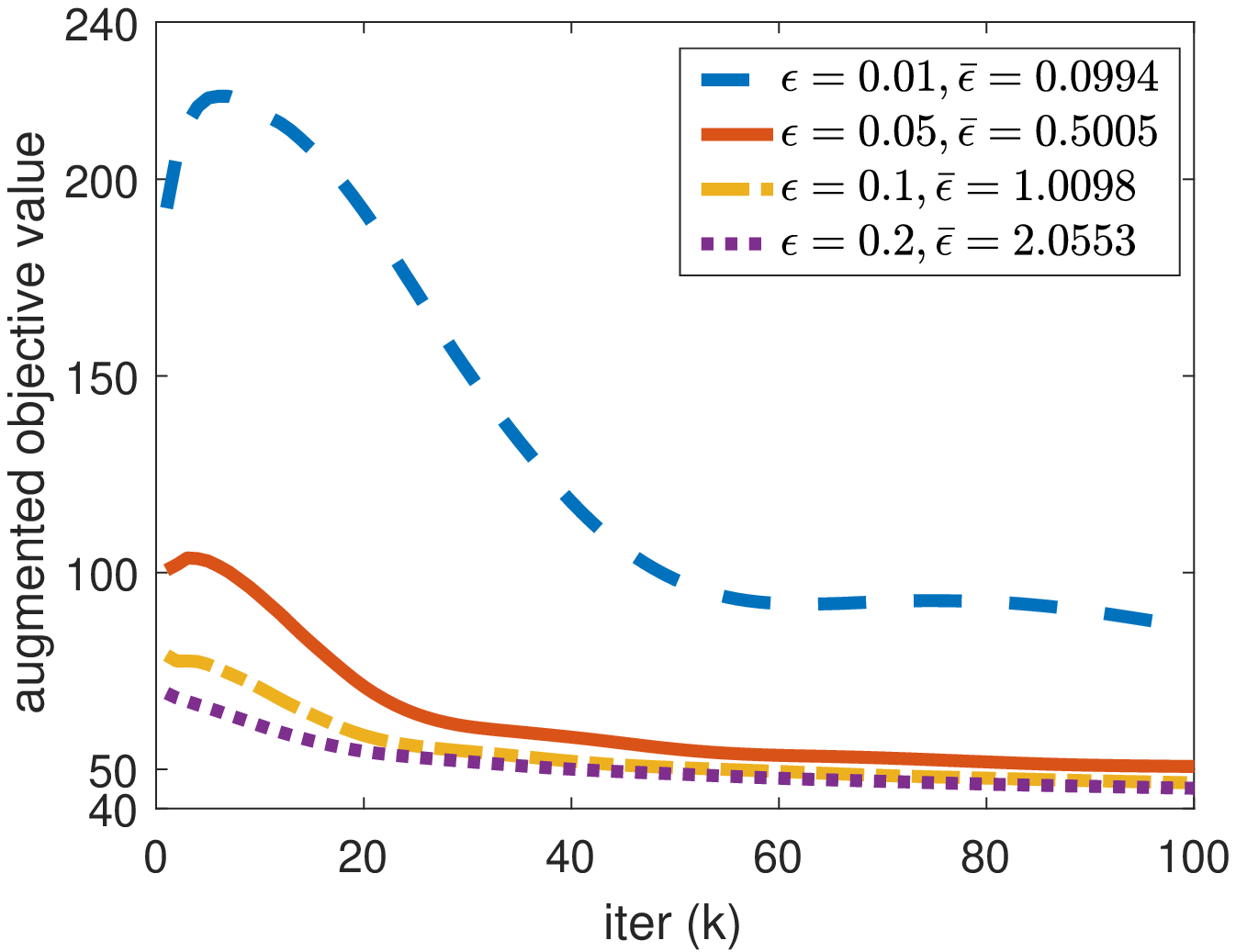}%
		}
		\hfil
		\caption{Convergence properties of DP-ADMM ($l_2$-regularized logistic regression).}
		\label{fig:6}
		\vspace*{-.15in}
	\end{figure*}
	
	\begin{figure*}[t]
		\centering
		\subfloat[$\epsilon = 0.05$, $\bar{\epsilon} = 0.5009$, $\delta = 10^{-3}$]{\includegraphics[width=1.7in]{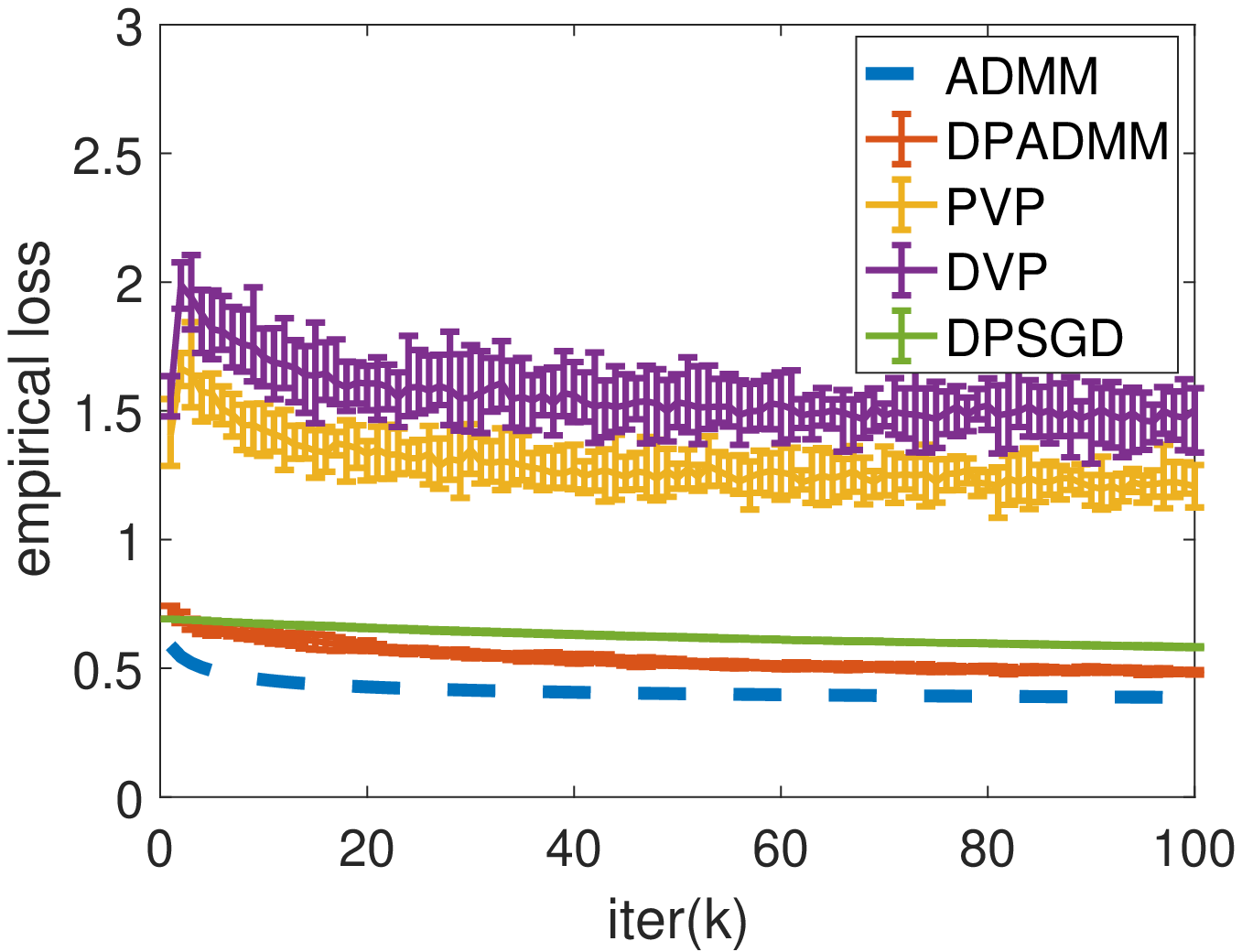}%
		}
		\hfil
		\subfloat[$\epsilon = 0.1$, $\bar{\epsilon} = 1.0193$, $\delta = 10^{-3}$]{\includegraphics[width=1.7in]{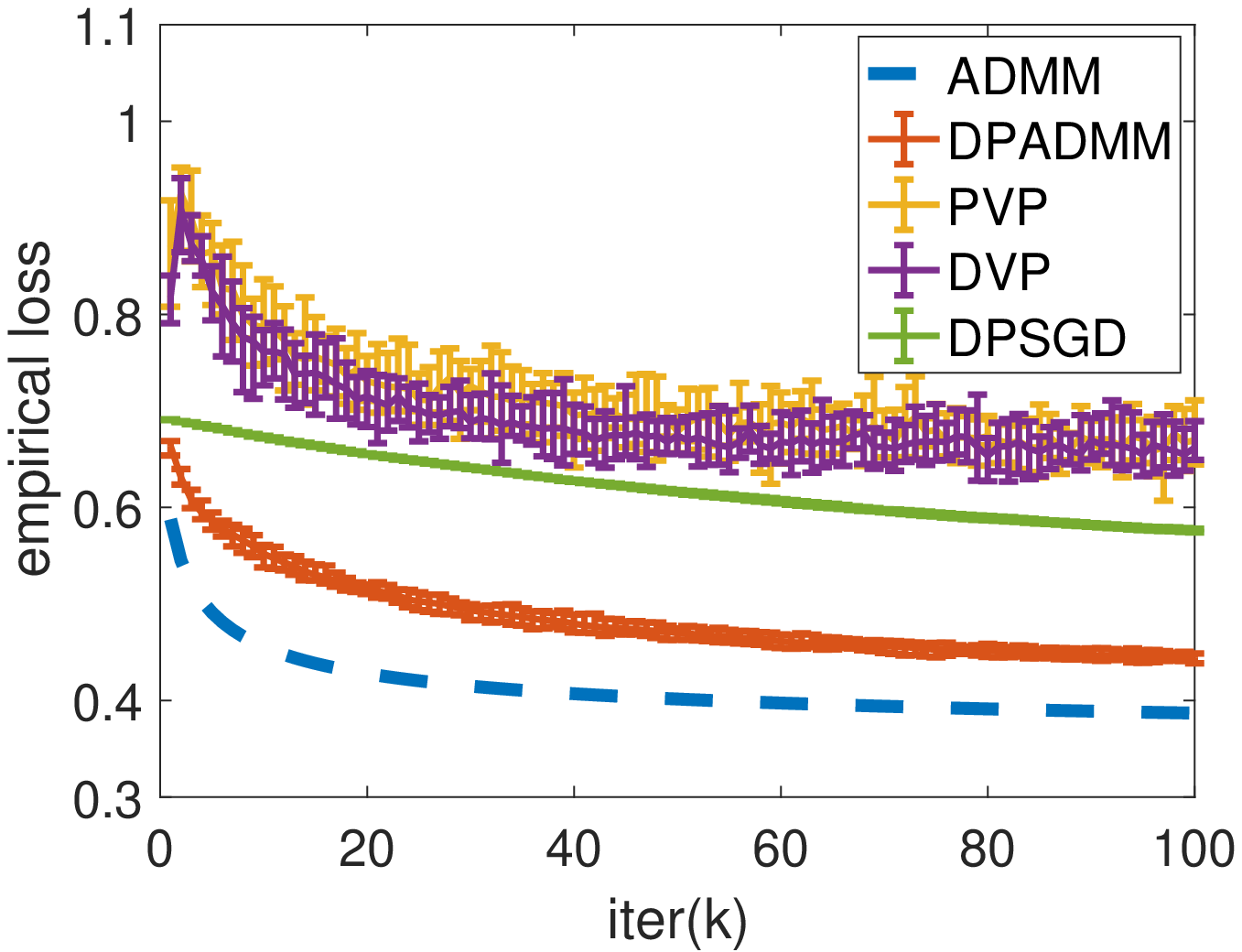}%
		}
		\hfil
		\subfloat[$\delta = 10^{-3}$]{\includegraphics[width=1.7in]{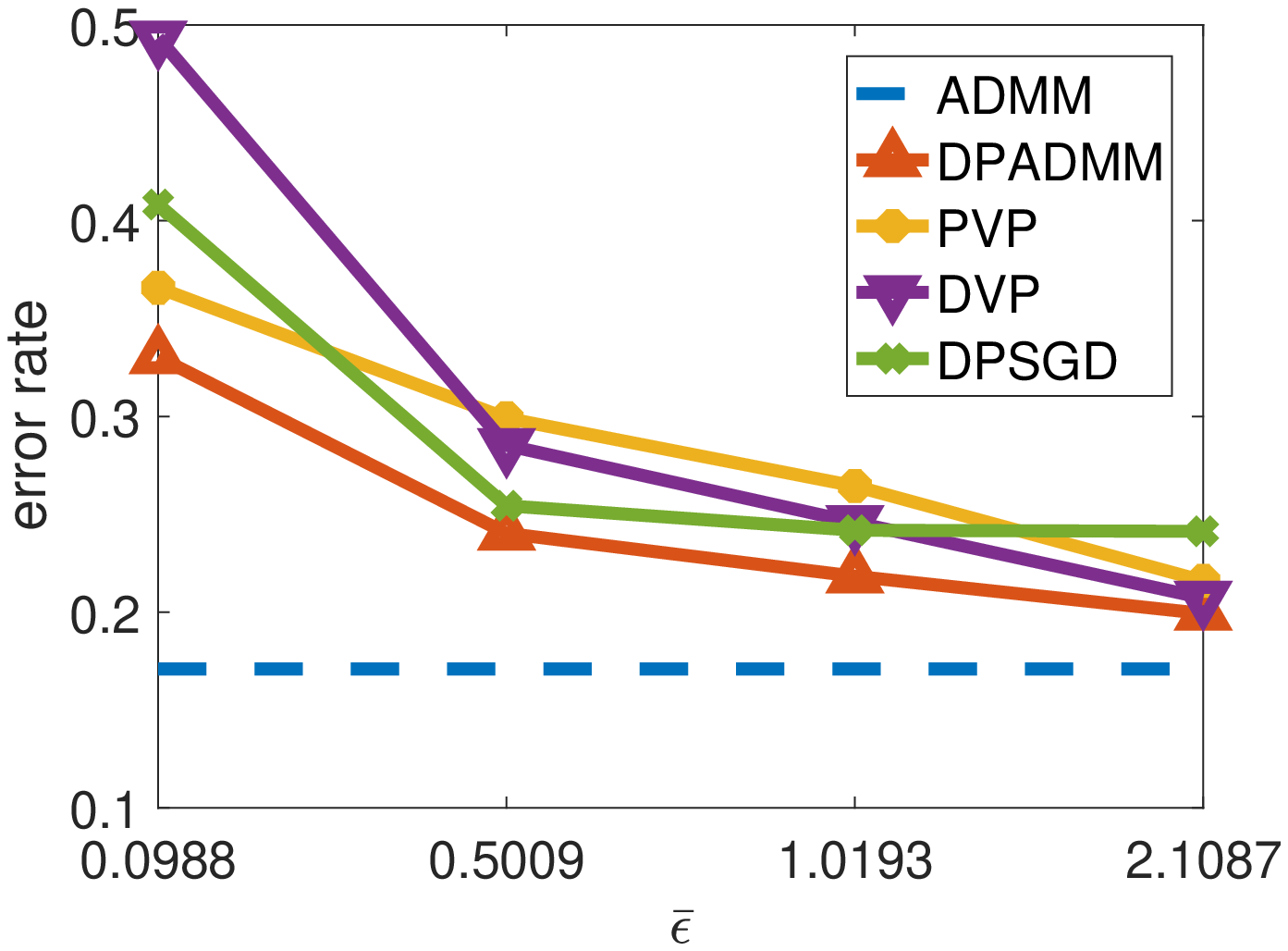}%
		}
		\hfil
		\subfloat[$\epsilon = 0.1$]{\includegraphics[width=1.7in]{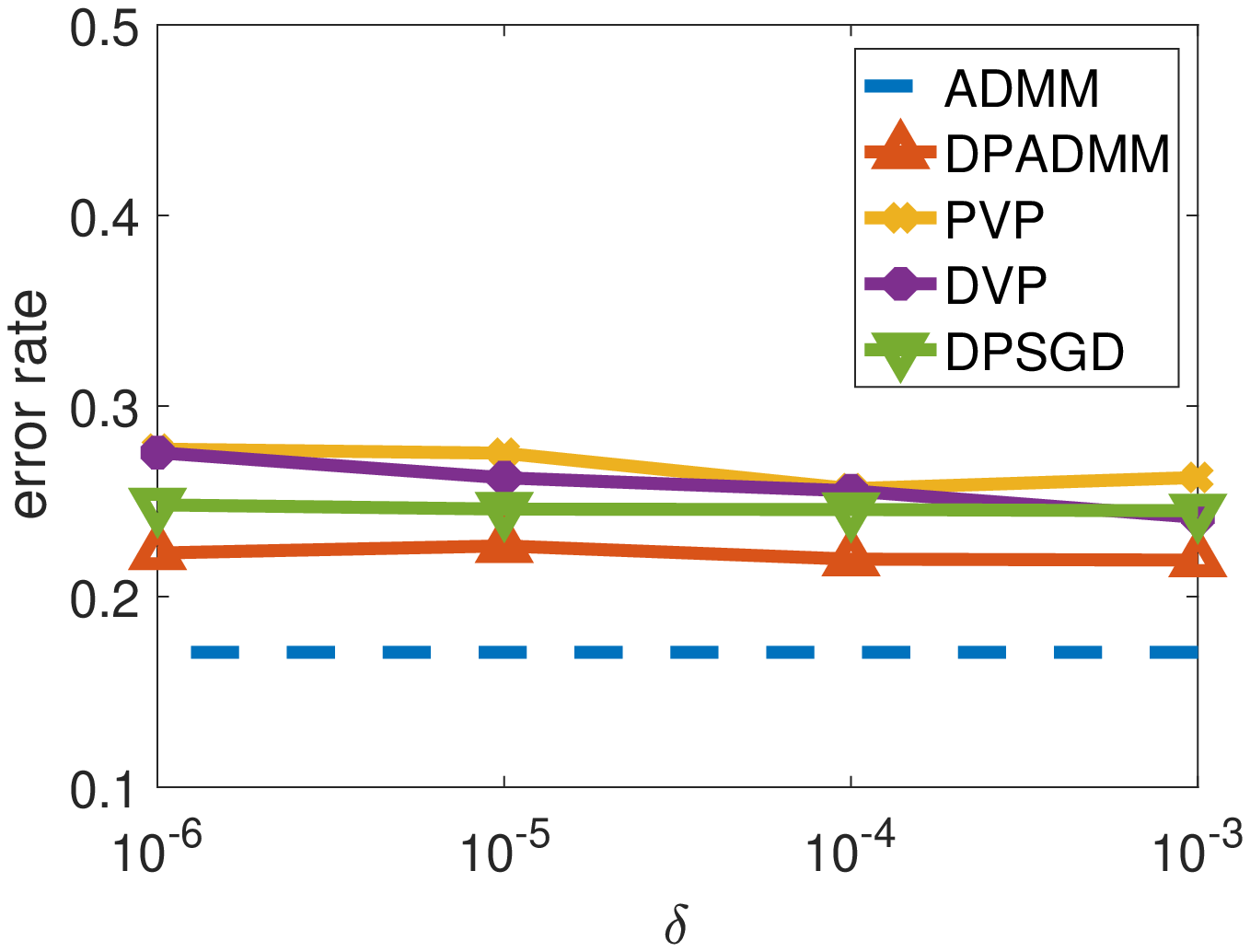}%
		}
		\hfil
		\caption{Accuracy comparison in empirical loss and classification error rate ($l_2$-regularized logistic regression).}
		\label{fig:7}
		\vspace*{-.15in}
	\end{figure*}
	
	The DP-ADMM steps for $l_2$ regularized logistic regression are described as follows:
	\begin{subequations}
		\begin{align}
		\boldsymbol{w}_i^{k} = &  \bigg(\frac{1}{m_i}\sum_{j=1}^{m_i} \frac{\boldsymbol{b}_{i,j}\boldsymbol{a}_{i,j}}{1+\exp(\boldsymbol{b}_{i,j}\boldsymbol{\tilde{w}}_i^{{k-1}^{\intercal}}\boldsymbol{a}_{i,j})}  -     \frac{\lambda}{n}\boldsymbol{\tilde{w}}_i^{k-1} + \boldsymbol{\gamma}_i^{k-1} \nonumber \\ & \quad \quad\quad +  \rho\boldsymbol{w}^{k-1}   + \boldsymbol{\tilde{w}}_i^{k-1}/\eta_i^{k}\bigg)\bigg(\rho+1/\eta_i^{k}\bigg)^{-1}, \\ 
		\boldsymbol{\tilde{w}}_i^{k} = & \boldsymbol{w}_i^{k} +\mathcal{MN}_{d,p}(0, \sigma_{i,k}^{2}\boldsymbol{\mathrm{I}}_d, \sigma_{i,k}^{2}\boldsymbol{\mathrm{I}}_p ),  \\
		\boldsymbol{w}^{k} = &\frac{1}{n}\sum_{i=1}^n\boldsymbol{\tilde{w}}^{k}_i - \frac{1}{n}\sum_{i=1}^n\boldsymbol{\gamma}^{k-1}_i/\rho, \\
		\boldsymbol{\gamma}_i^{k } = &\boldsymbol{\gamma}_{i}^{k-1} - \rho \left(\boldsymbol{\tilde{w}}_i^{k} - \boldsymbol{w}^{k}\right).
		\end{align}
	\end{subequations}
	
	Here the $l_2$ regularized objective function is convex and smooth, thus we apply Theorem \ref{the:3} to set $\eta_i^{k}$. Since we have $ \Vert {\nabla}^2 R(\cdot) \Vert \leq 1$, and we enforce $\Vert \nabla \ell(\cdot) \Vert \leq 1$ and $\Vert {\nabla}^2 \ell(\cdot) \Vert \leq 0.25$ by data preprocessing, thus we set $c_1 = 1$, $c_3 = 0.25$, and $c_4 =1$. We obtain the optimal solution $\boldsymbol{w}^{*}$ by pre-training, and set $c_w$ to be $89$. According to Theorem \ref{the:3}, we set $\eta_i^{k}$ to be $ {\big(0.25+10^{-6}+2\sqrt{416k\ln(1.25/\delta)}/\big(89 m_i \epsilon\big)\big)}^{-1}$.
	
	We fist investigate the performance of our approach under the settings with different numbers of distributed data sources and Figure \ref{fig:4} depicts the corresponding accuracy changes (accuracy decreases with increasing number of agents). Since the total data size is fixed, when we consider a larger number of agents, the size of local dataset is smaller, so the training model has lower accuracy due to more added noise for the same level of privacy guarantee. In the following simulations, we focus on the case where the number of agents is $100$. Next, we show the convergence properties of our approach. Figure \ref{fig:6} demonstrates that under weaker privacy guarantee (larger $\epsilon$ and larger $\delta$), our approach has better convergence, which is consistent with Theorem \ref{the:3}. We evaluate the accuracy of our approach by comparing it with ADMM, PVP, DVP, and DPSGD on empirical loss and classification error rate. Figure \ref{fig:7} shows that our approach outperforms PVP, DVP, and DPSGD. Specifically, ADMM has fast convergence but is sensitive to noise. Thus the methods directly perturbing intermediate results in ADMM (PVP and DVP) have poor performance. Gradient-based method (DPSGD) has good noise-resilience property but converges slowly. Our approach is based on ADMM framework, and combines the approximate augmented Lagrangian function with time-varying Gaussian noise addition to achieve higher utility. Furthermore, the results in Figure \ref{fig:7} also show the utility-privacy trade-off of our approach: larger $\epsilon$ and larger $\delta$ indicating weaker privacy guarantee would result in better utility. Finally, we show the advantage of our approach in computation cost by running time. Table \ref{Table 3} gives the comparison and shows that DP-ADMM has much less computation cost than all three ADMM baseline algorithms, which is resulted from the first-order approximation used in our approach enabling updates with closed-form solutions.

	\section{Related work}   \label{sec:rela}
	
	The existing literature related to our work could be categorized by: privacy-preserving empirical risk minimization, privacy-preserving distributed learning, and variants of ADMM.
	
	\textbf{Privacy-preserving empirical risk minimization.} There have been tremendous research efforts on privacy-preserving empirical risk minimization \cite{ChMo11,BaSm14,WangYe17,ThSm13}. Most of them focus on a centralized setting where sensitive data is collected and stored centrally, thus the privacy leakage comes from the final released trained model. Chaudhuri et al. \cite{ChMo11} propose two perturbation methods: output perturbation and objective perturbation to guarantee $\epsilon$-differential privacy. Bassily et al. \cite{BaSm14} provide a systematic investigation of differentially private algorithms for convex empirical risk minimization and propose efficient algorithms with tighter error bound. Wang et al. \cite{WangYe17} focus on a more general problem: non-convex problem, and propose a faster algorithm based on a proximal stochastic gradient method. Smith and Thakurta \cite{ThSm13} explore the stability of model selection problems, and propose two differentially private algorithms based on perturbation stability and subsampling stability respectively.
	
	\textbf{Privacy-preserving distributed learning.} Preserving privacy in distributed learning is challenging due to frequent information exchange in the iterative process. Recently, much works have been done to develop privacy-preserving distributed learning algorithms. Some of them employ cryptography-based methods in the protocol to hide the private information \cite{BoIv17,WaHu17,ZhAh19,gong2015privacy}. A recent work \cite{ZhAh19} uses partially homomorphic cryptography in ADMM-based distributed learning to preserve data privacy but the proposed approach cannot protect the information leakage of the private user data from the final learned models. In contrast, our approach provides differential privacy in the final trained machine learning models. Among the works on distributed learning with differential privacy, most of them focus on subgradient-based algorithms \cite{BeGu17,HaTo17,HaEg17,HuMi15} and only a few works consider ADMM-based methods \cite{ZhZh17,ZhKh18,ZhangKh18,DiEr19,guo2018practical}. Zhang and Zhu \cite{ZhZh17} propose two perturbation methods: primal perturbation and dual perturbation to guarantee dynamic differential privacy in ADMM-based distributed learning. Zhang et al. \cite{ZhKh18} propose to perturb the penalty parameter of ADMM to guarantee differential privacy. Zhang et al. \cite{ZhangKh18} propose recycled ADMM with differential privacy guarantee where the results from odd iterations could be re-utilized by the even iterations, and thus half of updates incur no privacy leakage. Guo and Gong \cite{guo2018practical} preserve differential privacy in the asynchronous ADMM algorithm. We design an ADMM-based distributed learning scheme with differential privacy which uses approximate augmented Lagrangian function for all iterations and adaptively changes the variance of added Gaussian noise in each iteration. We also use moments accountant method to analyze the total privacy loss to better estimate the trade-off between the data privacy and utility. We are the first to analyze rigorously the convergence rate and utility performance of ADMM with differential privacy.
	
	\textbf{Variants of ADMM.} Some variants of ADMM have been proposed recently for applicability to more generous problems. Linearized ADMM \cite{YaYu13,LinLiu11} replaces the quadratic function in the augmented Lagrangian function with a linearized approximation and thus provides a better way to solve subproblems without closed-form solutions.
	Stochastic ADMM \cite{OuHe13,AzSr14} considers stochastic and composite objective functions caused by natural uncertainties in observations. Our DP-ADMM algorithm inherits the features of linearized ADMM and stochastic ADMM, and guarantees strong differential privacy with good utility and low computation cost.

	\section{Conclusion}  \label{sec:conc}
	In this paper, we have proposed an improved ADMM-based differentially private distributed learning algorithm, DP-ADMM, for a class of learning problems that can be formulated as convex regularized empirical risk minimization. By designing an approximate augmented Lagrangian function and Gaussian mechanism with time-varying variance, our novel approach is noise-resilient, convergent and computation-efficient, especially under high privacy guarantee. We have also applied the moments accountant method to analyze the end-to-end privacy loss of the proposed iterative algorithm. The theoretical convergence guarantee and utility bound of our approach are derived. The evaluations on real-world datasets have demonstrated the effectiveness of our approach in the setting under high privacy guarantee.

	
	
	%
	
	
	\bibliographystyle{IEEEtran}
	\bibliography{myrefer}
	
	\newpage
	
	\onecolumn
	
	\appendices
	
	\section{Lemma \ref{lem:5} ($l_2$ sensitivity of primal variable update in Algorithm \ref{ag:3})} \label{ap:sen}
	
	\begin{lemma} \label{lem:5}
		Assume the objective function is smooth, $R(\cdot)$ is $1$-strongly convex, and $\Vert  \nabla \ell(\cdot) \Vert \leq c_1$. The $l_2$ sensitivity of primal variable update in Algorithm \ref{ag:3} is defined by:
		\begin{equation}
		\max_{\mathcal{D}_i, \mathcal{D}_i^{'}} \Vert \boldsymbol{w}_{i,\mathcal{D}_i}^{k}  - \boldsymbol{w}_{i,\mathcal{D}^{'}_i}^{k}  \Vert =\frac{2 c_1}{(\lambda/n+\rho) m_i}. 
		\end{equation}
	\end{lemma}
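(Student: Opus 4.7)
\section{Proof Proposal for Lemma \ref{lem:5}}

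The plan is to exploit the strong convexity of the augmented Lagrangian in $\boldsymbol{w}_i$ and control how much the minimizer can shift when a single training sample is replaced. Let $g(\boldsymbol{w}_i) := \mathcal{L}_{\rho,i}(\boldsymbol{w}_i, \boldsymbol{w}^{k-1}, \boldsymbol{\gamma}_i^{k-1})$ denote the objective evaluated on $\mathcal{D}_i$, and let $g'(\boldsymbol{w}_i)$ denote the same objective with $\mathcal{D}_i$ replaced by a neighboring dataset $\mathcal{D}_i'$ that differs in the $m_i$-th tuple. The two minimizers are $\boldsymbol{w}_{i,\mathcal{D}_i}^{k}$ and $\boldsymbol{w}_{i,\mathcal{D}_i'}^{k}$, and I want a uniform bound on $\|\boldsymbol{w}_{i,\mathcal{D}_i}^{k}-\boldsymbol{w}_{i,\mathcal{D}_i'}^{k}\|$.

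First, I would establish strong convexity of $g$. The regularizer contributes $(\lambda/n)\,R(\boldsymbol{w}_i)$, which is $(\lambda/n)$-strongly convex because $R$ is $1$-strongly convex. The quadratic penalty $\tfrac{\rho}{2}\|\boldsymbol{w}_i-\boldsymbol{w}^{k-1}\|^2$ contributes an additional $\rho$ of strong convexity, while the loss term and the linear dual term preserve convexity. Hence $g$ (and similarly $g'$) is $\mu$-strongly convex in $\boldsymbol{w}_i$ with $\mu=\lambda/n+\rho$.

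Next, I would quantify the perturbation. Since only the $m_i$-th sample changes, the gradient gap is
\begin{equation*}
\nabla g(\boldsymbol{w}_i)-\nabla g'(\boldsymbol{w}_i)=\frac{1}{m_i}\!\left[\nabla\ell(\boldsymbol{a}_{i,m_i},\boldsymbol{b}_{i,m_i},\boldsymbol{w}_i)-\nabla\ell(\boldsymbol{a}'_{i,m_i},\boldsymbol{b}'_{i,m_i},\boldsymbol{w}_i)\right],
\end{equation*}
whose norm is at most $2c_1/m_i$ by the assumption $\|\nabla\ell(\cdot)\|\le c_1$ and the triangle inequality. I would then combine this with the standard minimizer-stability argument: using first-order optimality $\nabla g(\boldsymbol{w}_{i,\mathcal{D}_i}^k)=0=\nabla g'(\boldsymbol{w}_{i,\mathcal{D}_i'}^k)$ together with $\mu$-strong monotonicity of $\nabla g$,
\begin{equation*}
\mu\bigl\|\boldsymbol{w}_{i,\mathcal{D}_i}^k-\boldsymbol{w}_{i,\mathcal{D}_i'}^k\bigr\|^2\le\bigl\langle\nabla g(\boldsymbol{w}_{i,\mathcal{D}_i'}^k)-\nabla g(\boldsymbol{w}_{i,\mathcal{D}_i}^k),\,\boldsymbol{w}_{i,\mathcal{D}_i'}^k-\boldsymbol{w}_{i,\mathcal{D}_i}^k\bigr\rangle=\bigl\langle\nabla g(\boldsymbol{w}_{i,\mathcal{D}_i'}^k)-\nabla g'(\boldsymbol{w}_{i,\mathcal{D}_i'}^k),\,\boldsymbol{w}_{i,\mathcal{D}_i'}^k-\boldsymbol{w}_{i,\mathcal{D}_i}^k\bigr\rangle,
\end{equation*}
then applying Cauchy--Schwarz and dividing through by $\|\boldsymbol{w}_{i,\mathcal{D}_i}^k-\boldsymbol{w}_{i,\mathcal{D}_i'}^k\|$ yields the desired bound $2c_1/\bigl(m_i(\lambda/n+\rho)\bigr)$.

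The main obstacle is less technical than bookkeeping: one needs the unperturbed ADMM update to possess a unique minimizer in closed form even though $\ell$ need not be strongly convex; this is why the $1$-strong convexity of $R$ and the presence of the $\rho$-penalty are both required, and they must together produce the exact constant $\lambda/n+\rho$ in the denominator. A secondary subtlety is that Algorithm \ref{ag:3} (unlike DP-ADMM) does not contain a prox-term $\|\boldsymbol{w}_i-\tilde{\boldsymbol{w}}_i^{k-1}\|^2/(2\eta_i^k)$, so the only sources of strong convexity are $R$ and the quadratic penalty, which is precisely why the smoothness and strong-convexity assumptions on the objective cannot be dropped here, in contrast to Lemma \ref{lem:s}.
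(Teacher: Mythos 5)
Your proposal is correct and follows essentially the same route as the paper's proof: both use the $(\lambda/n+\rho)$-strong convexity of the local augmented Lagrangian, first-order optimality of the two minimizers, strong monotonicity of the gradient (the paper invokes Lemma 14 of Shalev-Shwartz's thesis for this step), and Cauchy--Schwarz to bound the minimizer shift by the single-sample gradient gap $2c_1/m_i$ divided by the strong-convexity modulus. The only difference is notational (the paper writes the perturbed objective as $G+g$ with $g$ the difference of the two single-sample loss terms, while you work with the two full objectives directly), which does not change the argument.
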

	\begin{proof}
		We define:
		\begin{subequations}
			\begin{align}
			G(\boldsymbol{w}_i) &= \mathcal{L}_{\rho,i}(\boldsymbol{w}_i, \boldsymbol{w}^{k-1}, \boldsymbol{\gamma}_i^{k-1}), \nonumber \\
			g(\boldsymbol{w}_i) &= \frac{1}{m_i} \ell(\boldsymbol{a}^{'}_{i,m_i}, \boldsymbol{b}^{'}_{i,m_i},\boldsymbol{w}_i)- \frac{1}{m_i} \ell(\boldsymbol{a}_{i,m_i}, \boldsymbol{b}_{i,m_i},\boldsymbol{w}_i ).  \nonumber  
			\end{align}
		\end{subequations}
		According to the first step of ADMM, we have:
		\begin{subequations}
			\begin{align}
			\boldsymbol{w}_{i,\mathcal{D}_i}^{k} = &\argmin_{\boldsymbol{w}_i}G(\boldsymbol{w}_i),      \\
			\boldsymbol{w}_{i,\mathcal{D}_i^{'}}^{k} = &\argmin_{\boldsymbol{w}_i} G(\boldsymbol{w}_i)+g(\boldsymbol{w}_i). 
			\end{align}
		\end{subequations}
		Also by assuming the smoothness of the objective function, the functions $G(\cdot)$ and $G(\cdot)+g(\cdot)$ are smooth, thus we have:
		\begin{equation}
		\nabla G(\boldsymbol{w}_{i,\mathcal{D}_i}^{k}) =  \nabla G(\boldsymbol{w}_{i,\mathcal{D}_i^{'}}^{k})+\nabla g(\boldsymbol{w}_{i,\mathcal{D}_i^{'}}^{k}) = 0.
		\end{equation}
		
		Since we assume that the regularizer $R(\cdot)$ is $1$-strongly convex, then function $G(\cdot)$ is $(\lambda/n+\rho)$-strongly convex.
		From the Lemma 14 of \cite{shalev2007}, we have:
		\begin{equation}
		{\big(\nabla G(\boldsymbol{w}_{i,\mathcal{D}_i}^{k})-\nabla G(\boldsymbol{w}_{i,\mathcal{D}_i^{'}}^{k})\big)}^{\intercal} (\boldsymbol{w}_{i,\mathcal{D}_i}^{k} - 	\boldsymbol{w}_{i,\mathcal{D}_i^{'}}^{k})  \geq   (\lambda/n+\rho) {\Vert \boldsymbol{w}_{i,\mathcal{D}_i}^{k}- 	\boldsymbol{w}_{i,\mathcal{D}_i^{'}}^{k}\Vert}^2. 
		\end{equation}
		Combining this with the Cauchy-Schwartz inequality, we can get:
		\begin{equation} \label{eq:appb}
		\begin{split}
		\Vert \boldsymbol{w}_{i,\mathcal{D}_i}^{k}  -	\boldsymbol{w}_{i,\mathcal{D}_i^{'}}^{k} \Vert\cdot \Vert \nabla g(\boldsymbol{w}_{i,\mathcal{D}^{'}_i}^{k})\Vert  \geq& {\big(\nabla G(\boldsymbol{w}_{i,\mathcal{D}_i}^{k})-\nabla G(\boldsymbol{w}_{i,\mathcal{D}_i^{'}}^{k})\big)}^{\intercal} (\boldsymbol{w}_{i,\mathcal{D}_i}^{k} - 	\boldsymbol{w}_{i,\mathcal{D}_i^{'}}^{k}) \\ \geq & (\lambda/n+\rho) {\Vert \boldsymbol{w}_{i,\mathcal{D}_i}^{k}- 	\boldsymbol{w}_{i,\mathcal{D}_i^{'}}^{k}\Vert}^2. 
		\end{split}
		\end{equation}
		By dividing both sides of the above inequality by $(\lambda/n+\rho) \Vert \boldsymbol{w}_{i,\mathcal{D}_i}^{k}- 	\boldsymbol{w}_{i,\mathcal{D}_i^{'}}^{k}\Vert$, we can get:
		\begin{equation}
		\Vert \boldsymbol{w}_{i,\mathcal{D}_i}^{k}  - \boldsymbol{w}_{i,\mathcal{D}^{'}_i}^{k}  \Vert  \leq \frac{ \Vert  \nabla \ell(\boldsymbol{a}_{i,m_i}, \boldsymbol{b}_{i,m_i},\boldsymbol{w}_{i,\mathcal{D}^{'}_i}^{k})- \nabla \ell(\boldsymbol{a}^{'}_{i,m_i}, \boldsymbol{b}^{'}_{i,m_i},\boldsymbol{w}_{i,\mathcal{D}^{'}_i}^{k})\Vert}{m_i (\lambda/n+\rho)}. 
		\end{equation}
		As we assume that $\Vert  \nabla \ell(\cdot) \Vert \leq c_1$, then we obtain the result:
		\begin{equation}
		\max \Vert \boldsymbol{w}_{i,\mathcal{D}_i}^{k}  - \boldsymbol{w}_{i,\mathcal{D}^{'}_i}^{k}  \Vert = \frac{2 c_1}{(\lambda/n+\rho) m_i}. 
		\end{equation}
	\end{proof}
	
	\section{Proof of Theorem \ref{theo:4}} \label{ap:a}
	\begin{proof}
		We use the log moments of the privacy loss and their linear composability to get a tight bound of the total privacy loss. The $\tau^{th}$ log moment of the privacy loss of agent $i$ for $k$-th iteration could be defined by the log moment generating function at $\tau$:
		\begin{equation}
		\begin{split}
		\alpha_i^k (\tau)  = \ln\bigg( \mathbb{E}_{\boldsymbol{\tilde{w}}_i^{k}}\bigg[ \bigg(\frac{\Pr\big[\boldsymbol{\tilde{w}}_i^{k} \vert \mathcal{D}_i\big]}{\Pr\big[\boldsymbol{\tilde{w}}_i^{k} \vert \mathcal{D}^{'}_i\big]}\bigg)^{\tau}\bigg]\bigg).
		\end{split}
		\end{equation}
		In the $k$-th iteration of Algorithm \ref{ag:1}, we employ Gaussian mechanism with variance $\sigma_{i,k}^{2}$ to achieve ($\epsilon,\delta$)-differential privacy guarantee. We use $\mu_0$ to denote the probability density function (pdf) of $\mathcal{N}(0,\sigma_{i,k}^{2})$, and $\mu_1$ to denote the pdf of $\mathcal{N}(2c_1/\big(m_i(\rho+1/\eta_i^{k})\big),\sigma_{i,k}^{2})$. We obtain that $\alpha_i^k (\tau)$ by $\alpha_i^k (\tau) = \ln \big(\max (E_1, E_2)\big)$, where
		\begin{equation}
		E_1 = \mathbb{E}_{z\sim \mu_{0}}\bigg[\bigg(\frac{\mu_{0}(z)}{\mu_{1}(z)}\bigg)^{\tau}\bigg] \quad \text{and} \quad  E_2 =  \mathbb{E}_{z\sim \mu_{1}}\bigg[\bigg(\frac{\mu_{1}(z)}{\mu_{0}(z)}\bigg)^{\tau}\bigg]. \nonumber
		\end{equation}
		Since,
		\begin{subequations}
			\begin{align}
			\mathbb{E}_{z\sim \mu_{0}}\bigg[(\frac{\mu_{0}(z)}{\mu_{1}(z)})^{\tau}\bigg] = & \exp{\bigg(\frac{\tau(\tau+1)\epsilon^2}{4\ln(1.25/\delta)}\bigg)},  \\ 
			\mathbb{E}_{z\sim \mu_{1}}\bigg[(\frac{\mu_{1}(z)}{\mu_{0}(z)})^{\tau}\bigg] =& \exp{\bigg(\frac{\tau(\tau+1)\epsilon^2}{4\ln(1.25/\delta)}\bigg)}, 
			\end{align}
		\end{subequations}
		we have:
		\begin{equation}
		\alpha_i^k (\tau) =\frac{\tau(\tau+1)\epsilon^2}{4\ln(1.25/\delta)}. 
		\end{equation}
		According to Theorem 2 (linear composability) in \cite{AbCh16}, we have the $\tau^{th}$ log moment of the overall privacy loss from $i$:
		\begin{equation}
		\alpha_i (\tau)  = \sum_{k=1}^t \alpha_i^k (\tau) = \frac{t\tau(\tau+1)\epsilon^2}{4\ln(1.25/\delta)}. 
		\end{equation}
		We aim to prove that our proposed algorithm DP-ADMM (Algorithm \ref{ag:1}) achieves $(\bar{\epsilon}, \delta)$-differential privacy. According to Theorem 2 (tail bound) in \cite{AbCh16}, we  have:
		\begin{equation*}
		\delta = \min_{\tau \in \mathbb{Z}^{+}}   \exp(\alpha_i(\tau) - \tau \bar{\epsilon}) = \min_{\tau\in \mathbb{Z}^{+}}   \exp\bigg(\frac{t\tau(\tau+1)\epsilon^2}{4\ln(1.25/\delta)}- \tau \bar{\epsilon}\bigg). 
		\end{equation*}
		Since $\delta \in (0,1)$, there exists a positive integer $\tau$ to make $t\tau(\tau+1)\epsilon^2/\big(4\ln(1.25/\delta)\big)- \tau \bar{\epsilon} < 0$. Furthermore, $t\tau(\tau+1)\epsilon^2/\big(4\ln(1.25/\delta)\big)- \tau \bar{\epsilon}$ is a quadratic function w.r.t. $\tau$. Thus, if there is a solution to the above minimization problem, we must have: when $\tau=1$,
		\begin{equation}
		\frac{t\tau(\tau+1)\epsilon^2}{4\ln(1.25/\delta)}- \tau \bar{\epsilon} = \frac{ t\epsilon^2}{2\ln(1.25/\delta)} -  \bar{\epsilon}<0.  
		\end{equation}
		Therefore, we obtain:
		\begin{equation}
		\frac{t \epsilon^2}{2\ln(1.25/\delta)} < \bar{\epsilon}.	\label{eq:proof1_1}
		\end{equation}
		The minimum of $tx(x+1)\epsilon^2/\big(4\ln(1.25/\delta)\big)- x \bar{\epsilon}$ is $-t\epsilon^2/\big(16\ln(1.25/\delta)\big)+\bar{\epsilon}/2-\bar{\epsilon}^2\ln(1.25/\delta)/\big(t\epsilon^2\big)$ when $x \in \mathbb{R}$. Thus:
		\begin{equation}
		\ln(\delta) = \min_{\tau \in \mathbb{Z}^{+}}  \bigg(\frac{t\tau(\tau+1)\epsilon^2}{4\ln(1.25/\delta)}- \tau \bar{\epsilon}\bigg) \geq -\frac{t\epsilon^2}{16\ln(1.25/\delta)}+\frac{\bar{\epsilon}}{2}-\frac{\bar{\epsilon}^2\ln(1.25/\delta)}{t\epsilon^2}  \label{eq:proof1_2}
		\end{equation}
		From \eqref{eq:proof1_1} and \eqref{eq:proof1_2}, we obtain:
		\begin{equation}
		\begin{split}
		\ln(1/\delta)  \leq -\frac{3\bar{\epsilon}}{8} +\frac{\bar{\epsilon}^2\ln(1.25/\delta)}{t\epsilon^2}  \leq \frac{\bar{\epsilon}^2\ln(1.25/\delta)}{t\epsilon^2},
		\end{split}
		\end{equation}
		which leads to the following inequality: 
		\begin{equation}
		\bar{\epsilon} \geq \sqrt{\frac{t\ln(1/\delta)}{\ln(1.25/\delta)}}\epsilon .
		\end{equation}
		Therefore, there exists a constant $c_0$, the overall privacy loss $\bar{\epsilon}$ satisfies:
		\begin{equation}
		\bar{\epsilon} = c_0 \sqrt{t}\epsilon.
		\end{equation}
	\end{proof}
	
	\section{Lemma \ref{lem:1} used in the proof of Lemma \ref{lem:2}} \label{ap:lem2}
	\begin{lemma} \label{lem:1}
		Assume $L(\cdot)$ is a convex differentiable function. $s \geq 0$ is a scalar. For any vector $\boldsymbol{x} \in \mathbb{R}^d$ and $\boldsymbol{y} \in \mathbb{R}^d$, we denote their Bregman divergence as $D(\boldsymbol{x},\boldsymbol{y}) = h(\boldsymbol{x}) - h(\boldsymbol{y}) - \big\langle \nabla h(\boldsymbol{y}) ,\boldsymbol{x}-\boldsymbol{y}  \big\rangle$, where $h(\cdot)$ is a continuously-differentiable real-valued and strictly convex function. If we define:
		\begin{equation}
		\boldsymbol{x}^{*} = \argmin_{\boldsymbol{x}} L(\boldsymbol{x}) + s D(\boldsymbol{x},\boldsymbol{y}),
		\end{equation}
		then
		\begin{equation}
		\big\langle \nabla L(\boldsymbol{x}^{*}) , \boldsymbol{x}^{*}-\boldsymbol{x} \big\rangle \leq  s\big(D(\boldsymbol{x},\boldsymbol{y})-D(\boldsymbol{x},\boldsymbol{x}^{*})-D(\boldsymbol{x}^{*},\boldsymbol{y})\big).
		\end{equation}
	\end{lemma}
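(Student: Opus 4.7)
The plan is to derive the bound from two independent ingredients: the first-order optimality condition for $\boldsymbol{x}^*$, and the classical three-point identity for Bregman divergences. Since both $L$ and $D(\cdot,\boldsymbol{y})$ are convex and differentiable, the optimality of $\boldsymbol{x}^*$ for $L(\boldsymbol{x}) + s D(\boldsymbol{x},\boldsymbol{y})$ yields the variational inequality $\big\langle \nabla L(\boldsymbol{x}^*) + s \nabla_{\boldsymbol{x}} D(\boldsymbol{x}^*,\boldsymbol{y}),\, \boldsymbol{x} - \boldsymbol{x}^* \big\rangle \geq 0$ for every admissible $\boldsymbol{x}$, which I would immediately rearrange into
\begin{equation*}
\big\langle \nabla L(\boldsymbol{x}^*),\, \boldsymbol{x}^* - \boldsymbol{x} \big\rangle \;\leq\; s\,\big\langle \nabla_{\boldsymbol{x}} D(\boldsymbol{x}^*,\boldsymbol{y}),\, \boldsymbol{x} - \boldsymbol{x}^* \big\rangle.
\end{equation*}

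First I would differentiate the definition $D(\boldsymbol{x},\boldsymbol{y}) = h(\boldsymbol{x}) - h(\boldsymbol{y}) - \langle \nabla h(\boldsymbol{y}),\boldsymbol{x}-\boldsymbol{y}\rangle$ at the first argument, which gives $\nabla_{\boldsymbol{x}} D(\boldsymbol{x}^*,\boldsymbol{y}) = \nabla h(\boldsymbol{x}^*) - \nabla h(\boldsymbol{y})$. Substituting this into the display above reduces everything to proving the three-point identity
\begin{equation*}
\big\langle \nabla h(\boldsymbol{x}^*) - \nabla h(\boldsymbol{y}),\, \boldsymbol{x} - \boldsymbol{x}^* \big\rangle \;=\; D(\boldsymbol{x},\boldsymbol{y}) - D(\boldsymbol{x},\boldsymbol{x}^*) - D(\boldsymbol{x}^*,\boldsymbol{y}).
\end{equation*}
Second, I would verify this identity by direct expansion: writing out the three Bregman divergences according to their definitions causes the $h(\boldsymbol{x})$-terms and the $h(\boldsymbol{y})$- and $h(\boldsymbol{x}^*)$-terms to cancel, leaving precisely $\langle \nabla h(\boldsymbol{x}^*),\boldsymbol{x}-\boldsymbol{x}^*\rangle - \langle \nabla h(\boldsymbol{y}),\boldsymbol{x}-\boldsymbol{x}^*\rangle$ as required. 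Chaining the two displays then yields the claimed inequality.

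I do not anticipate a real obstacle: the lemma is the standard \emph{Bregman proximal inequality} that underpins mirror-descent-style analyses, and both ingredients are completely mechanical. The only point requiring care is that the optimality condition is written as an inner-product (variational) inequality rather than a gradient equation, which is exactly what produces the $\leq$ (rather than $=$) in the conclusion; this also ensures the bound remains valid when the minimization is implicitly over a convex feasible set, as is the case when Lemma~\ref{lem:1} is applied inside the convergence analysis of Lemma~\ref{lem:2}.
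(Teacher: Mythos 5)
Your proposal is correct and follows essentially the same route as the paper's own proof: the variational (first-order optimality) inequality for $\boldsymbol{x}^{*}$, the identification $\nabla_{\boldsymbol{x}} D(\boldsymbol{x}^{*},\boldsymbol{y}) = \nabla h(\boldsymbol{x}^{*}) - \nabla h(\boldsymbol{y})$, and the three-point identity for Bregman divergences, chained together in the same order. The only difference is cosmetic --- you spell out the expansion verifying the three-point identity, which the paper states without detail.
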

	\begin{proof}
		According to the optimality condition,
		\begin{equation}
		\big\langle \nabla L(\boldsymbol{x}^{*})+  s \nabla D(\boldsymbol{x}^{*},\boldsymbol{y}), \boldsymbol{x}-\boldsymbol{x}^{*}  \big\rangle \geq 0.
		\end{equation}
		Then,
		\begin{equation}
		\begin{split}
		\big\langle \nabla L(\boldsymbol{x}^{*}) , \boldsymbol{x}^{*}-\boldsymbol{x} \big\rangle  \leq & s\big\langle  \nabla D(\boldsymbol{x}^{*},\boldsymbol{y}), \boldsymbol{x}-\boldsymbol{x}^{*}  \big\rangle\\
		= & s\big\langle \nabla h(\boldsymbol{x}^{*})-\nabla h(\boldsymbol{y}), \boldsymbol{x}-\boldsymbol{x}^{*} \big\rangle \\
		= & s\big(D(\boldsymbol{x},\boldsymbol{y})-D(\boldsymbol{x},\boldsymbol{x}^{*})-D(\boldsymbol{x}^{*},\boldsymbol{y})\big).
		\end{split}
		\end{equation}
	\end{proof}

	\section{Proof of Lemma \ref{lem:2}}\label{ap:b}
	\begin{proof}
		
		Since we assume that $\ell(\cdot)$ and $R(\cdot)$ are convex, the function $f_i(\cdot)$ is convex. Due to the convexity of $f_i(\cdot)$, we have:
		\begin{equation}
		f_i(\boldsymbol{\tilde{w}}_i^{k-1})-f_i(\boldsymbol{w}_i) \leq  \big\langle f^{'}_i(\boldsymbol{\tilde{w}}_i^{k-1}), \boldsymbol{\tilde{w}}_i^{k-1} - \boldsymbol{w}_i \big\rangle,
		\end{equation}
		which can lead to:
		\begin{equation}
		\begin{split}
		f_i(\boldsymbol{\tilde{w}}_i^{k-1})-f_i(\boldsymbol{w}_i)+ \big\langle \boldsymbol{\tilde{w}}_i^{k}- \boldsymbol{w}_i, - \boldsymbol{\gamma}_{i}^{k} \big\rangle 
		\leq & \big\langle f^{'}_i(\boldsymbol{\tilde{w}}_i^{k-1}), \boldsymbol{\tilde{w}}_i^{k-1} - \boldsymbol{w}_i \big\rangle+ \big\langle \boldsymbol{\tilde{w}}_i^{k}- \boldsymbol{w}_i, - \boldsymbol{\gamma}_{i}^{k} \big\rangle\\
		= &  \big\langle f^{'}_i(\boldsymbol{\tilde{w}}_i^{k-1})-(\rho+1/\eta_i^{k}) \boldsymbol{\xi}_i^{k}, \boldsymbol{\tilde{w}}_i^{k-1} - \boldsymbol{\tilde{w}}_i^{k} \big\rangle - \big(\rho+1/\eta_i^{k}\big) \big\langle\boldsymbol{\xi}_i^{k}, \boldsymbol{w}_i - \boldsymbol{\tilde{w}}_i^{k-1} \big\rangle \\&+ \big\langle f^{'}_i(\boldsymbol{\tilde{w}}_i^{k-1})-  \boldsymbol{\gamma}_{i}^{k}-(\rho+1/\eta_i^{k}) \boldsymbol{\xi}_i^{k}, \boldsymbol{\tilde{w}}_i^{k} - \boldsymbol{w}_i \big\rangle. \label{eq:hh}  
		\end{split}
		\end{equation}
		According to the Line $10$ of Algorithm \ref{ag:1}, we have:
		\begin{equation}
		\begin{split}
		\big\langle f^{'}_i(\boldsymbol{\tilde{w}}_i^{k-1})-  \boldsymbol{\gamma}_{i}^{k}-(\rho+1/\eta_i^{k}) \boldsymbol{\xi}_i^{k}, \boldsymbol{\tilde{w}}_i^{k} - \boldsymbol{w}_i \big\rangle  = &  \big\langle f^{'}_i(\boldsymbol{\tilde{w}}_i^{k-1})-  \boldsymbol{\gamma}_i^{k-1}+ \rho(\boldsymbol{\tilde{w}}_i^{k} - \boldsymbol{w}^{k-1})-(\rho+1/\eta_i^{k}) \boldsymbol{\xi}_i^{k},   \boldsymbol{\tilde{w}}_i^{k} - \boldsymbol{w}_i \big\rangle \\& + \big\langle \boldsymbol{\tilde{w}}_i^{k} - \boldsymbol{w}_i, \rho(\boldsymbol{w}^{k-1} - \boldsymbol{w}^{k}) \big\rangle. \label{eq:hh1}
		\end{split}
		\end{equation}
		By combining \eqref{eq:hh} and \eqref{eq:hh1}, we obtain:
		\begin{equation}
		\begin{split}
		f_i(\boldsymbol{\tilde{w}}_i^{k-1})-f_i(\boldsymbol{w}_i)+ \big\langle \boldsymbol{\tilde{w}}_i^{k}- \boldsymbol{w}_i, - \boldsymbol{\gamma}_{i}^{k} \big\rangle  
		\leq &  \big\langle f^{'}_i(\boldsymbol{\tilde{w}}_i^{k-1})-(\rho+1/\eta_i^{k}) \boldsymbol{\xi}_i^{k}, \boldsymbol{\tilde{w}}_i^{k-1} - \boldsymbol{\tilde{w}}_i^{k} \big\rangle\\&+ \big\langle f^{'}_i(\boldsymbol{\tilde{w}}_i^{k-1})-  \boldsymbol{\gamma}_i^{k-1}+ \rho(\boldsymbol{\tilde{w}}_i^{k} - \boldsymbol{w}^{k-1}) -(\rho+1/\eta_i^{k}) \boldsymbol{\xi}_i^{k},  \boldsymbol{\tilde{w}}_i^{k} - \boldsymbol{w}_i \big\rangle  \\& + \big\langle \boldsymbol{\tilde{w}}_i^{k} - \boldsymbol{w}_i, \rho(\boldsymbol{w}^{k-1} - \boldsymbol{w}^{k}) \big\rangle -\big(\rho+1/\eta_i^{k}\big) \big\langle\boldsymbol{\xi}_i^{k}, \boldsymbol{w}_i - \boldsymbol{\tilde{w}}_i^{k-1} \big\rangle .  \label{eq:3}
		\end{split}
		\end{equation}
		We handle the last three terms separately. Firstly, we have:
		\begin{equation}
		\begin{split}
		\big\langle \boldsymbol{\tilde{w}}_i^{k} - \boldsymbol{w}_i, \rho(\boldsymbol{w}^{k-1} - \boldsymbol{w}^{k}) \big\rangle 
		= & \frac{\rho}{2}\big({\Vert \boldsymbol{w}_i - \boldsymbol{w}^{k-1} \Vert}^2 - {\Vert \boldsymbol{w}_i - \boldsymbol{w}^{k} \Vert}^2 \big)+ \frac{\rho}{2}\big({\Vert \boldsymbol{\tilde{w}}_i^{k} - \boldsymbol{w}^{k} \Vert}^2 -{\Vert \boldsymbol{\tilde{w}}_i^{k} - \boldsymbol{w}^{k-1} \Vert}^2\big)\\
		\leq & \frac{\rho}{2}\big({\Vert \boldsymbol{w}_i - \boldsymbol{w}^{k-1} \Vert}^2 - {\Vert \boldsymbol{w}_i - \boldsymbol{w}^{k} \Vert}^2\big)+ \frac{\rho}{2}{\Vert \boldsymbol{\tilde{w}}_i^{k} - \boldsymbol{w}^{k} \Vert}^2 \\
		= & \frac{\rho}{2}\big({\Vert \boldsymbol{w}_i - \boldsymbol{w}^{k-1} \Vert}^2 - {\Vert \boldsymbol{w}_i - \boldsymbol{w}^{k} \Vert}^2\big) + \frac{1}{2\rho}{\Vert \boldsymbol{\gamma}_{i}^{k}-\boldsymbol{\gamma}_i^{k-1} \Vert}^2. \label{eq:1}
		\end{split}
		\end{equation}
		According to the Line 4 and 6 of Algorithm \ref{ag:1}, $\boldsymbol{\tilde{w}}_i^{k}$ is equal to the solution to $\min_{\boldsymbol{w}_i} \big\langle  f^{'}_i(\boldsymbol{\tilde{w}}_i^{k-1}),  \boldsymbol{w}_i - \boldsymbol{\tilde{w}}_i^{k-1} \big\rangle -\big\langle \boldsymbol{\gamma}_i^{k-1} ,\boldsymbol{w}_i- \boldsymbol{w}^{k-1} \big\rangle + \rho {\Vert\boldsymbol{w}_i- \boldsymbol{w}^{k-1}\Vert}^2/2 + {\Vert \boldsymbol{w}_i - \boldsymbol{\tilde{w}}_i^{k-1}\Vert}^2/(2\eta_i^{k})-(\rho+1/\eta_i^{k}) \boldsymbol{\xi}_i^{k}\boldsymbol{w}_i$. By applying Lemma \ref{lem:1} where $D(\boldsymbol{x},\boldsymbol{y}) = \frac{1}{2}{\Vert \boldsymbol{x} - \boldsymbol{y} \Vert}^2 $, $s= 1/\eta_i^{k}$, and $L(\boldsymbol{x}) = \big\langle  f^{'}_i(\boldsymbol{\tilde{w}}_i^{k-1}),  \boldsymbol{x} - \boldsymbol{\tilde{w}}_i^{k-1} \big\rangle -\big\langle \boldsymbol{\gamma}_i^{k-1} ,\boldsymbol{x}- \boldsymbol{w}^{k-1} \big\rangle + \rho {\Vert \boldsymbol{x}- \boldsymbol{w}^{k-1}\Vert}^2/2 -(\rho+1/\eta_i^{k}) \boldsymbol{\xi}_i^{k}\boldsymbol{w}_i$, we have:
		\begin{equation}
		\begin{split}
		\big\langle f^{'}_i(\boldsymbol{\tilde{w}}_i^{k-1})-  \boldsymbol{\gamma}_i^{k-1}+ \rho(\boldsymbol{\tilde{w}}_i^{k} - \boldsymbol{w}^{k-1})   -(\rho+1/\eta_i^{k})\boldsymbol{\xi}_i^{k},  \boldsymbol{\tilde{w}}_i^{k} - \boldsymbol{w}_i \big\rangle  
		\leq   \frac{1}{2\eta_i^{k}}\big({\Vert \boldsymbol{w}_i - \boldsymbol{\tilde{w}}_i^{k-1} \Vert}^2 - {\Vert \boldsymbol{w}_i - \boldsymbol{\tilde{w}}_i^{k} \Vert}^2   -{\Vert \boldsymbol{\tilde{w}}_i^{k} - \boldsymbol{\tilde{w}}_i^{k-1} \Vert}^2 \big). \label{eq:2}
		\end{split}
		\end{equation}
		Lastly, based on Young's inequality, we have:
		\begin{equation}
		\begin{split}
		\big\langle f^{'}_i(\boldsymbol{\tilde{w}}_i^{k-1})-(\rho+1/\eta_i^{k}) \boldsymbol{\xi}_i^{k}, \boldsymbol{\tilde{w}}_i^{k-1} - \boldsymbol{\tilde{w}}_i^{k} \big\rangle 
		\leq  \frac{\eta_i^{k}}{2}{\big\Vert f^{'}_i(\boldsymbol{\tilde{w}}_i^{k-1})-(\rho+1/\eta_i^{k}) \boldsymbol{\xi}_i^{k}  \big\Vert}^2 + \frac{1}{2 \eta_i^{k}}{\Vert \boldsymbol{\tilde{w}}_i^{k} - \boldsymbol{\tilde{w}}_i^{k-1} \Vert}^2. \label{eq:4}
		\end{split}
		\end{equation}
		Combining \eqref{eq:3},\eqref{eq:1},\eqref{eq:2}, and \eqref{eq:4}, we have:
		\begin{equation}
		\begin{split}
		f_i(\boldsymbol{\tilde{w}}_i^{k-1})-f_i(\boldsymbol{w}_i)+ \big\langle \boldsymbol{\tilde{w}}_i^{k}- \boldsymbol{w}_i, - \boldsymbol{\gamma}_{i}^{k} \big\rangle 
		\leq & \frac{\eta_i^{k}}{2}{\Vert f^{'}_i(\boldsymbol{\tilde{w}}_i^{k-1})-(\rho+1/\eta_i^{k}) \boldsymbol{\xi}_i^{k}  \Vert}^2- \big(\rho+1/\eta_i^{k}\big)\big\langle  \boldsymbol{\xi}_i^{k}, \boldsymbol{w}_i - \boldsymbol{\tilde{w}}_i^{k-1} \big\rangle \\&+\frac{1}{2\eta_i^{k}}\big(  {\Vert \boldsymbol{w}_i - \boldsymbol{\tilde{w}}_i^{k-1} \Vert}^2-{\Vert \boldsymbol{w}_i - \boldsymbol{\tilde{w}}_i^{k} \Vert}^2\big) \\& + \frac{\rho}{2}({\Vert \boldsymbol{w}_i - \boldsymbol{w}^{k-1} \Vert}^2 - {\Vert \boldsymbol{w}_i - \boldsymbol{w}^{k} \Vert}^2)+ \frac{1}{2\rho}{\Vert \boldsymbol{\gamma}_{i}^{k}-\boldsymbol{\gamma}_i^{k-1} \Vert}^2. \label{eq:5}
		\end{split}
		\end{equation}
		Next, according to our algorithm where $\boldsymbol{\gamma}_i^{k} = \boldsymbol{\gamma}_{i}^{k-1} - \rho (\boldsymbol{\tilde{w}}_i^{k} - \boldsymbol{w}^{k})$ and $\boldsymbol{w}^{k} =  \frac{1}{n}\sum_{i=1}^n\boldsymbol{\tilde{w}}^{k}_i - \frac{1}{n}\sum_{i=1}^n\boldsymbol{\gamma}^{k-1}_i/\rho$, we have:
		\begin{equation}
		\begin{split}
		& \sum_{i=1}^{n} \big\langle \boldsymbol{w}^{k}- \boldsymbol{w} , \boldsymbol{\gamma}_{i}^{k} \big\rangle = 0. \label{eq:6}
		\end{split}
		\end{equation}
		And also, we could obtain:
		\begin{equation}
		\begin{split}
		\big\langle \boldsymbol{\gamma}_{i}^{k}-\boldsymbol{\gamma}_i  ,\boldsymbol{\tilde{w}}_i^{k}- \boldsymbol{w}^{k} \big\rangle 
		=  \frac{1}{\rho} \big\langle \boldsymbol{\gamma}_{i}^{k}-\boldsymbol{\gamma}_i ,\boldsymbol{\gamma}_i^{k-1}-\boldsymbol{\gamma}_{i}^{k} \big\rangle 
		=  \frac{1}{2\rho}\big({\Vert \boldsymbol{\gamma}_i-\boldsymbol{\gamma}_i^{k-1} \Vert}^2 - {\Vert \boldsymbol{\gamma}_i-\boldsymbol{\gamma}_{i}^{k} \Vert}^2 - {\Vert \boldsymbol{\gamma}_{i}^{k}-\boldsymbol{\gamma}_i^{k-1} \Vert}^2\big). \label{eq:7}
		\end{split}
		\end{equation}
		Thus, combining \eqref{eq:5}, \eqref{eq:6} and \eqref{eq:7}, we obtain the result in the Lemma 2:
		\begin{equation}
		\begin{split}
		&\sum_{i=1}^{n}\bigg( f_i(\boldsymbol{\tilde{w}}_i^{k-1})-f_i(\boldsymbol{w}_i) + {(\boldsymbol{u}_i^{k}-\boldsymbol{u}_i)}^{\intercal} F(\boldsymbol{u}_i^{k}) \bigg)\\
		=& \sum_{i=1}^n \bigg( f_i(\boldsymbol{\tilde{w}}_i^{k-1})-f_i(\boldsymbol{w}_i)+  \big\langle -\boldsymbol{\gamma}_{i}^{k},\boldsymbol{\tilde{w}}_i^{k}-\boldsymbol{w}_i \big\rangle  +\big\langle \boldsymbol{\gamma}_{i}^{k}, \boldsymbol{w}^{k}-\boldsymbol{w} \big\rangle +\big\langle \boldsymbol{\gamma}_{i}^{k}-\boldsymbol{\gamma}_i, \boldsymbol{\tilde{w}}_i^{k}-\boldsymbol{w}^{k} \big\rangle \bigg)\\
		\leq &  \sum_{i=1}^{n}\bigg(\frac{\eta_i^{k}}{2}{\Vert f_i^{'}(\boldsymbol{\tilde{w}}_i^{k-1})-(\rho+1/\eta_i^{k}) \boldsymbol{\xi}_i^{k}  \Vert}^2   -\big (\rho+1/\eta_i^{k}\big)\big\langle  \boldsymbol{\xi}_i^{k}, \boldsymbol{w}_i - \boldsymbol{\tilde{w}}_i^{k-1} \big\rangle  + \frac{\rho}{2}\big({\Vert \boldsymbol{w}_i - \boldsymbol{w}^{k-1} \Vert}^2- {\Vert \boldsymbol{w}_i - \boldsymbol{w}^{k} \Vert}^2  \big)   \\ & \quad \quad + \frac{1}{2\eta_i^{k}}\big({\Vert \boldsymbol{w}_i - \boldsymbol{\tilde{w}}_i^{k-1} \Vert}^2  - {\Vert \boldsymbol{w}_i - \boldsymbol{\tilde{w}}_i^{k} \Vert}^2 \big) +  \frac{1}{2\rho}{\Vert \boldsymbol{\gamma}_i-\boldsymbol{\gamma}_i^{k-1} \Vert}^2 - \frac{1}{2\rho}{\Vert \boldsymbol{\gamma}_i-\boldsymbol{\gamma}_{i}^{k} \Vert}^2\bigg). 
		\end{split}
		\end{equation}
	\end{proof}
	
	\section{Proof of Theorem \ref{the:2}}\label{ap:c}
	\begin{proof}
		
		According to the convexity of $f_i(\cdot)$ and the monotonicity of the operator $F(\cdot)$, and applying Lemma \ref{lem:2}, we have:
		\begin{equation}
		\begin{split}
		&  \sum_{i=1}^{n} \bigg(f_i(\boldsymbol{\bar{w}}_i^{t})-f_i(\boldsymbol{w}_i)+  {(\boldsymbol{\bar{u}}_i^{t}-\boldsymbol{u}_i)}^{\intercal} F(\boldsymbol{\bar{u}}_i^{t})\bigg) \\
		= &  \sum_{i=1}^{n}\bigg( f_i(\boldsymbol{\bar{w}}_i^{t})-f_i(\boldsymbol{w}_i)+  \big\langle -\boldsymbol{\bar{\gamma}}_i^{t},\boldsymbol{\bar{w}}_i^{t}-\boldsymbol{w}_i \big\rangle + \big\langle \boldsymbol{\bar{\gamma}}_i^{t}, \boldsymbol{\bar{w}}^{t}-\boldsymbol{w} \big\rangle +\big\langle \boldsymbol{\bar{\gamma}}_i^{t}-\boldsymbol{\gamma}_i, \boldsymbol{\bar{w}}_i^{t}-\boldsymbol{\bar{w}}^{t} \big\rangle \bigg)\\
		\leq & \frac{1}{t} \sum_{k=1}^{t}  \sum_{i=1}^{n} \bigg(f_i(\boldsymbol{\tilde{w}}_i^{k-1})-f_i(\boldsymbol{w}_i)+  {(\boldsymbol{u}_i^{k}-\boldsymbol{u}_i)}^{\intercal} F(\boldsymbol{u}_i^{k})\bigg)\\ 
		= & \frac{1}{t} \sum_{k=1}^{t}  \sum_{i=1}^{n} \bigg(f_i(\boldsymbol{\tilde{w}}_i^{k-1})-f_i(\boldsymbol{w}_i)+  \big\langle -\boldsymbol{\gamma}_{i}^{k},\boldsymbol{\tilde{w}}_i^{k}-\boldsymbol{w}_i \big\rangle   + \big\langle \boldsymbol{\gamma}_{i}^{k}, \boldsymbol{w}^{k}-\boldsymbol{w} \big\rangle +\big\langle \boldsymbol{\gamma}_{i}^{k}-\boldsymbol{\gamma}_i, \boldsymbol{\tilde{w}}_i^{k}-\boldsymbol{w}^{k} \big\rangle\bigg)\\
		\leq &  \sum_{i=1}^{n}\frac{1}{t} \sum_{k=1}^{t} \bigg(\frac{\eta_i^{k}}{2}{\big\Vert f^{'}_i(\boldsymbol{\tilde{w}}_i^{k})-(\rho+1/\eta_i^{k}) \boldsymbol{\xi}_i^{k}  \big\Vert}^2  - \big(\rho+1/\eta_i^{k}\big)\big\langle  \boldsymbol{\xi}_i^{k}, \boldsymbol{w}_i- \boldsymbol{\tilde{w}}_i^{k-1} \big\rangle \bigg) \\& + \frac{1}{t} \sum_{i=1}^{n} \bigg(\frac{1}{2\eta^{t}}{\big\Vert \boldsymbol{w}_i - \boldsymbol{\tilde{w}}_i^{0} \big\Vert}^2 +\frac{\rho}{2}{\Vert \boldsymbol{w}_i- \boldsymbol{w}^{0} \Vert}^2  +\frac{1}{2\rho}{\Vert \boldsymbol{\gamma}_i-\boldsymbol{\gamma}_i^{0} \Vert}^2  \bigg).
		\end{split}
		\end{equation}
		Let $(\boldsymbol{w}_i,\boldsymbol{w})$ be the optimal solution $(\boldsymbol{w}_i^{*},\boldsymbol{w}^{*})$ in the above inequality. We get:
		\begin{equation}
		\begin{split}
		&  \sum_{i=1}^{n}\bigg( f_i(\boldsymbol{\bar{w}}_i^{t})-f_i(\boldsymbol{w}^{*}_i)+  \big\langle -\boldsymbol{\bar{\gamma}}_i^{t},\boldsymbol{\bar{w}}_i^{t}-\boldsymbol{w}_i^{*} \big\rangle + \big\langle \boldsymbol{\bar{\gamma}}_i^{t}, \boldsymbol{\bar{w}}^{t}-\boldsymbol{w}^{*} \big\rangle +\big\langle \boldsymbol{\bar{\gamma}}_i^{t}-\boldsymbol{\gamma}_i, \boldsymbol{\bar{w}}_i^{t}-\boldsymbol{\bar{w}}^{t} \big\rangle \bigg)  \\ 
		\leq &  \sum_{i=1}^{n} \frac{1}{t} \sum_{k=1}^{t} \frac{\eta_i^{k}}{2}{\big\Vert f^{'}_i(\boldsymbol{\tilde{w}}_i^{k-1})-(\rho+1/\eta_i^{k}) \boldsymbol{\xi}_i^{k}  \big\Vert}^2 -   \sum_{i=1}^{n}\frac{1}{t} \sum_{k=1}^{t} \big (\rho+1/\eta_i^{k}\big)  \big\langle\boldsymbol{\xi}_i^{k}, \boldsymbol{w}_i^{*} - \boldsymbol{\tilde{w}}_i^{k-1} \big\rangle \\& +\frac{1}{t}\sum_{i=1}^n \frac{c_w^2}{2\eta_i^t}+ \frac{n}{t}\frac{\rho}{2}c_w^2+ \frac{1}{t}  \sum_{i=1}^{n} \frac{1}{2\rho}{\Vert \boldsymbol{\gamma}_i-\boldsymbol{\gamma}_i^{0} \Vert}^2. 
		\end{split}
		\end{equation}
		The above inequality holds for all $\boldsymbol{\gamma}_i$, thus it also holds for $\boldsymbol{\gamma}_i \in \{ \boldsymbol{\gamma}_i: \Vert \boldsymbol{\gamma}_i \Vert \leq \beta \}$. By letting $ \boldsymbol{\gamma}_i$ be the optimal solution, we have the maximum of the left side of the above inequality:
		\begin{equation}
		\begin{split}
		& \max_{ \{ \boldsymbol{\gamma}_i: \Vert \boldsymbol{\gamma}_i \Vert \leq \beta \} }  \sum_{i=1}^{n}\bigg(f_i(\boldsymbol{\bar{w}}_i^{t})-f_i(\boldsymbol{w}^{*}_i)+  \big\langle -\boldsymbol{\bar{\gamma}}_i^{t},\boldsymbol{\bar{w}}_i^{t}-\boldsymbol{w}_i^{*} \big\rangle  + \big\langle \boldsymbol{\bar{\gamma}}_i^{t}, \boldsymbol{\bar{w}}^{t}-\boldsymbol{w}^{*} \big\rangle +\big\langle \boldsymbol{\bar{\gamma}}_i^{t}-\boldsymbol{\gamma}_i, \boldsymbol{\bar{w}}_i^{t}-\boldsymbol{\bar{w}}^{t} \big\rangle \bigg)  \\ 
		= & \max_{ \{ \boldsymbol{\gamma}_i: \Vert \boldsymbol{\gamma}_i \Vert \leq \beta \} }  \sum_{i=1}^{n}\bigg( f_i(\boldsymbol{\bar{w}}_i^{t})-f_i(\boldsymbol{w}_i) - \boldsymbol{\gamma}_i(\boldsymbol{\bar{w}}_i^{t}- \boldsymbol{\bar{w}}^t)\bigg) \\
		= &\quad  \sum_{i=1}^{n}\bigg( f_i(\boldsymbol{\bar{w}}_i^{t})-f_i(\boldsymbol{w}_i) - \max_{ \{ \boldsymbol{\gamma}_i: \Vert \boldsymbol{\gamma}_i \Vert \leq \beta \} }  \boldsymbol{\gamma}_i(\boldsymbol{\bar{w}}_i^{t}- \boldsymbol{\bar{w}}^t)\bigg) \\
		= & \quad  \sum_{i=1}^{n}\bigg( f_i(\boldsymbol{\bar{w}}_i^{t})-f_i(\boldsymbol{w}_i) + \beta (\Vert \boldsymbol{\bar{w}}_i^{t}- \boldsymbol{\bar{w}}^t\Vert)\bigg). 
		\end{split}
		\end{equation}
		And we also get the maximum of the right side:
		\begin{equation}
		\begin{split}
		&\sum_{i=1}^{n} \frac{1}{t} \sum_{k=1}^{t} \frac{\eta_i^{k}}{2}{\big\Vert f^{'}_i(\boldsymbol{\tilde{w}}_i^{k-1})-(\rho+1/\eta_i^{k}) \boldsymbol{\xi}_i^{k}  \big\Vert}^2 -  \sum_{i=1}^{n}\frac{1}{t} \sum_{k=1}^{t} \big (\rho+1/\eta_i^{k}\big)  \big\langle\boldsymbol{\xi}_i^{k}, \boldsymbol{w}_i^{*} - \boldsymbol{\tilde{w}}_i^{k-1} \big\rangle \\
		& + \frac{1}{t}\sum_{i=1}^n \frac{c_w^2}{2\eta_i^t}+\frac{\rho n}{2 t}c_w^2 + \max_{ \{ \boldsymbol{\gamma}_i: \Vert \boldsymbol{\gamma}_i \Vert \leq \beta \} } \frac{1}{t}  \sum_{i=1}^{n} \frac{1}{2\rho}{\Vert \boldsymbol{\gamma}_i-\boldsymbol{\gamma}_i^{0} \Vert}^2 \\
		= &  \sum_{i=1}^{n} \frac{1}{t} \sum_{k=1}^{t} \frac{\eta_i^{k}}{2}{\big\Vert f^{'}_i(\boldsymbol{\tilde{w}}_i^{k-1})-(\rho+1/\eta_i^{k}) \boldsymbol{\xi}_i^{k}  \big\Vert}^2 -   \sum_{i=1}^{n}\frac{1}{t} \sum_{k=1}^{t} \big (\rho+1/\eta_i^{k}\big)  \big\langle\boldsymbol{\xi}_i^{k}, \boldsymbol{w}_i^{*} - \boldsymbol{\tilde{w}}_i^{k-1} \big\rangle \\ & + \frac{1}{t}\sum_{i=1}^n \frac{c_w^2}{2\eta_i^t}+\frac{\rho n}{2 t}c_w^2+ \frac{n}{t} \frac{\beta^2}{2\rho}. 	        
		\end{split}
		\end{equation}
		Thus, we obtain the inequality:
		\begin{equation}
		\begin{split}
		&  \sum_{i=1}^{n}\bigg( f_i(\boldsymbol{\bar{w}}_i^{t})-f_i(\boldsymbol{w}_i) + \beta \Vert \boldsymbol{\bar{w}}_i^{t}- \boldsymbol{\bar{w}}^t\Vert \bigg) \\
		\leq &   \sum_{i=1}^{n} \frac{1}{t} \sum_{k=1}^{t} \frac{\eta_i^{k}}{2}{\big\Vert f^{'}_i(\boldsymbol{\tilde{w}}_i^{k-1})-(\rho+1/\eta_i^{k}) \boldsymbol{\xi}_i^{k}  \big\Vert}^2 -   \sum_{i=1}^{n}\frac{1}{t} \sum_{k=1}^{t} \big (\rho+1/\eta_i^{k}\big)  \big\langle\boldsymbol{\xi}_i^{k}, \boldsymbol{w}_i^{*} - \boldsymbol{\tilde{w}}_i^{k-1} \big\rangle \\ & + \frac{1}{t}\sum_{i=1}^n \frac{c_w^2}{2\eta_i^t}+\frac{\rho n}{2 t}c_w^2+ \frac{n}{t} \frac{\beta^2}{2\rho}. \label{eq:8}
		\end{split}
		\end{equation}
		Since we assume $\Vert \ell^{'}(\cdot) \Vert \leq c_1$ and $\Vert R^{'}(\cdot) \Vert \leq c_2$, we have $\mathbb{E}\big[\big\Vert f^{'}_i(\boldsymbol{\tilde{w}}_i^{k-1})-(\rho+1/\eta_i^{k}) \boldsymbol{\xi}_i^{k}  \big\Vert^2\big]=(c_1+\lambda c_2/n)^2 + 8dp c_1^2\ln{(1.25/\delta)}/\big(m_i^2\epsilon^2\big)$.
		With $\mathbb{E}\big[\big\langle  \boldsymbol{\xi}_i^{k}, \boldsymbol{w}_i^{*} - \boldsymbol{\tilde{w}}_i^{k-1} \big\rangle \big] = 0$ and  $\eta_i^{k} = c_w\big( 2k(c_1+\lambda c_2/n)^2+16kdpc_1^2\ln{(1.25/\delta)}/\big(m_i^2 \epsilon^2\big)  \big)^{-\frac{1}{2}}$, by taking expectation of the inequality \eqref{eq:8}, we obtain:
		\begin{equation}
		\begin{split}
		&  \mathbb{E}\bigg[ \sum_{i=1}^{n} \big(f_i(\boldsymbol{\bar{w}}_i^{t})-f_i(\boldsymbol{w}_i^*) + \beta \Vert \boldsymbol{\bar{w}}_i^{t}- \boldsymbol{\bar{w}}^t\Vert\big) \bigg] \\
		\leq & \sum_{i=1}^{n}\frac{1}{t}\sum_{k=1}^{t} \mathbb{E}\bigg[ \frac{\eta_i^{k}}{2}{\Vert f^{'}_i(\boldsymbol{\tilde{w}}_i^{k-1})-(\rho+1/\eta_i^{k}) \boldsymbol{\xi}_i^{k}  \Vert}^2\bigg] - \sum_{i=1}^{n}\frac{1}{t} \sum_{k=1}^{t} \big(\rho+1/\eta_i^{k}\big)\mathbb{E}\bigg[ \big\langle \boldsymbol{\xi}_i^{k}, \boldsymbol{w}_i^{*} - \boldsymbol{\tilde{w}}_i^{k-1} \big\rangle\bigg] \\&  +  \frac{1}{t}\sum_{i=1}^n \frac{c_w^2}{2\eta_i^t}+\frac{\rho n}{2 t}c_w^2+\frac{n}{t} \frac{\beta^2}{2\rho},
		\end{split}
		\end{equation}
		which leads to the result in the theorem:
		\begin{equation}
		\begin{split}
		&  \mathbb{E}\bigg[ \sum_{i=1}^{n} \big(f_i(\boldsymbol{\bar{w}}_i^{t})-f_i(\boldsymbol{w}_i^*) + \beta \Vert \boldsymbol{\bar{w}}_i^{t}- \boldsymbol{\bar{w}}^t\Vert\big) \bigg] \\
		= & \sum_{i=1}^n\frac{1}{t}\sum_{k=1}^{t}\frac{c_w}{2\sqrt{2k}}\sqrt{ (c_1+\lambda c_2/n)^2+\frac{8dpc_1^2\ln{(1.25/\delta)}}{m_i^2 \epsilon^2} }\\&+\sum_{i=1}^n \frac{1}{t}\sum_{k=1}^{t}\frac{c_w\sqrt{2t}}{2}\sqrt{ (c_1+\lambda c_2/n)^2+\frac{8dpc_1^2\ln{(1.25/\delta)}}{m_i^2 \epsilon^2} } +\frac{n \rho }{2t}c_w^2 +\frac{n\beta^2}{2\rho t} \\
		=& \sum_{i=1}^n \frac{ c_w}{ 2\sqrt{2} t}\sqrt{ (c_1+\lambda c_2/n)^2+\frac{8dpc_1^2\ln{(1.25/\delta)}}{m_i^2 \epsilon^2} }\bigg(\sum_{k=1}^{t}\frac{1}{\sqrt{k}}+ 2\sqrt{t}\bigg)+ \frac{ n \rho  }{2t}c_w^2+\frac{n\beta^2}{2\rho t}\\
		\leq & \sum_{i=1}^n \frac{ \sqrt{2} c_w }{ \sqrt{t} }\sqrt{ (c_1+\lambda c_2/n)^2+\frac{8dpc_1^2\ln{(1.25/\delta)}}{m_i^2 \epsilon^2} } +\frac{n(\rho c_w^2 + \beta^2/\rho)}{2t}. 
		\end{split}
		\end{equation}
	\end{proof}

	\section{Proof of Lemma \ref{lem:3}}\label{ap:d}
	\begin{proof}
		As we assume that $\ell(\cdot) $ and $R(\cdot)$ are smooth and convex, $\big\Vert \nabla^2 \ell(\cdot) \big\Vert \leq c_3$, and $\big\Vert \nabla^2 R(\cdot) \big\Vert \leq c_4$, thus we have $\big\Vert \nabla^2 f_i(\cdot)  \big\Vert = \big\Vert \nabla^2 \ell(\cdot)+\lambda/n\nabla^2 R(\cdot) \big\Vert \leq c_3+\lambda c_4/n $ is bounded. This leads to:
		\begin{equation}
		\big\Vert \nabla f_i(\boldsymbol{x}) - \nabla f_i(\boldsymbol{y}) \big\Vert \leq (c_3+\lambda c_4/n) \big\Vert \boldsymbol{x} - \boldsymbol{y} \big\Vert. 
		\end{equation}
		Thus, $f_i(\cdot)$ is $(c_3+\lambda c_4/n)$-Lipschitz smooth. According to the property of Lipschitz smooth, we have:
		\begin{equation}
		\begin{split}
		f_i(\boldsymbol{\tilde{w}}_i^{k}) \leq & f_i(\boldsymbol{\tilde{w}}_i^{k-1}) + \big\langle \nabla f_i(\boldsymbol{\tilde{w}}_i^{k-1}), \boldsymbol{\tilde{w}}_i^{k} - \boldsymbol{\tilde{w}}_i^{k-1} \big\rangle + \frac{c_3+\lambda c_4/n}{2}{\Vert \boldsymbol{\tilde{w}}_i^{k} - \boldsymbol{\tilde{w}}_i^{k-1} \Vert}^2 \\
		=& f_i  (\boldsymbol{\tilde{w}}_i^{k-1})+ \big(\rho+1/\eta_i^{k}\big)\big\langle  \boldsymbol{\xi}_i^{k} , \boldsymbol{\tilde{w}}_i^{k} - \boldsymbol{\tilde{w}}_i^{k-1} \big\rangle  +  \frac{c_3+\lambda c_4/n}{2}{\Vert \boldsymbol{\tilde{w}}_i^{k}  - \boldsymbol{\tilde{w}}_i^{k-1} \Vert}^2 \\& +   \big\langle \nabla f_i(\boldsymbol{\tilde{w}}_i^{k-1})-(\rho+1/\eta_i^{k}) \boldsymbol{\xi}_i^{k}, \boldsymbol{\tilde{w}}_i^{k} - \boldsymbol{\tilde{w}}_i^{k-1} \big\rangle. \label{eq:9}
		\end{split}
		\end{equation}
		Due to the convexity of $f_i(\cdot)$, we have:
		\begin{equation}
		f_i(\boldsymbol{\tilde{w}}_i^{k})-f_i(\boldsymbol{w}_i) \leq  \big\langle \nabla f_i(\boldsymbol{\tilde{w}}_i^{k}), \boldsymbol{\tilde{w}}_i^{k} - \boldsymbol{w}_i \big\rangle. \label{eq:10}
		\end{equation}
		According to \eqref{eq:9} and \eqref{eq:10}, we have:
		\begin{equation}
		\begin{split}
		f_i(\boldsymbol{\tilde{w}}_i^{k})-f_i(\boldsymbol{w}_i)+ \big\langle \boldsymbol{\tilde{w}}_i^{k}- \boldsymbol{w}_i, - \boldsymbol{\gamma}_{i}^{k} \big\rangle
		\leq &f_i(\boldsymbol{\tilde{w}}_i^{k-1})-f_i(\boldsymbol{w}_i)+ \big(\rho+1/\eta_i^{k}\big)\big\langle  \boldsymbol{\xi}_i^{k}, \boldsymbol{\tilde{w}}_i^{k} - \boldsymbol{\tilde{w}}_i^{k-1} \big\rangle \\& + \big\langle \nabla f_i(\boldsymbol{\tilde{w}}_i^{k-1})-(\rho+1/\eta_i^{k}) \boldsymbol{\xi}_i^{k}, \boldsymbol{\tilde{w}}_i^{k} - \boldsymbol{\tilde{w}}_i^{k-1} \big\rangle  \\&+ \frac{c_3+\lambda c_4/n}{2}{\Vert \boldsymbol{\tilde{w}}_i^{k} - \boldsymbol{\tilde{w}}_i^{k-1} \Vert}^2 +  \big\langle \boldsymbol{\tilde{w}}_i^{k}- \boldsymbol{w}_i, - \boldsymbol{\gamma}_{i}^{k} \big\rangle,
		\end{split}
		\end{equation}
		which leads to:
		\begin{equation}
		\begin{split}
		f_i(\boldsymbol{\tilde{w}}_i^{k})-f_i(\boldsymbol{w}_i)+ \big\langle \boldsymbol{\tilde{w}}_i^{k}- \boldsymbol{w}_i, - \boldsymbol{\gamma}_{i}^{k} \big\rangle
		\leq&  \big\langle \nabla f_i(\boldsymbol{\tilde{w}}_i^{k-1})-\boldsymbol{\gamma}_i^{k}-(\rho+1/\eta_i^{k}) \boldsymbol{\xi}_i^{k}  +  \rho( \boldsymbol{\tilde{w}}_i^{k} - \boldsymbol{w}^{k-1}) , \boldsymbol{\tilde{w}}_i^{k} - \boldsymbol{w}_i \big\rangle \\ &+ \big(\rho+1/\eta_i^{k}\big)\big\langle  \boldsymbol{\xi}_i^{k}, \boldsymbol{\tilde{w}}_i^{k} - \boldsymbol{\tilde{w}}_i^{k-1} \big\rangle+ \frac{c_3+\lambda c_4/n}{2}{\Vert \boldsymbol{\tilde{w}}_i^{k} - \boldsymbol{\tilde{w}}_i^{k-1} \Vert}^2 \\ & +\big\langle \boldsymbol{\tilde{w}}_i^{k} - \boldsymbol{w}_i, \rho(\boldsymbol{w}^{k-1} - \boldsymbol{w}^{k}) \big\rangle - \big(\rho+1/\eta_i^{k}\big) \big\langle  \boldsymbol{\xi}_i^{k},  \boldsymbol{w}_i - \boldsymbol{\tilde{w}}_i^{k-1} \big\rangle   . \label{eq:11}
		\end{split}
		\end{equation}
		Based on Young's inequality,
		\begin{equation}
		\begin{split}
		\big\langle(\rho+1/\eta_i^{k}) \boldsymbol{\xi}_i^{k}, \boldsymbol{\tilde{w}}_i^{k} - \boldsymbol{\tilde{w}}_i^{k-1} \big\rangle 
		\leq  \frac{1}{2(1/\eta_i^{k}-(c_3+\lambda c_4/n))}{\big\Vert (\rho+1/\eta_i^{k}) \boldsymbol{\xi}_i^{k}  \big\Vert}^2 + \frac{1/\eta_i^{k}-(c_3+\lambda c_4/n)}{2}{\big\Vert \boldsymbol{\tilde{w}}_i^{k} - \boldsymbol{\tilde{w}}_i^{k-1} \big\Vert}^2. \label{eq:12}
		\end{split}
		\end{equation}
		Combining \eqref{eq:1}, \eqref{eq:2}, \eqref{eq:11} and \eqref{eq:12}, we have:
		\begin{equation}
		\begin{split}
		f_i(\boldsymbol{\tilde{w}}_i^{k})-f_i(\boldsymbol{w}_i)+ \big\langle \boldsymbol{\tilde{w}}_i^{k}- \boldsymbol{w}_i, - \boldsymbol{\gamma}_{i}^{k} \big\rangle 
		\leq &\frac{(\rho+1/\eta_i^{k}) ^2}{2(1/\eta_i^{k}-(c_3+\lambda c_4/n))}{\big\Vert \boldsymbol{\xi}_i^{k}  \big\Vert}^2  -\big(\rho+1/\eta_i^{k}\big)\big\langle  \boldsymbol{\xi}_i^{k}, \boldsymbol{w}_i - \boldsymbol{\tilde{w}}_i^{k-1} \big\rangle \\
		&+ \frac{1}{2\eta_i^{k}}({\Vert \boldsymbol{w}_i - \boldsymbol{\tilde{w}}_i^{k-1} \Vert}^2 - {\Vert \boldsymbol{w}_i - \boldsymbol{\tilde{w}}_i^{k} \Vert}^2)\\& + \frac{\rho}{2}({\Vert \boldsymbol{w}_i - \boldsymbol{w}^{k-1} \Vert}^2 - {\Vert \boldsymbol{w}_i - \boldsymbol{w}^{k} \Vert}^2)+ \frac{1}{2\rho}{\Vert \boldsymbol{\gamma}_{i}^{k}-\boldsymbol{\gamma}_i^{k-1} \Vert}^2. \label{eq:13}
		\end{split}
		\end{equation}
		Combining \eqref{eq:13}, \eqref{eq:6} and \eqref{eq:7}, we get the result as desired:
		\begin{equation}
		\begin{split}
		&\sum_{i=1}^{n}  \bigg(f_i(\boldsymbol{\tilde{w}}_i^{k})-f_i(\boldsymbol{w}_i)+ {(\boldsymbol{u}_i^{k}-\boldsymbol{u}_i)}^{\intercal} F(\boldsymbol{u}_i^{k})\bigg)\\
		=&\frac{1}{n} \sum_{i=1}^n \bigg( f_i(\boldsymbol{\tilde{w}}_i^{k})-f_i(\boldsymbol{w}_i)+  \big\langle -\boldsymbol{\gamma}_{i}^{k},\boldsymbol{\tilde{w}}_i^{k}-\boldsymbol{w}_i \big\rangle + \big\langle \boldsymbol{\gamma}_{i}^{k}, \boldsymbol{w}^{k}-\boldsymbol{w} \big\rangle +\big\langle \boldsymbol{\gamma}_{i}^{k}-\boldsymbol{\gamma}_i, \boldsymbol{\tilde{w}}_i^{k}-\boldsymbol{w}^{k} \big\rangle \bigg)\\
		\leq &\sum_{i=1}^{n}\bigg( \frac{\big(\rho+1/\eta_i^{k}\big)^2 }{2(1/\eta_i^{k}-(c_3+\lambda c_4/n))}{\big\Vert \boldsymbol{\xi}_i^{k} \big \Vert}^2 - \big(\rho+1/\eta_i^{k}\big)\big\langle  \boldsymbol{\xi}_i^{k}, \boldsymbol{w}_i - \boldsymbol{\tilde{w}}_i^{k-1} \big\rangle +  \frac{1}{2\eta_i^{k}}({\Vert \boldsymbol{w}_i - \boldsymbol{\tilde{w}}_i^{k-1} \Vert}^2 - {\Vert \boldsymbol{w}_i - \boldsymbol{\tilde{w}}_i^{k} \Vert}^2) \\ & \quad \quad +\frac{\rho}{2}({\Vert \boldsymbol{w}_i - \boldsymbol{w}^{k-1} \Vert}^2 - {\Vert \boldsymbol{w}_i - \boldsymbol{w}^{k} \Vert}^2)  + \frac{1}{2\rho}({\Vert \boldsymbol{\gamma}_i-\boldsymbol{\gamma}_i^{k-1} \Vert}^2 - {\Vert \boldsymbol{\gamma}_i-\boldsymbol{\gamma}_{i}^{k} \Vert}^2)\bigg). 
		\end{split}
		\end{equation}
	\end{proof}

	\section{Proof of Theorem \ref{the:3}}\label{ap:e}
	\begin{proof}
		
		According to the convexity of $f_i(\cdot)$ and the monotonicity of $F(\cdot)$, and applying Lemma \ref{lem:3}, we have:
		\begin{equation}
		\begin{split}
		&  \sum_{i=1}^{n} \bigg(f_i(\boldsymbol{\bar{w}}_i^{t})-f_i(\boldsymbol{w}_i)+  {(\boldsymbol{\bar{u}}_i^{t}-\boldsymbol{u}_i)}^{\intercal} F(\boldsymbol{\bar{u}}_i^{t})\bigg) \\
		=  & \sum_{i=1}^{n}\bigg(f_i(\boldsymbol{\bar{w}}_i^{t})-f_i(\boldsymbol{w}_i)+  \big\langle -\boldsymbol{\bar{\gamma}}_i^{t},\boldsymbol{\bar{w}}_i^{t}-\boldsymbol{w}_i \big\rangle+ \big\langle \boldsymbol{\bar{\gamma}}_i^{t}, \boldsymbol{\bar{w}}^{t}-\boldsymbol{w} \big\rangle +\big\langle \boldsymbol{\bar{\gamma}}_i^{t}-\boldsymbol{\gamma}_i, \boldsymbol{\bar{w}}_i^{t}-\boldsymbol{\bar{w}}^{t} \big\rangle\bigg)\\
		\leq & \frac{1}{t} \sum_{k=1}^{t}  \sum_{i=1}^{n} \bigg( f_i(\boldsymbol{\tilde{w}}_i^{k})-f_i(\boldsymbol{w}_i)+   {(\boldsymbol{u}_i^{k}-\boldsymbol{u}_i)}^{\intercal} F(\boldsymbol{u}_i^{k})\bigg) \\
		= & \frac{1}{t} \sum_{k=1}^{t}  \sum_{i=1}^{n} \bigg(f_i(\boldsymbol{\tilde{w}}_i^{k})-f_i(\boldsymbol{w}_i)+  \big\langle -\boldsymbol{\gamma}_{i}^{k},\boldsymbol{\tilde{w}}_i^{k}-\boldsymbol{w}_i \big\rangle   + \big\langle \boldsymbol{\gamma}_{i}^{k}, \boldsymbol{w}^{k}-\boldsymbol{w} \big\rangle +\big\langle \boldsymbol{\gamma}_{i}^{k}-\boldsymbol{\gamma}_i, \boldsymbol{\tilde{w}}_i^{k}-\boldsymbol{w}^{k} \big\rangle\bigg) \\ 
		\leq & \sum_{i=1}^{n}\frac{1}{t} \sum_{k=1}^{t} \bigg(\frac{\big(\rho+1/\eta_i^{k}\big)^2}{2(1/\eta_i^{k}-(c_3+\lambda c_4/n))} {\big\Vert  \boldsymbol{\xi}_i^{k}  \big\Vert}^2-\big (\rho+1/\eta_i^{k}\big)  \big\langle\boldsymbol{\xi}_i^{k}, \boldsymbol{w}_i - \boldsymbol{\tilde{w}}_i^{k-1} \big\rangle \bigg)\\& + \frac{1}{t} \sum_{i=1}^{n} \bigg(\frac{1}{2\eta_i^{t}}{\Vert \boldsymbol{w}_i - \boldsymbol{\tilde{w}}_i^{0} \Vert}^2+\frac{\rho}{2}{\Vert \boldsymbol{w}_i - \boldsymbol{w}^{0} \Vert}^2 +\frac{1}{2\rho}{\Vert \boldsymbol{\gamma}_i-\boldsymbol{\gamma}_i^{0} \Vert}^2 \bigg ).
		\end{split}
		\end{equation}
		By letting $(\boldsymbol{w}_i,\boldsymbol{w})$ be the optimal solution $(\boldsymbol{w}_i^{*},\boldsymbol{w}^{*})$, we have:
		\begin{equation}
		\begin{split}
		&  \sum_{i=1}^{n}\bigg(f_i(\boldsymbol{\bar{w}}_i^{t})-f_i(\boldsymbol{w}^{*}_i)+  \big\langle -\boldsymbol{\bar{\gamma}}_i^{t},\boldsymbol{\bar{w}}_i^{t}-\boldsymbol{w}_i^{*} \big\rangle + \big\langle \boldsymbol{\bar{\gamma}}_i^{t}, \boldsymbol{\bar{w}}^{t}-\boldsymbol{w}^{*} \big\rangle +\big\langle \boldsymbol{\bar{\gamma}}_i^{t}-\boldsymbol{\gamma}_i, \boldsymbol{\bar{w}}_i^{t}-\boldsymbol{\bar{w}}^{t} \big\rangle\bigg)  \\ 
		= &  \sum_{i=1}^{n}\frac{1}{t} \sum_{k=1}^{t} \bigg(\frac{\big(\rho+1/\eta_i^{k}\big)^2}{2(1/\eta_i^{k}-(c_3+\lambda c_4/n))} {\big\Vert  \boldsymbol{\xi}_i^{k}  \big\Vert}^2  -\big (\rho+1/\eta_i^{k}\big)  \big\langle\boldsymbol{\xi}_i^{k}, \boldsymbol{w}_i^{*} - \boldsymbol{\tilde{w}}_i^{k-1} \big\rangle \bigg) \\&+\frac{1}{t}\sum_{i=1}^n \frac{c_w^2}{2\eta_i^t}+\frac{\rho n}{2 t}c_w^2+ \frac{1}{t} \sum_{i=1}^{n} \frac{1}{2\rho}{\Vert \boldsymbol{\gamma}_i-\boldsymbol{\gamma}_i^{0} \Vert}^2 . 
		\end{split}
		\end{equation}
		The above inequality holds for all $\boldsymbol{\gamma}_i$, thus it also holds for $\boldsymbol{\gamma}_i \in \{ \boldsymbol{\gamma}_i: \Vert \boldsymbol{\gamma}_i \Vert \leq \beta \}$. By letting $\boldsymbol{\gamma}_i$ be the optimum, we have
		\begin{equation}
		\begin{split}
		& \max_{ \{ \boldsymbol{\gamma}_i: \Vert \boldsymbol{\gamma}_i \Vert \leq \beta \} }  \sum_{i=1}^{n}\bigg(f_i(\boldsymbol{\bar{w}}_i^{t})-f_i(\boldsymbol{w}^{*}_i)+  \big\langle -\boldsymbol{\bar{\gamma}}_i^{t},\boldsymbol{\bar{w}}_i^{t}-\boldsymbol{w}_i^{*} \big\rangle + \big\langle \boldsymbol{\bar{\gamma}}_i^{t}, \boldsymbol{\bar{w}}^{t}-\boldsymbol{w}^{*} \big\rangle +\big\langle \boldsymbol{\bar{\gamma}}_i^{t}-\boldsymbol{\gamma}_i, \boldsymbol{\bar{w}}_i^{t}-\boldsymbol{\bar{w}}^{t} \big\rangle\bigg)  \\ 
		= & \max_{ \{ \boldsymbol{\gamma}_i: \Vert \boldsymbol{\gamma}_i \Vert \leq \beta \} }  \sum_{i=1}^{n}\bigg( f_i(\boldsymbol{\bar{w}}_i^{t})-f_i(\boldsymbol{w}_i) - \boldsymbol{\gamma}_i(\boldsymbol{\bar{w}}_i^{t}- \boldsymbol{\bar{w}}^t)\bigg) \\
		= & \quad \quad \sum_{i=1}^{n}\bigg( f_i(\boldsymbol{\bar{w}}_i^{t})-f_i(\boldsymbol{w}_i) + \beta\Vert \boldsymbol{\bar{w}}_i^{t}- \boldsymbol{\bar{w}}^t\Vert \bigg). \label{eq:14}
		\end{split}
		\end{equation}
		Since we have $\mathbb{E}\big[\big\langle\boldsymbol{\xi}_i^{k}, \boldsymbol{w}_i^{*} - \boldsymbol{\tilde{w}}_i^{k-1} \big\rangle\big] = 0$ and $\mathbb{E}\big[{\big\Vert  \boldsymbol{\xi}_i^{k}  \big\Vert}^2\big] = dp\sigma_{i,k}^{2}  = 8dp\ln(1.25/\delta) c_1^2/\big(m_i^2 \epsilon^2{(\rho+ 1/\eta_i^{k})}^2\big)$ due to the variance definition, we take the expectation of the \eqref{eq:14} and let $\eta_i^{k} = {\big(c_3+\lambda c_4/n+2c_1\sqrt{4dpk\ln(1.25/\delta)}/\big(\epsilon m_i c_w\big)\big)}^{-1}$, which leads to the result:
		\begin{equation}
		\begin{split}
		&  \mathbb{E}\bigg[ \sum_{i=1}^{n}\big( f_i(\boldsymbol{\bar{w}}_i^{t})-f_i(\boldsymbol{w}_i^*) + \beta \big\Vert \boldsymbol{\bar{w}}_i^{t}- \boldsymbol{\bar{w}}^t\big\Vert\big)\bigg] \\
		\leq & \mathbb{E}\bigg[ \sum_{i=1}^{n}\frac{1}{t} \sum_{k=1}^{t} \frac{\big(\rho+1/\eta_i^{k}\big)^2}{2(1/\eta_i^{k}-(c_3+\lambda c_4/n))} {\big\Vert  \boldsymbol{\xi}_i^{k}  \big\Vert}^2\bigg]  - \sum_{i=1}^{n}\frac{1}{t} \sum_{k=1}^{t}\big(\rho+1/\eta_i^{k}\big)\mathbb{E}\bigg[\big\langle  \boldsymbol{\xi}_i^{k}, \boldsymbol{w}_i^{*} - \boldsymbol{\tilde{w}}_i^{k-1} \big\rangle\bigg] \\& +  \frac{1}{t}\sum_{i=1}^n \frac{c_w^2}{2\eta_i^t}+\frac{\rho n}{2 t}c_w^2+ \max_{ \{ \boldsymbol{\gamma}_i: \Vert \boldsymbol{\gamma}_i \Vert \leq \beta \} } \frac{1}{t}  \sum_{i=1}^{n} \frac{1}{2\rho}{\Vert \boldsymbol{\gamma}_i-\boldsymbol{\gamma}_i^{0} \Vert}^2 \\
		=& \sum_{i=1}^{n} \frac{  c_wc_1 \sqrt{dp\ln(1.25/\delta)} }{ m_i\epsilon t}\bigg(\sum_{k=1}^{t}\frac{1}{\sqrt{k}}+2\sqrt{t}\bigg)  +\frac{n c_w^2 (c_3+\lambda c_4/n)}{2t} + \frac{\rho n }{2t}c_w^2 +\frac{n}{t} \frac{\beta^2}{2\rho}\\
		\leq & \sum_{i=1}^{n} \frac{ 4  c_w  c_1 \sqrt{dp\ln(1.25/\delta)} }{m_i \epsilon \sqrt{t} }+\frac{n c_w^2(c_3+ \lambda c_4/n)}{2}+\frac{n c_w^2\rho+n \beta^2/\rho}{2}. 
		\end{split}
		\end{equation}
	\end{proof}
	

\end{document}